\icmltitlerunning{From Adaptive Query Release to Machine Unlearning}
\renewcommand{\tilde}{\widetilde}
\newcommand{\remove}[1]{}
\newcommand{\removeForShortVersion}[1]{{#1}}
\begin{document}

\twocolumn[
\icmltitle{From Adaptive Query Release to Machine Unlearning}

% It is OKAY to include author information, even for blind
% submissions: the style file will automatically remove it for you
% unless you've provided the [accepted] option to the icml2023
% package.

% List of affiliations: The first argument should be a (short)
% identifier you will use later to specify author affiliations
% Academic affiliations should list Department, University, City, Region, Country
% Industry affiliations should list Company, City, Region, Country

% You can specify symbols, otherwise they are numbered in order.
% Ideally, you should not use this facility. Affiliations will be numbered
% in order of appearance and this is the preferred way.
% \icmlsetsymbol{equal}{*}

\begin{icmlauthorlist}
\icmlauthor{Enayat Ullah}{yyy}
\icmlauthor{Raman Arora}{yyy}

%\icmlauthor{}{sch}
%\icmlauthor{}{sch}
\end{icmlauthorlist}

\icmlaffiliation{yyy}{Department of Computer Science, The Johns Hopkins University, USA}

\icmlcorrespondingauthor{Enayat Ullah}{enayat@jhu.edu}

% You may provide any keywords that you
% find helpful for describing your paper; these are used to populate
% the "keywords" metadata in the PDF but will not be shown in the document
\icmlkeywords{Machine Learning, ICML}

\vskip 0.3in
]

% this must go after the closing bracket ] following \twocolumn[ ...

% This command actually creates the footnote in the first column
% listing the affiliations and the copyright notice.
% The command takes one argument, which is text to display at the start of the footnote.
% The \icmlEqualContribution command is standard text for equal contribution.
% Remove it (just {}) if you do not need this facility.

\printAffiliationsAndNotice{}  % leave blank if no need to mention equal contribution
% \printAffiliationsAndNotice{} % otherwise use the standard text.
\newif\ifarxiv
\arxivfalse

\begin{abstract}

We formalize the problem of machine unlearning as design of efficient unlearning algorithms corresponding to learning algorithms which perform a selection of adaptive queries from structured query classes. We give efficient unlearning algorithms for linear and prefix-sum query classes. As applications, we show that unlearning in many problems, in particular, stochastic convex optimization (SCO),  can be reduced to the above, yielding improved guarantees for the problem. In particular, for smooth Lipschitz losses and any $\rho>0$, our results yield an unlearning algorithm with excess population risk of $\tilde O\big(\frac{1}{\sqrt{n}}+\frac{\sqrt{d}}{n\rho}\big)$ with unlearning query (gradient) complexity $\tilde O(\rho \cdot \text{Retraining Complexity})$, where $d$ is the model dimensionality and $n$ is the initial number of samples. For non-smooth Lipschitz losses, we give an unlearning algorithm with excess population risk $\tilde O\big(\frac{1}{\sqrt{n}}+\big(\frac{\sqrt{d}}{n\rho}\big)^{1/2}\big)$ with the same unlearning query (gradient) complexity. Furthermore, in the special case of Generalized Linear Models (GLMs), such as those in linear and logistic regression, we get dimension-independent rates of $\tilde O\big(\frac{1}{\sqrt{n}} +\frac{1}{(n\rho)^{2/3}}\big)$ and $\tilde O\big(\frac{1}{\sqrt{n}} +\frac{1}{(n\rho)^{1/3}}\big)$ for smooth Lipschitz and non-smooth Lipschitz losses respectively. Finally, we give generalizations of the above from one unlearning request to \textit{dynamic} streams consisting of insertions and deletions.

\end{abstract}

\section{Introduction}
The problem of machine unlearning is concerned with updating trained machine learning models upon request of deletions to the training dataset.
This problem has recently gained attention owing to various data privacy laws such as General Data Protection Regulation (GDPR), California Consumer~Act (CCA) among others, which empower  users to make such requests to the entity possessing user~data.
The entity is then required to update the state of the system such that it is \textit{indistinguishable} to~the state had the user data been absent to begin with.
While as of now, there is no universally accepted definition of \textit{indistinguishibility} as the unlearning criterion, in this work, we consider the most strict definition, called \textbf{exact unlearning} (see Definition~\ref{defn:exact-unlearning}).

\ifarxiv
\paragraph{Motivating~Example:}
\else
\textbf{Motivating~Example:}
\fi
The main objective of our~work~is~to identify algorithmic design principles for unlearning such that it is more \textit{efficient} than retraining, the naive baseline method.
Towards this,
we first discuss the example of unlearning for Gradient Descent (GD) method, which will highlight the key challenges as well as foreshadow the formal setup and techniques. GD and its variants are extremely popular optimization methods with numerous applications in machine learning and beyond.  In a machine learning context, it is typically used to minimize the training loss, $\hat L(w;S) = \frac{1}{n}\sum_{i=1}^n\ell(w;z_i)$ where $S=\bc{z_i}_{i=1}^n$ is the training dataset and $w$, the model. Starting from an initial model $w_1$, in each iteration, the model is updated as: 
\begin{align*}
    w_{t+1} = w_t - \eta \nabla \hat L(w_t;S) = w_t - \eta \br{\frac{1}{n}\sum_{i=1}^n\nabla \ell(w_t;z_i)},
\end{align*}
where $\eta$ is the learning rate.
After training, a data-point, say $z_n$ without loss of generality, is requested to be unlearnt and so the updated training set is $S' = \bc{z_i}_{i=1}^{n-1}$. 
We now need to apply an \textit{efficient} unlearning algorithm such that its output is equal to that of running GD on $S'$.
Observe that the first iteration of GD is \textit{simple} enough to be unlearnt efficiently by computing the \textit{new} gradient $\nabla \hat L(w_1;S') = \frac{1}{n-1}\big(n \nabla \hat L(w_1;S) - \nabla \ell(w_1;z_n)\big)$ and updating as $w_2' = w_1- \eta \nabla \hat L(w_1;S')$.
However, in the second iteration (and onwards), the gradient is computed on $w_2'$ which can be different from $w_2$ and the above \textit{adjustment} can no longer be applied and one may need to retrain from here onwards.
This captures a key challenge for unlearning in problems solved by \textit{simple} \textit{iterative procedures} such as GD --  \textbf{adaptivity} -- 
that is, the gradients (or more generally, the \textit{queries}) computed in later iteration depend on the result of the previous iterations. We systematically formalize such procedures and design efficient unlearning algorithms for~them.

\subsection{Our Results and Techniques}

\ifarxiv
\paragraph{Learning/Unlearning as Query Release:}
\else
\textbf{Learning/Unlearning as Query Release:} 
\fi
Iterative procedures are an integral constituent of the algorithmic toolkit for solving machine learning problems and beyond.
As in the case of GD above, these
often consist of a sequence of \textit{simple} but \textit{adaptive} computations. The simple computations are often efficiently undo-able (as in the first iteration of GD) but its \textit{adaptive} nature -- change of result of one iteration changing the trajectory of the algorithm -- makes it difficult to undo computation, or unlearn,  % /unlearn 
efficiently.

As opposed to designing unlearning (and learning) algorithms for specific (machine learning) problems, we study the design of unlearning algorithms corresponding to (a class of) learning algorithms.
We formalize this by considering learning algorithms 
which 
perform \textit{adaptive query release} on datasets.
Specifically, this
consists of a selection of adaptive queries from 
structured classes like linear and prefix-sum queries (see Section \ref{sec:unlearning-query-release} for details). 
The above example of GD is an instance of linear query, since the query, which is the average gradient $\frac{1}{n}\sum_{i=1}^n\nabla \ell(w_t;z_i)$, is a sum of functions of data-points. 
With this view, we study how to design \textit{efficient}
unlearning algorithms for such methods.

We use efficiency
in the sense of number of queries made (query complexity), ignoring the use of other resources, e.g., space, computation for selection of queries, etc. To elaborate on why this is interesting, firstly note that this does not make the problem trivial, in the sense that even with unlimited access to other resources, it is still challenging do design an unlearning algorithm with query complexity smaller than that of retraining (the naive baseline).
Secondly, let us revisit the motivation from solving optimization problems.
The standard model to measure computation in optimization is the number of gradient queries a method makes for a target accuracy, often abstracted in an  oracle-based setup \citep{nemirovskij1983problem}. Importantly, this setup imposes no constraints on other resources, yet it  witnesses the optimality of well-known simple procedures like (variants of) GD.
We follow this paradigm, and 
as applications of our results to Stochastic Convex Optimization (SCO),
we make progress on the fundamental question of understanding the gradient complexity of unlearning in SCO.
Interestingly, our proposed unlearning procedures are simple enough that the improvement over retraining in terms of query complexity also applies 
even with 
accounting for the (arithmetic) complexity of all other operations in the learning and unlearning methods.

\ifarxiv
\paragraph{Linear queries:}
\else
\textbf{Linear queries:}
\fi
The simplest query class we consider is that of linear queries (details deferred to
Appendix \ref{sec:linear-queries}). 
Herein, we show that the prior work of 
\ifarxiv
\cite{ullah2021machine},
\else
\citet{ullah2021machine},
\fi
which focused on unlearning in SCO and was limited to the stochastic gradient method, can be easily extended to general linear queries. This observation yields unlearning algorithms for algorithms for \textit{Federated Optimization/Learning} and \textit{$k$-means clustering}. 
Herein, we give a $\rho$-TV stable (see Definition \ref{defn:tv-stability}) learning procedure with $T$ adaptive queries and a corresponding unlearning procedure with a $O(\sqrt{T}\rho)$
relative unlearning complexity (the ratio of unlearning and retraining complexity; see Definition \ref{defn:auc}).

\ifarxiv
\paragraph{Prefix-sum queries:}
\else
\textbf{Prefix-sum queries:}
\fi
Our main contribution is the case when we consider the class of prefix-sum queries. These are a sub-class of interval queries which have been extensively studied in differential privacy and are classically solved by the binary tree mechanism \citep{dwork2010differential}. 
We note in passing that for differential privacy, the purpose of the tree is to enable a tight privacy accounting and no explicit tree may be maintained.
In contrast,  for unlearning, we show that maintaining the binary tree data structure aids for efficient unlearning.
We give a binary-tree based $\rho$-TV stable learning procedure and a corresponding unlearning procedure with a $\tilde O(\rho)$ relative unlearning complexity.

\ifarxiv
\paragraph{Unlearning in Stochastic Convex Optimization (SCO):}
\else
\textbf{Unlearning in Stochastic Convex Optimization (SCO):}
\fi
Our primary motivation for considering prefix-sum queries is its application to unlearning in SCO (see Section \ref{sec:prelims} for preliminaries).

\ifarxiv
\paragraph{1) Smooth SCO:}
\else
\textbf{1) Smooth SCO}:
\fi
The problem of unlearning in smooth SCO was studied in
\ifarxiv
\cite{ullah2021machine}
\else
\citet{ullah2021machine}
\fi
which proposed algorithms with excess population risk of $\textstyle \tilde O\br{\frac{1}{\sqrt{n}}+ \br{\frac{\sqrt{d}}{n\rho}}^{2/3}}$ where $\rho$ is the relative unlearning complexity.
We show that using a variant of variance-reduced Frank-Wolfe \citep{zhang2020one}, which uses prefix-sum queries, yields an improved excess population risk of $O\br{\frac{1}{\sqrt{n}}+ \frac{\sqrt{d}}{n\rho}}$. This corresponds to $\tilde O(\rho n)$ expected gradient computations upon unlearning.

\ifarxiv
\paragraph{2) Non-smooth SCO:}\!
\else
\textbf{2) Non-smooth SCO:}
\fi
In the non-smooth setting, which was not covered in the prior works, 
we give an algorithm based on Dual Averaging \citep{nesterov2009primal}, which again uses prefix-sum query access, and thus fits into the framework. This algorithm gives us
an excess population risk of $O\br{\frac{1}{\sqrt{n}}+ \frac{d^{1/4}}{\sqrt{n\rho}}}$ with $\tilde O(\rho n)$ expected gradient complexity of unlearning.

\ifarxiv
\paragraph{3) Generalized Linear Models (GLM):}\!
\else
\textbf{3) Generalized Linear Models (GLM):}
\fi
GLMs are one of most basic machine learning problems which include the squared loss (in linear regression), logistic loss (in logistic regression), hinge loss (support vector machines),  etc. We study unlearning in two classes of GLMs (see below), for which we combine recently proposed techniques based on dimensionality reduction \citep{arora2022differentially} with the above prefix-sum query algorithms to get the following \textit{dimension-independent rates}.

\ifarxiv
\paragraph{3(a)  Smooth GLM:}\!
\else
\textbf{3(a)  Smooth GLM:}
\fi
For the smooth convex GLM setting, we combine Johnson-Lindenstrauss transform with variance reduced Frank-Wolfe to get $O\Big(\frac{1}{\sqrt{n}} + \frac{1}{\br{n\rho}^{2/3}}\Big)$ excess population risk.
Note that we get no overhead in statistical rate even with very small relative unlearning complexity, $\rho \approx {n}^{-1/4}$.
This class of smooth GLMs contains the well-studied problem of logistic regression. Hence, our result demonstrates that it is possible to unlearn logistic regression with \textit{sub-linear}, specifically $O(n^{3/4})$, unlearning complexity with no sacrifice in the statistical rate.

\ifarxiv
\paragraph{3(b)  Lipschitz GLM:}\!
\else
\textbf{3(b)  Lipschitz GLM:}
\fi
Similarly, for the Lipschitz convex GLM setting, we combine Johnson-Lindenstrauss transform with Dual Averaging yielding a rate of $\tilde O\br{\frac{1}{\sqrt{n}} + \frac{1}{\br{n\rho}^{1/3}}}$. 

Please see Table \ref{tab:my_label} for a summary of above results.

\ifarxiv
\paragraph{SCO in dynamic streams:}\!
\else
\textbf{SCO in dynamic streams:}
\fi
Finally, we consider SCO in dynamic streams
where we observe a sequence of insertions and deletions and are supposed to produce outputs after each time-point. 
In this case, we present two methods: one which satisfies the exact unlearning guarantee with worse update time, the other which satisfies \textit{weak unlearning} -- which only requires the model (and not metadata) to be 
 indistinguishable (see Definition \ref{defn:weak-exact-learning}) -- with improved update time.
The exact unlearning method is inspired from the 
work of
\ifarxiv
\cite{ullah2021machine}
\else
\citet{ullah2021machine}
\fi
which dealt with insertions similar to deletions.
The weak unlearning method is motivated from the observation that the above may be too pessimistic.
To elaborate, inserting a new data item does not warrant a (unlearning) guarantee that 
the algorithm's state be indistinguishable to the case if the point was not inserted. Hence, insertions should require smaller 
update time which is indeed the case for our proposed methods.

\begin{table}[]
    \centering
    \begin{tabular}{|c|c|c|}
    \hline
    \textbf{Problem} & \textbf{Base algorithm} & \textbf{Rate}\\
    \hline\hline
    {\small Smooth, Lipschitz-SCO} & $\mathsf{VR{\text -}FW}$ & $\frac{1}{\sqrt{n}} + \frac{\sqrt{d}}{n\rho}$  \\
    \hline
    {\small Lipschitz SCO} & $\mathsf{DA}$ & $\frac{1}{\sqrt{n}} + \frac{d^{1/4}}{\sqrt{n\rho}}$ \\
    \hline
    {\small Smooth, Lipschitz GLM} & $\mathsf{JL + VR{\text -}FW}$ & $\frac{1}{\sqrt{n}} + \frac{1}{\br{n\rho}^{2/3}}$ \\
    \hline
     {\small Lipschitz GLM} & $\mathsf{JL + DA}$ & $\frac{1}{\sqrt{n}} + \frac{1}{\br{n\rho}^{1/3}}$\\
     \hline
    \end{tabular}
    \caption{Excess population risk guarantees for various problems as well as the base algorithm; $\rho$: relative unlearning complexity (see Definition \ref{defn:auc}), 
    $\mathsf{VR{\text -}FW}$: Variance-reduced Frank Wolfe, $\mathsf{DA}$: Dual averaging, 
    $\mathsf{JL}$: Johnson-Lindenstrauss transform.
    } 
    \label{tab:my_label}
\end{table}

\subsection{Related work}
Our work is a direct follow up of 
\ifarxiv
\cite{ullah2021machine}
\else
\citet{ullah2021machine}
\fi
which proposed the framework of Total Variation (TV) stability and maximal coupling for the \textbf{exact} machine unlearning problem. They applied this to unlearning in smooth stochastic convex optimization (SCO) and obtained a guarantee of $\textstyle \frac{1}{\sqrt{n}}+\big(\frac{\sqrt{d}}{n\rho}\big)^{\frac23}$ on excess population risk, 
where $n$ is the number of data samples, $d$, model dimensionality and $\rho$ is the relative unlearning complexity (see Definition \ref{defn:auc}).
% {\color{red} 
We improve upon the results in that work in multiple ways as described in the preceding section.%}
Besides this, the exact unlearning problem has been studied for $k$-means clustering \citep{ginart2019making} and random forests~\citep{brophy2021machine}.
The work of
\ifarxiv
\cite{bourtoule2021machine}
\else\citet{bourtoule2021machine}
\fi
proposes a general methodology for exact unlearning for deep learning methods.
Their focus is to devise \textit{practical methods} and they do not provide theoretical guarantees 
on accuracy, even in simple settings.
Finally, there are works which consider unlearning in SCO, however they use an \textit{approximate} notion of unlearning inspired from differential privacy \citep{guo2019certified,neel2021descent,sekhari2021remember,gupta2021adaptive}, and therefore are
incomparable to our work.

\section{Problem Setup and preliminaries}
\label{sec:prelims}
Let $\cZ$ be the data space, $\cW$ be the model space and $\cM$ be the \textit{meta-data} space, where meta-data is additional information a learning algorithm may save to aid unlearning. 
We consider a learning algorithm as 
a map $\mathbf{A}: \cZ^* \rightarrow \cW \times \cM$ and  an unlearning algorithm as a map $\mathbf{U}: \cW \times \cM \times \cZ \rightarrow \cW \times \cM$.
We use $\cA$ and $\cU$ to denote the first output (which belongs to $\cW$) of $\mathbf{A}$ and $\mathbf{U}$ respectively.

We recall the definition of exact unlearning
which requires that the entire state after unlearning be indistinguishable from the state obtained if the learning algorithm were applied to the dataset without the deleted point.
\begin{definition}[Exact unlearning]
\label{defn:exact-unlearning}
A procedure $(\mathbf{A},\mathbf{U})$ satisfies exact unlearning if for all datasets $S$, all $z\in \cZ$,
and for all events $\cE \subseteq \cW \times \cM$, we have, $    \mathbb{P}\br{\mathbf{A}\br{S \backslash \bc{z}}\in \cE} = \mathbb{P}\br{\mathbf{U}\br{\mathbf{A}(S), z} \in \cE}$
\end{definition}

We next define weak unlearning wherein only the model output and not the entire state is required to be indistinguishable.
\begin{definition}[Weak unlearning]
\label{defn:weak-exact-learning}
A procedure $(\mathbf{A},\mathbf{U})$ satisfies weak unlearning if for all 
all datasets $S$, all $z\in \cZ$,
and for all events $\cE \subseteq \cW \times \cM$, we have, $  \mathbb{P}\br{\mathcal{A}\br{S \backslash \bc{z}}\in \cE} = \mathbb{P}\br{\mathcal{U}\br{\mathbf{A}(S),z} \in \cE}$
\end{definition}

\ifarxiv
\paragraph{Unlearning request:}
\else
\textbf{Unlearning request:}
\fi
We consider the setting where we start with a dataset of $n$ samples
and observe \textbf{one} unlearning request.
We assume that the choice of unlearning request is oblivious to the learning process.
In Section \ref{sec:streaming}, we generalize our result to a streaming setting of requests.

\ifarxiv
\paragraph{Total Variation stability, maximal coupling and efficient unlearning:}
\else
\textbf{Total Variation stability, maximal coupling and efficient unlearning:}
\fi
The Total Variation (TV) distance between two probability distributions $P$ and $Q$ is 
$$\mathsf{TV}(P,Q) = \substack{\sup \\ \text{measurable} \ \cE}
\abs{P(\cE)- Q(\cE)}.$$
Next, we define Total Variation (TV) stability
to motivate algorithmic techniques for efficient unlearning.

\begin{definition}
\label{defn:tv-stability}
An algorithm $\cA$ is said to be $\rho$ Total Variation (TV) stable if for all datasets $S$ and $S'$ differing in one point, i.e. $\abs{S\Delta S'}=1$, the total variation distance,  $\mathsf{TV}\br{\cA(S),\cA(S')}\leq \rho$
\end{definition}

Given two distributions $P$ and $Q$, a \textbf{coupling} is a joint distribution $\pi$ with marginals $P$ and $Q$. Furthermore, a \textbf{maximal coupling} is a coupling $\pi$ such that the disagreement probability $\mathbb{P}_{(x,y)\sim \pi} \bc{x\neq y}= \mathsf{TV}(P,Q)$. 
In the unlearning context, $P = \cA(S)$, the output on initial dataset, and $Q = \cA(S')$, the output on the updated dataset.  
Hence, the unlearning problem simply becomes about transporting $P$ to $Q$ with small \textit{computational cost}, akin to optimal transport~\citep{villani2009optimal}.
Furthermore, observe that when sampled from a maximal coupling between $P$ and $Q$, by definition, we get the \textbf{same sample} for both $P$ and $Q$, expect with probability $\rho$, and yet satisfying the exact unlearning criterion.
The main idea is that for certain learning algorithms of interest, during unlearning, we can \textbf{efficiently} construct a (near) maximal coupling of $P$ and $Q$, and so the same model output from $P$ suffices for $Q$, most of the times. In particular, the fraction of times that we need change the model is (roughly) the TV-stability parameter $\rho$ of the learning algorithm.
The goal, therefore, is to design an (accurate) TV-stable learning algorithm and a corresponding efficient coupling-based unlearning algorithm. In this work, we use the technique of reflection coupling described below.

\ifarxiv
\paragraph{Reflection Coupling \citep{lindvall1986coupling}:}
\else
\textbf{Reflection Coupling \citep{lindvall1986coupling}:}
\fi
Reflection Coupling is a classical technique in probability to maximally couple symmetric probability distributions.
Consider two probability distributions $P$ and $Q$ with means $u$ and $u'$ and let $r$ be a sample from $P$.
The process involves a rejection sampling step on the two distributions and sample $r$ (see line 13 in in Algorithm \ref{alg:unlearn_partial_query}). If it results in accept, we use the same $r$ as the sample from $Q$, otherwise, we apply the following simple map:
$$\mathsf{Reflect}(u,u',r) = u-u'+r,$$
which gives the sample from $Q$, see line 16 in Algorithm \ref{alg:unlearn_partial_query}.

Our algorithmic techniques borrow tools from differential privacy \citep{dwork2014algorithmic} such as its relationship with Total Variation stability; we describe these in Appendix \ref{app:general}.

\ifarxiv
\paragraph{Stochastic Convex Optimization (SCO):}
\else
\textbf{Stochastic Convex Optimization (SCO):}
\fi
SCO is the dominant framework for computationally-efficient machine learning.
Consider a closed convex (constraint) set $\cW \subset \bbR^d$ and let $D$ denote its diameter.
Let $\ell:\cW\times \cZ \rightarrow \bbR$ be a loss function, which is convex in its first parameter $\forall z \in \cZ$.
Given $n$ i.i.d. points from an unknown probability distribution $\cD$ over $\cZ$, the goal is to devise an algorithm, the output of which has small \textit{population risk}, defined as
$$L(w;\cD) := \underset{z \sim \cD}{\mathbb{E}}\ell(w;z).$$
The \textit{excess population risk} is then $ L(w;\cD) - L(w^*;\cD)$ where $w^*$ denotes a population risk minimizer over $\cW$.

\ifarxiv
\paragraph{Generalized Linear Models (GLM):}
\else
\textbf{Generalized Linear Models (GLM):}
\fi
Generalized Linear Models (GLMs) are
loss functions popularly encountered in supervised learning problems, like linear and logistic regression. Herein, $ \ell(w;(x,y)) = \phi_y\br{\ip{w}{x}}$,
where $\phi_y: \bbR \rightarrow \bbR$ is some \textit{link function}.
We use $\norm{\cX}$ to denote the radius bound on data points, i.e. for
$x \in \cX \subseteq \bbR^d$, $\norm{x}\leq \norm{\cX}$. 
In this case, we consider the unconstrained setup i.e. $\cW=\bbR^d$, as it allows to get dimension-independent rates for GLMs, similar to what happens under differential privacy \citep{jain2014near,arora2022differentially}.

We introduce the Johnson-Lindenstrauss property below which is crucial to our construction.
\begin{definition}[Johnson-Lindenstrauss property]
A random matrix $\Phi \in \bbR^{k\times d}$ satisfies $(\beta, \gamma)$-JL property if for any $u,v\in \bbR^d$, with probability at least $1-\gamma$, $\mathbb{P}\br{\abs{\ip{\Phi u}{\Phi v}\! -\!\ip{u}{v}}\geq \beta \norm{u}\norm{v}}\!\leq\! \gamma$.
\end{definition}
There exists many efficient constructions of such random matrices \citep{nelson2011sketching}.

\section{Unlearning for Adaptive Query Release}
\label{sec:unlearning-query-release}
We now set up the framework of adaptive query release, which is a lens to view (existing) iterative learning procedures; this view is useful in our design of corresponding unlearning algorithms.
Iterative procedures run on datasets consist of a sequence of \textit{interactions} with the dataset; each interaction computes a certain function, or query, on the dataset. The chosen query is typically adaptive, i.e., dependent on the prior query outputs.
We consider iterative learning procedures which are composed of adaptive queries from a specified query class. 
Formally, consider a query class $\cQ \subseteq \cW^{\cW^*\times \cZ^*}$; herein, each query in $\cQ$
is a function of
a sequence of $\bc{w_i}_{i<t}$ (typically, prior query outputs), 
and the dataset $S$, with output in $ \cW$. 
With this view, we give a general template of a learning procedure
as Algorithm \ref{alg:learning_template}, where $\bc{U_t}_t$ and $\cS$ are the update and selector functions internal to the algorithm.

\begin{algorithm}[t]
\caption{Template learning algorithm}
\label{alg:learning_template}
\begin{algorithmic}[1]
\REQUIRE Dataset $S$, steps $T$, 
query functions $\bc{q_t(\cdot)}_{t\leq T}$ where $q_t \in \cQ$, a query class,
update functions $\bc{U_t(\cdot)}_{t\leq T}$, selector function $\cS(\cdot)$
\STATE  Initialize model $w_1 \in \cW$
\FOR{$t=1$ to $T-1$}
\STATE Query dataset $u_t = q_t\br{\bc{w_i}_{i\leq t},S}$
\STATE Update $w_{t+1} = U_t(\bc{w_i}_{i\leq t}, u_t)$
\ENDFOR
\ENSURE{$\hat w = \cS\br{\bc{w_t}_{t\leq T}}$}
\end{algorithmic}
\end{algorithm}

\ifarxiv
\paragraph{Query model:}
\else
\textbf{Query model:}
\fi
We describe the query model which we use to measure computational complexity.
Under the model, 
a query function $q(\bc{w}_{i}, S)$ takes $\abs{S}$ unit computations (or queries, for brevity) for any $q$ and $\bc{w_i}_{i}$.
In our applications to SCO, this will correspond to the gradient oracle complexity.

Our algorithmic approach to unlearning is rooted in the relationship between TV stability and maximal couplings. With this view,
for a specified query class,
we have the following requirements.

\begin{CompactEnumerate}
% \begin{enumerate}
    \item 
    \textbf{TV-stability:} We want a $\rho$-TV stable ``modification"  of the learning Algorithm \ref{alg:learning_template}, in the sense that it responds to the queries (line 3) while satisfying TV stability.
    \item 
    \textbf{Efficient unlearning algorithm}:
    We measure efficiency as the average number of queries the unlearning algorithm makes relative to the learning algorithm (retraining), defined as follows.  
    
\begin{definition}[Relative Unlearning Complexity]
\label{defn:auc}
The Relative Unlearning Complexity is defined as,
$$\frac{\mathbb{E}_{(\mathbf{A},\mathbf{U})}\left[\text{\normalfont{Query complexity of unlearning algorithm }} \ \mathbf{U} \right]}{\mathbb{E}_{\mathbf{A}}\left[\text{\normalfont{Query complexity of learning algorithm}} \ \ \mathbf{A}\right]}$$
\end{definition}

For a $\rho$-TV stable learning algorithm, we want that the relative unlearning complexity 
is (close to)
$\rho$.
This is motivated from the relationship between maximal coupling and TV distance.
In the following, our proposed unlearning algorithm
constructs a (near) maximal coupling of the learning algorithm's output under the original and updated dataset. 
This means that unlearning algorithm changes the original output (under the original dataset) with probability at most $\rho$ -- 
in this case, the unlearning algorithm makes a number of queries akin to retraining.
In the other case when it does change the output,
it makes a small (ideally, constant) number of queries. The
above
imply that relative unlearning complexity is (close to) $\rho$.

We note that relative unlearning complexity, in itself, does not completely capture if the unlearning algorithm is \textit{good}, since it may be the case that the corresponding learning algorithm is computationally more expensive than other existing methods.
However, in our applications to SCO (Section \ref{sec:applications}), our learning algorithms are linear time, so the denominator, in the definition above, is as small as it can be (asymptotically), i.e. $\Theta(n)$.

\item 
\textbf{Accuracy:}  We will primarily be concerned with correctness of the unlearning algorithm and its efficiency. In the applications  (Section \ref{sec:applications}), we will give accuracy guarantees for specific problems, where we will see our proposed TV stable modified algorithms are still accurate.
% \end{enumerate}
\end{CompactEnumerate}

\section{Prefix-sum Queries}
\label{sec:prefix-queries}
We now consider prefix-sum queries, which is the main contribution of this work.
The reason for this choice is that two powerful (family of) algorithms for SCO, Dual Averaging and Recursive Variance Reduction based methods, fit into this template (detailed in \cref{sec:applications}).
We start by defining a prefix-sum query.
 
\begin{definition}
A set of queries $\bc{q_t}_{t\geq 1}$ where $q_t:\cW^{t} \times \cZ^n \rightarrow \cW$ are called prefix-sum queries if
$q_1(w_1,S) = p_1(w_1,z_1)$ and for all $t>1$,
$q_t(\bc{w_i}_{i\leq t},S) = q_{t-1}(\bc{w_i}_{i< t},S) + p_t\big(\bc{w_i}_{i\leq t},z_t)\big)$ for some functions $\bc{p_t}_{t\geq 1}$ where $p_t: \cW^* \times \cZ \rightarrow \cW$.
\end{definition}

Simply put,  prefix-sum queries, sequentially query \textbf{new} data points and adds them to the previous accumulated query. A simple example is computing partial sums of data points $(z_1, z_1+z_2, \ldots)$.
Note that in the above definition, we can equivalently represent the prefix-sum queries using the sequence $\bc{p_t}_t$. We also assume that the queries have bounded sensitivity, defined as follows.
\begin{definition}
A query $q:\cW^*\!\times\! \cZ^n\! \rightarrow\! \cW$ is $B$-sensitive if 
\begin{align*}
\sup_{\bc{w_{i}}_{i}}\sup_{S,S':\abs{S\Delta S'}=1}\norm{q\br{\bc{w_i}_{i},S} - q\br{\bc{w_i}_{i},S'}} \leq B. 
\end{align*}
\end{definition}

We note that the bounded sensitivity condition is satisfied in a variety of applications; see Section \ref{sec:applications}.

\subsection{Learning with Binary Tree  Data-Structure}
The learning algorithm, given as Algorithm \ref{alg:learn_partial_query}, is based on answering the adaptive prefix-sum queries with the binary tree mechanism \citep{dwork2010differential}.
For $n$ samples (assume $n$ is a power of two, otherwise we can append  dummy ``zero'' samples without any change in asymptotic complexity), the binary tree mechanism constructs a complete binary tree $\cT$ with the leaf nodes corresponding to the data samples.
The key idea in the binary tree mechanism is that instead of adding \textit{fresh} independent noise to each prefix-sum query, it is \textit{better} to add correlated noise, where the correlation structure is described by a binary tree.
For example, suppose we want to release the \textbf{seventh} prefix-sum query,
$\sum_{i=1}^7p_i(\bc{w_j}_{j\leq i}, z_i)$, then consider the dyadic decomposition of $7$ as $4, 2$ and $1$, and release the sum, 
\begin{align*}
    &\big({\sum_{i=1}^4 p_i(\bc{w_j}_{j\leq i}, z_i) +\xi_1}\big) 
    + \big({\sum_{i=5}^6p_i(\bc{w_j}_{j\leq i}, z_i) \xi_2}\big) \\
    &+ \big({p_7(\bc{w_j}_{j\leq i}, z_i)+\xi_3}\big),
\end{align*}
where $\xi_i$'s denote the added noise, which may have also been used in prior prefix-sum query responses.
See Figure \ref{fig:my_label} (left) for a simplified description of the process.

We index the nodes of the tree using using binary strings $ B=\bc{0,1}^{\log n}$ which describes the path from the root.
Let the tree $\cT = \bc{v_b}_{b\in B}$ which denotes the contents stored by the learning algorithm.
Herein, each node contains the tuple $(u,r,w,z)$ where $u \in \bbR^d$ is the
query response, $r  \in \bbR^d$ is the
\emph{noisy}
response, $w \in  \bbR^d$ a model and $z \in \cZ$ a data point.
In fact, only the leaf nodes store the model and data sample. 
The size of the tree is the space complexity of the learning procedure.
Finally, define $\mathsf{leaf}: [n]\rightarrow \bc{0,1}^{\log{n}}$ which gives the binary representation of the input leaf node.

This binary tree data structure supports the following operations:

\ifarxiv
% \begin{CompactEnumerate}
\begin{enumerate}
    \item $\mathsf{Append}(u,\sigma;\cT)$: Add a new leaf to $\cT$, which consists of setting its query response and noisy query response to $u$, and $u+\cN(0,\sigma^2\bbI)$ respectively.  
    Further, update tree to add $u$ to $u_b$, corresponding to nodes $v_b$ in the path from this leaf to root, and add noise to their noisy response $r_b$ for nodes which are left child in the path.
    \item $\mathsf{GetPrefixSum}(t;\cT)$, where $t \in \bbN$:  Get the $t$-th
    noisy response from $\cT$, which consists of traversing the tree from $t$-th leaf to root, and adding the noisy responses of nodes which are left child.
    \item $\mathsf{Get}(b;\cT)$ where $b\in \bc{0,1}^{\log{n}}$: Get all items in the vertex of $\cT$ indexed by $b$.
     \item $\mathsf{Set}(b, v; \cT)$ where $b\in \bc{0,1}^{\log{n}}$: Set the contents of vertex $b$ in the $\cT$ as $v$.
\end{enumerate}
% \end{CompactEnumerate}
\else
\begin{CompactEnumerate}
% \begin{enumerate}
    \item $\mathsf{Append}(u,\sigma;\cT)$: Add a new leaf to $\cT$, which consists of setting its query response and noisy query response to $u$, and $u+\cN(0,\sigma^2\bbI)$ respectively.  
    Further, update tree to add $u$ to $u_b$, corresponding to nodes $v_b$ in the path from this leaf to root, and add noise to their noisy response $r_b$ for nodes which are left child in the path.
    \item $\mathsf{GetPrefixSum}(t;\cT)$, where $t \in \bbN$:  Get the $t$-th
    noisy response from $\cT$, which consists of traversing the tree from $t$-th leaf to root, and adding the noisy responses of nodes which are left child.
    \item $\mathsf{Get}(b;\cT)$ where $b\in \bc{0,1}^{\log{n}}$: Get all items in the vertex of $\cT$ indexed by $b$.
     \item $\mathsf{Set}(b, v; \cT)$ where $b\in \bc{0,1}^{\log{n}}$: Set the contents of vertex $b$ in the $\cT$ as $v$.
\end{CompactEnumerate}
% \end{enumerate}
\fi

\ifarxiv
\begin{figure*}[h]
    \centering
  \begin{minipage}{0.49 \textwidth}
\resizebox{220pt}{100pt}{%
% Red-black tree
% Author: Madit
% \documentclass[border=2mm]{standalone}
% \usepackage{tikz}
% \usetikzlibrary{arrows}
% \newcommand{\colorQm}{red!5}
% \newcommand{\colorQi}{red!20}
% \newcommand{\colorPm}{blue!5}
% \newcommand{\colorPi}{blue!21}
\newcommand{\colorQm}{red!15}
\newcommand{\colorQi}{red!20}
\newcommand{\colorPm}{blue!5}
\newcommand{\colorPi}{blue!15}

\newcommand{\colorQt}{red!1}
\tikzset{
  treenode/.style = {align=center, inner sep=0pt, text centered,
    font=\sffamily},
  arn_r/.style = {treenode, circle, black, font=\sffamily\bfseries, draw= black,
  % \color{red!5}, 
%   ultra thin,
%  line width=0.5mm
  very thick,
    fill=\colorQt, text width=4.0em},% arbre rouge noir, noeud noir
  arn_n/.style = {treenode, circle, draw= black,
  % \colorQi,
  fill = \colorQm, 
    text width=4.0em, 
    %   ultra thin
    very thick
    },% arbre rouge noir, noeud rouge
  arn_s/.style = {treenode, circle,  draw=black, fill = white, 
    text width=4.0em,
    %   ultra thin
    very thick
      },% arbre rouge noir, noeud rouge
  arn_x/.style = {treenode, rectangle, draw=white,
    minimum width=0.5em, minimum height=0.5em}% arbre rouge noir, nil
}

% \begin{document}
% \begin{tikzpicture}[->,>=stealth',very thick,level/.style={sibling distance = 23em/#1,
%   level distance = 5.3em}] 
  \begin{tikzpicture}[->, level/.style={sibling distance = 30em/#1,
  level distance = 5.3em}] 
\node [arn_n] {{\Large \color{red!80}$+\xi_8$}}
    child{ node [arn_n] {{
   % + \tiny \hspace{10pt}
    % $q_4(w_{\leq 3})$
    {\Large \color{red!80}
    $+\xi_4$}}} 
            child{ node [arn_n] {{
            % \tiny $q_2(w_{\leq 1})$
            {\Large \color{red!80}$+\xi_2$}}
            }  
            % node[right =10pt] {{\color{red!80}$+\xi_1$}}
            	child{ node [arn_n] {{\large \medspace $p_1(w_0)${\Large \color{red!80}~~$~+ ~\xi_1$}}} 
             node[below =2em]
                         {{\color{blue}$z_7$}}} %for a named pointer
        child{ node [arn_r] {{$p_2(w_{\leq 1})$}}
        node[below =2em]
                        {\color{blue}$z_2$}}
            }
            child{ node [arn_r] {\Large +}
							child{ node [arn_n] {{\medspace$p_3(w_{\leq 2})${\Large \color{red!80}~$~+~\xi_3$}}}
       node[below =2em]
                         {\color{blue}$z_1$}
       }
							child{ node [arn_r] {{\!$p_4(w_{\leq 3})$}}
        node[below =2em]
                         {\color{blue}$z_5$}
       }
            }                            
    }
    child{ node [arn_r] {\Large +}
            child{ node [arn_n] {{\Large \color{red!80} $+\xi_6$}} 
							child{ node [arn_n] {{\medspace $p_5(w_{\leq 4})${\Large
    \color{red!80}~~$+~\xi_5$}}}
         node[below =2em]
                         {\color{blue}$z_4$}}
							child{ node [arn_r] {{ \!$p_6(w_{\leq 5})$}}
         node[below =2em]
                        {\color{blue}$z_3$}
       }
            }
            child{ node [arn_r] {\Large +}
							child{ node [arn_n] {{\medspace$p_7(w_{\leq 6})${\Large\color{red!80}~~$+~\xi_7$}}}
         node[below =2em]
                         {\color{blue}$z_6$}
                         }
							child{ node [arn_r] {{\!$p_8(w_{\leq 7})$}}
         node[below =2em]
                         {\color{blue}$z_8$}
                         }
            }
		};
\end{tikzpicture}
% \end{document}%
}
    \end{minipage}
    \begin{minipage}{0.49 \textwidth}
    \vspace{4pt}
    \resizebox{220pt}{105pt}{%
    % Red-black tree
% Author: Madit
% \documentclass[border=2mm]{standalone}
% \usepackage{tikz}
% \usepackage{soul}
% \usetikzlibrary{arrows}
\newcommand{\colorQm}{red!15}
\newcommand{\colorQi}{red!20}
\newcommand{\colorPm}{blue!5}
\newcommand{\colorPi}{blue!15}

% % \newcommand\redsout{\bgroup\markoverwith{\textcolor{BrickRed}{\rule[0.3ex]{4pt}{1pt}}}\ULon}

% % \newcommand{\mathsout}[1]% will draw line through middle of #1

\newcommand{\colorQt}{red!1}
\tikzset{
  treenode/.style = {align=center, inner sep=0pt, text centered,
    font=\sffamily},
  arn_r/.style = {treenode, circle, black, font=\sffamily\bfseries, draw= black,
  % \color{red!5}, 
  very thick,
    fill=\colorQt, text width=4em},% arbre rouge noir, noeud noir
  arn_n/.style = {treenode, circle, draw= black,
  % \colorQi,
  fill = \colorQm, 
    text width=4em, very thick},% arbre rouge noir, noeud rouge
  arn_b/.style = {treenode, circle, draw= black,
  % \colorQi,
  fill = \colorPi, 
    text width=3.5em, very thick},% arbre rouge noir, noeud rouge
  arn_s/.style = {treenode, circle,  draw=black, fill = white, 
    text width=4em},% arbre rouge noir, noeud rouge
      arn_x/.style = {treenode, circle, dashed,  draw=black, fill = white, 
    text width=3.4em},
  % arn_x/.style = {treenode, rectangle, draw=white,
  %   minimum width=0.5em, minimum height=0.5em}% arbre rouge noir, nil
}

% \begin{document}
% \onecolumn
\begin{tikzpicture}[->,,very thick,level/.style={sibling distance = 30em/#1,
  level distance = 5.3em}] 
\node [arn_b] {
 $- {\color{blue!80}g} +{\color{red!80}\ g'}$ \textbf{{\color{blue}RS$_3$}}
% {\Large \color{red!80}$+\xi_8$}
}
    child{ node [arn_n] {{
   % + \tiny \hspace{10pt}
    % $q_4(w_{\leq 3})$
    {\Large \color{red!80}
    $+\xi_4$}}} 
            child{ node [arn_n] {{
            % \tiny $q_2(w_{\leq 1})$
            {\Large \color{red!80}$+\xi_2$}}
            }  
            % node[right =10pt] {{\color{red!80}$+\xi_1$}}
            	child{ node [arn_n] {{\large \medspace $p_1(w_0)${\Large \color{red!80}$~~+~\xi_1$}}} 
             node[below =2em]
                         {{\color{blue}$z_7$}}} %for a named pointer
        child{ node [arn_r] {{$p_2(w_{\leq 1})$}}
        node[below =2em]
                        {\color{blue}$z_2$}}
            }
            child{ node [arn_r] {\Large +}
							child{ node [arn_n] {{\medspace$p_3(w_{\leq 2})${\Large \color{red!80}$~~+~\xi_3$}}}
       node[below =2em]
                         {\color{blue}$z_1$}
       }
							child{ node [arn_r] {{\!$p_4(w_{\leq 3})$}}
        node[below =2em]
                         {\color{blue}$z_5$}
       }
            }                            
    }
    child{ node [arn_b] {
     $ - {\color{blue!80}g} +{\color{red!80}\ g'}$ 
    % \Large +
    }
            child{ node [arn_b] {
             $ - {\color{blue!80}g} +{\color{red!80}\ g'}$  \textbf{{\color{blue}RS$_2$}}
            } 
							child{ node [arn_b] {{
    %    $p_5(w_{\leq 4})
    %    ${\Large 
    % \color{red!80}$+\xi_5$
    $- {\color{blue!80}g} +{\color{red!80}\ g'}$  
    \textbf{{\color{blue}RS$_1$}}
    }}
         node[below =2em]
                         {\color{blue}\st{$z_4$}}
                         node[below =3em]
                         {\color{red}$z_8$}}
							child{ node [arn_r] {{\!$p_6(w_{\leq 5})$}}
         node[below =2em]
                        {\color{blue}$z_3$}
       }
            }
            child{ node [arn_r] {\Large +}
							child{ node [arn_n] {{\medspace$p_7(w_{\leq 6})${\Large \color{red!80}$~~+~\xi_7$}}}
         node[below =2em]
                         {\color{blue}$z_6$}
                         }
							child{ node [arn_x] {{ \st{$p_8(w_{\leq 7})$}}}
         node[below =2em]
                         {\color{blue}\st{$z_8$}}
                         }
            }
		};
\end{tikzpicture}
% \twocolumn
% \end{document}%
    }
    \end{minipage}
    \caption{
    A simplified schematic of the learning (left) and unlearning (right) procedures for prefix-sum queries. In the left, the leaves contain (noisy, if {\color{red!50}$+\xi_i$}) prefix-sum queries applied on the randomly permuted data-point (${\color{blue!50}z_i}$'s) below it. The intermediate nodes with $\textbf{+}$ adds the not-noised values of its children, where as others add noise to it. On the right, the deleted point ${\color{blue!50}z_4}$ is replaced with ${\color{red!50}z_8}$ which amounts to adjusting the queries with $-{\color{blue!50}g} + {\color{red!50}g'}$ (see Algorithm \ref{alg:unlearn_partial_query} for details) and performing Rejection Sampling (abbreviated {\color{blue!80}\textbf{RS}$_i$}, where $i$'s indicates the order of occurrence of sequence of rejection samplings) along the height of the tree.
    }
    \label{fig:my_label}
\end{figure*}
\else
\begin{figure*}[h]
    \centering
  \begin{minipage}{0.49 \textwidth}
\resizebox{247pt}{100pt}{%
% Red-black tree
% Author: Madit
% \documentclass[border=2mm]{standalone}
% \usepackage{tikz}
% \usetikzlibrary{arrows}
% \newcommand{\colorQm}{red!5}
% \newcommand{\colorQi}{red!20}
% \newcommand{\colorPm}{blue!5}
% \newcommand{\colorPi}{blue!21}
\newcommand{\colorQm}{red!15}
\newcommand{\colorQi}{red!20}
\newcommand{\colorPm}{blue!5}
\newcommand{\colorPi}{blue!15}

\newcommand{\colorQt}{red!1}
\tikzset{
  treenode/.style = {align=center, inner sep=0pt, text centered,
    font=\sffamily},
  arn_r/.style = {treenode, circle, black, font=\sffamily\bfseries, draw= black,
  % \color{red!5}, 
%   ultra thin,
%  line width=0.5mm
  very thick,
    fill=\colorQt, text width=4.0em},% arbre rouge noir, noeud noir
  arn_n/.style = {treenode, circle, draw= black,
  % \colorQi,
  fill = \colorQm, 
    text width=4.0em, 
    %   ultra thin
    very thick
    },% arbre rouge noir, noeud rouge
  arn_s/.style = {treenode, circle,  draw=black, fill = white, 
    text width=4.0em,
    %   ultra thin
    very thick
      },% arbre rouge noir, noeud rouge
  arn_x/.style = {treenode, rectangle, draw=white,
    minimum width=0.5em, minimum height=0.5em}% arbre rouge noir, nil
}

% \begin{document}
% \begin{tikzpicture}[->,>=stealth',very thick,level/.style={sibling distance = 23em/#1,
%   level distance = 5.3em}] 
  \begin{tikzpicture}[->, level/.style={sibling distance = 30em/#1,
  level distance = 5.3em}] 
\node [arn_n] {{\Large \color{red!80}$+\xi_8$}}
    child{ node [arn_n] {{
   % + \tiny \hspace{10pt}
    % $q_4(w_{\leq 3})$
    {\Large \color{red!80}
    $+\xi_4$}}} 
            child{ node [arn_n] {{
            % \tiny $q_2(w_{\leq 1})$
            {\Large \color{red!80}$+\xi_2$}}
            }  
            % node[right =10pt] {{\color{red!80}$+\xi_1$}}
            	child{ node [arn_n] {{\large \medspace $p_1(w_0)${\Large \color{red!80}~~$~+ ~\xi_1$}}} 
             node[below =2em]
                         {{\color{blue}$z_7$}}} %for a named pointer
        child{ node [arn_r] {{$p_2(w_{\leq 1})$}}
        node[below =2em]
                        {\color{blue}$z_2$}}
            }
            child{ node [arn_r] {\Large +}
							child{ node [arn_n] {{\medspace$p_3(w_{\leq 2})${\Large \color{red!80}~$~+~\xi_3$}}}
       node[below =2em]
                         {\color{blue}$z_1$}
       }
							child{ node [arn_r] {{\!$p_4(w_{\leq 3})$}}
        node[below =2em]
                         {\color{blue}$z_5$}
       }
            }                            
    }
    child{ node [arn_r] {\Large +}
            child{ node [arn_n] {{\Large \color{red!80} $+\xi_6$}} 
							child{ node [arn_n] {{\medspace $p_5(w_{\leq 4})${\Large
    \color{red!80}~~$+~\xi_5$}}}
         node[below =2em]
                         {\color{blue}$z_4$}}
							child{ node [arn_r] {{ \!$p_6(w_{\leq 5})$}}
         node[below =2em]
                        {\color{blue}$z_3$}
       }
            }
            child{ node [arn_r] {\Large +}
							child{ node [arn_n] {{\medspace$p_7(w_{\leq 6})${\Large\color{red!80}~~$+~\xi_7$}}}
         node[below =2em]
                         {\color{blue}$z_6$}
                         }
							child{ node [arn_r] {{\!$p_8(w_{\leq 7})$}}
         node[below =2em]
                         {\color{blue}$z_8$}
                         }
            }
		};
\end{tikzpicture}
% \end{document}%
}
    \end{minipage}
    \begin{minipage}{0.49 \textwidth}
    \vspace{4pt}
    \resizebox{247pt}{105pt}{%
    % Red-black tree
% Author: Madit
% \documentclass[border=2mm]{standalone}
% \usepackage{tikz}
% \usepackage{soul}
% \usetikzlibrary{arrows}
\newcommand{\colorQm}{red!15}
\newcommand{\colorQi}{red!20}
\newcommand{\colorPm}{blue!5}
\newcommand{\colorPi}{blue!15}

% % \newcommand\redsout{\bgroup\markoverwith{\textcolor{BrickRed}{\rule[0.3ex]{4pt}{1pt}}}\ULon}

% % \newcommand{\mathsout}[1]% will draw line through middle of #1

\newcommand{\colorQt}{red!1}
\tikzset{
  treenode/.style = {align=center, inner sep=0pt, text centered,
    font=\sffamily},
  arn_r/.style = {treenode, circle, black, font=\sffamily\bfseries, draw= black,
  % \color{red!5}, 
  very thick,
    fill=\colorQt, text width=4em},% arbre rouge noir, noeud noir
  arn_n/.style = {treenode, circle, draw= black,
  % \colorQi,
  fill = \colorQm, 
    text width=4em, very thick},% arbre rouge noir, noeud rouge
  arn_b/.style = {treenode, circle, draw= black,
  % \colorQi,
  fill = \colorPi, 
    text width=3.5em, very thick},% arbre rouge noir, noeud rouge
  arn_s/.style = {treenode, circle,  draw=black, fill = white, 
    text width=4em},% arbre rouge noir, noeud rouge
      arn_x/.style = {treenode, circle, dashed,  draw=black, fill = white, 
    text width=3.4em},
  % arn_x/.style = {treenode, rectangle, draw=white,
  %   minimum width=0.5em, minimum height=0.5em}% arbre rouge noir, nil
}

% \begin{document}
% \onecolumn
\begin{tikzpicture}[->,,very thick,level/.style={sibling distance = 30em/#1,
  level distance = 5.3em}] 
\node [arn_b] {
 $- {\color{blue!80}g} +{\color{red!80}\ g'}$ \textbf{{\color{blue}RS$_3$}}
% {\Large \color{red!80}$+\xi_8$}
}
    child{ node [arn_n] {{
   % + \tiny \hspace{10pt}
    % $q_4(w_{\leq 3})$
    {\Large \color{red!80}
    $+\xi_4$}}} 
            child{ node [arn_n] {{
            % \tiny $q_2(w_{\leq 1})$
            {\Large \color{red!80}$+\xi_2$}}
            }  
            % node[right =10pt] {{\color{red!80}$+\xi_1$}}
            	child{ node [arn_n] {{\large \medspace $p_1(w_0)${\Large \color{red!80}$~~+~\xi_1$}}} 
             node[below =2em]
                         {{\color{blue}$z_7$}}} %for a named pointer
        child{ node [arn_r] {{$p_2(w_{\leq 1})$}}
        node[below =2em]
                        {\color{blue}$z_2$}}
            }
            child{ node [arn_r] {\Large +}
							child{ node [arn_n] {{\medspace$p_3(w_{\leq 2})${\Large \color{red!80}$~~+~\xi_3$}}}
       node[below =2em]
                         {\color{blue}$z_1$}
       }
							child{ node [arn_r] {{\!$p_4(w_{\leq 3})$}}
        node[below =2em]
                         {\color{blue}$z_5$}
       }
            }                            
    }
    child{ node [arn_b] {
     $ - {\color{blue!80}g} +{\color{red!80}\ g'}$ 
    % \Large +
    }
            child{ node [arn_b] {
             $ - {\color{blue!80}g} +{\color{red!80}\ g'}$  \textbf{{\color{blue}RS$_2$}}
            } 
							child{ node [arn_b] {{
    %    $p_5(w_{\leq 4})
    %    ${\Large 
    % \color{red!80}$+\xi_5$
    $- {\color{blue!80}g} +{\color{red!80}\ g'}$  
    \textbf{{\color{blue}RS$_1$}}
    }}
         node[below =2em]
                         {\color{blue}\st{$z_4$}}
                         node[below =3em]
                         {\color{red}$z_8$}}
							child{ node [arn_r] {{\!$p_6(w_{\leq 5})$}}
         node[below =2em]
                        {\color{blue}$z_3$}
       }
            }
            child{ node [arn_r] {\Large +}
							child{ node [arn_n] {{\medspace$p_7(w_{\leq 6})${\Large \color{red!80}$~~+~\xi_7$}}}
         node[below =2em]
                         {\color{blue}$z_6$}
                         }
							child{ node [arn_x] {{ \st{$p_8(w_{\leq 7})$}}}
         node[below =2em]
                         {\color{blue}\st{$z_8$}}
                         }
            }
		};
\end{tikzpicture}
% \twocolumn
% \end{document}%
    }
    \end{minipage}
    \caption{
    A simplified schematic of the learning (left) and unlearning (right) procedures for prefix-sum queries. In the left, the leaves contain (noisy, if {\color{red!50}$+\xi_i$}) prefix-sum queries applied on the randomly permuted data-point (${\color{blue!50}z_i}$'s) below it. The intermediate nodes with $\textbf{+}$ adds the not-noised values of its children, where as others add noise to it. On the right, the deleted point ${\color{blue!50}z_4}$ is replaced with ${\color{red!50}z_8}$ which amounts to adjusting the queries with $-{\color{blue!50}g} + {\color{red!50}g'}$ (see Algorithm \ref{alg:unlearn_partial_query} for details) and performing Rejection Sampling (abbreviated {\color{blue!80}\textbf{RS}$_i$}, where $i$'s indicates the order of occurrence of sequence of rejection samplings) along the height of the tree.
    }
    \label{fig:my_label}
\end{figure*}
\fi

Following 
\ifarxiv
\cite{guha2013nearly},
\else
\citet{guha2013nearly},
\fi
we give pseudo-codes of the above operations in
\cref{sec:appendix-tree},
with minor modifications to aid the unlearning process.

\begin{algorithm}
\caption{$\mathsf{TreeLearn}(t_0;\cT)$}
\label{alg:learn_partial_query}
\begin{algorithmic}[1]
\REQUIRE Dataset $S$, steps $T$, $B$-sensitive prefix-sum queries $\bc{p_t}_{t\leq T}$, update functions $\bc{U_t}_{t\leq T}$, noise
\ifarxiv
standard deviation
\else
std.
\fi
$\sigma$
\INLINEIF{$t_0=1$}{Permute dataset and initialize $\cT$}
\STATE $\br{\cdot,\cdot,w_{t_0},\cdot} = \mathsf{Get}(\mathsf{leaf}(t_0);\cT)$
\FOR{$t=t_0$ to $\abs{S}-1$}
\STATE $u_t = p_t(\bc{w_t}_{i\leq t},z_t)$
\STATE $\mathsf{Append}(u_t,\sigma; \cT)$ 
\STATE $r_t = \mathsf{GetPrefixSum}(t;\cT)$
\STATE $w_{t+1} = U_t\br{\bc{w_t}_{\leq t},r_t}$
\STATE $\mathsf{Set}(\mathsf{leaf}(t), \br{u_t,r_t,w_t,z_t};\cT)$
\ENDFOR
\ENSURE{$\hat w = \cS\br{\bc{w_t}_t}$}
\end{algorithmic}
\end{algorithm}

\subsection{Unlearning by Maximally Coupling Binary Trees}

\ifarxiv
\begin{algorithm}[h!]
\caption{$\mathsf{TreeUnlearn}$}
\label{alg:unlearn_partial_query}
\begin{algorithmic}[1]
\REQUIRE $z_j$: data point to be deleted, $\cT$: internal tree data-structure saved during learning

\STATE $s = \mathsf{leaf}(j)$ and $l = \mathsf{leaf}(\abs{S})$

\STATE $\br{\cdot,\cdot,w, z} = \mathsf{Get}(s;\cT)$ and $\br{\cdot,\cdot,\cdot, z'} = \mathsf{Get}({l};\cT)$
\STATE $g = p_j(\bc{w_q}_{q\leq s}, z)$ and $g' = p_j(\bc{w_q}_{q\leq s}, z')$
\STATE Let $\mathsf{path} = \bc{{l}\rightarrow \cdots \rightarrow \text{root}}$ be the path from ${l}$ to root.
\INLINEFOR{$b \in \mathsf{path}$}{$u_b = u_b - g'$}
\STATE Remove node ${l}$ 
from $\cT$.
\STATE Let $b=s$ and $\text{ct}=1$
\INLINEIF{$j=\abs{S}$}{let $b=\emptyset$}
\WHILE{$b\neq \emptyset$}
\STATE $(u,r,\cdot,\cdot) = \mathsf{Get}(b;\cT)$
\STATE $u' = u-g+g'$
\IF{$\mathsf{Unif}\br{0,1} \leq \frac{\phi_{\cN(u,\sigma^2\bbI)}(r)}{\phi_{\cN(u',\sigma^2\bbI)}(r)}$}
\INLINEIFNOEND{$b = s$}{$\mathsf{Set}(b,\br{u',r,w,z'};\cT)$}
\INLINELSE{$\mathsf{Set}(b,\br{u',r,\emptyset,\emptyset};\cT)$}
\ELSE
 \STATE $r' = \mathsf{Reflect}(u,u',r)$
 \IF{$b = s$}
 \STATE $\mathsf{Set}(b;\br{u',r',\emptyset,z'};\cT)$
  \STATE $w' = U_{j}\br{\bc{w_q}_{q\leq b},\mathsf{GetPrefixSum}(j;\cT)}$
  \STATE $\mathsf{Set}(b,\br{u',r',w',z'};\cT)$
 \ELSE
  \STATE $\mathsf{Set}(b,\br{u',r',\emptyset,\emptyset};\cT)$
  \ENDIF
  \STATE  $\mathsf{TreeLearn}(j+\text{ct};\cT)$ \emph{// Continue Retraining}
 \STATE \textbf{break}
\ENDIF
  \INLINEIF{$b$ is left sibling}{$\text{ct}=\text{ct}+2^{\abs{s}-\abs{b}-1}$}
\STATE Set (new) $b$ as binary representation of parent of $b$
\ENDWHILE
\STATE Update dataset $S = S \backslash \bc{z_j}$
\ENSURE{$\hat w = \cS(\bc{w_b}_b)$}
\end{algorithmic}
\end{algorithm}
\else
\begin{algorithm}[t]
\caption{$\mathsf{TreeUnlearn}$}
\label{alg:unlearn_partial_query}
\begin{algorithmic}[1]
\REQUIRE $z_j$: data point to be deleted, $\cT$: internal tree data-structure saved during learning

\STATE $s = \mathsf{leaf}(j)$ and $l = \mathsf{leaf}(\abs{S})$

\STATE $\br{\cdot,\cdot,w, z} = \mathsf{Get}(s;\cT)$ and $\br{\cdot,\cdot,\cdot, z'} = \mathsf{Get}({l};\cT)$
\STATE $g = p_j(\bc{w_q}_{q\leq s}, z)$ and $g' = p_j(\bc{w_q}_{q\leq s}, z')$
\STATE Let $\mathsf{path} = \bc{{l}\rightarrow \cdots \rightarrow \text{root}}$ be the path from ${l}$ to root.
\INLINEFOR{$b \in \mathsf{path}$}{$u_b = u_b - g'$}
\STATE Remove node ${l}$ 
from $\cT$.
\STATE Let $b=s$ and $\text{ct}=1$
\INLINEIF{$j=\abs{S}$}{let $b=\emptyset$}
\WHILE{$b\neq \emptyset$}
\STATE $(u,r,\cdot,\cdot) = \mathsf{Get}(b;\cT)$
\STATE $u' = u-g+g'$
\IF{$\mathsf{Unif}\br{0,1} \leq \frac{\phi_{\cN(u,\sigma^2\bbI)}(r)}{\phi_{\cN(u',\sigma^2\bbI)}(r)}$}
\INLINEIFNOEND{$b = s$}{$\mathsf{Set}(b,\br{u',r,w,z'};\cT)$}
\INLINELSE{$\mathsf{Set}(b,\br{u',r,\emptyset,\emptyset};\cT)$}
\ELSE
 \STATE $r' = \mathsf{Reflect}(u,u',r)$
 \IF{$b = s$}
 \STATE $\mathsf{Set}(b;\br{u',r',\emptyset,z'};\cT)$
  \STATE $w' = U_{j}\br{\bc{w_q}_{q\leq b},\mathsf{GetPrefixSum}(j;\cT)}$
  \STATE $\mathsf{Set}(b,\br{u',r',w',z'};\cT)$
 \ELSE
  \STATE $\mathsf{Set}(b,\br{u',r',\emptyset,\emptyset};\cT)$
  \ENDIF
  \STATE  $\mathsf{TreeLearn}(j+\text{ct};\cT)$ \emph{// Continue Retraining}
 \STATE \textbf{break}
\ENDIF
  \INLINEIF{$b$ is left sibling}{$\text{ct}=\text{ct}+2^{\abs{s}-\abs{b}-1}$}
\STATE Set (new) $b$ as binary representation of parent of $b$

\ENDWHILE
\STATE Update dataset $S = S \backslash \bc{z_j}$
\ENSURE{$\hat w = \cS(\bc{w_b}_b)$}
\end{algorithmic}
\end{algorithm}
\fi
The unlearning Algorithm \ref{alg:unlearn_partial_query} is based on constructing a (near) maximal coupling of the binary trees under current and updated dataset.
Let $z_j$ be the element to be deleted and let  
$v_s$ be the leaf node which contains $z_j$ (we use $z$ in place of $z_j$ from here on, for simplicity).
During unlearning, we simulate (roughly speaking) the dynamics of the learning algorithm if the deleted point was not present to begin with. In that case, in place of the deleted point, some other point would have been used. Now, since the dataset was randomly permuted, every point is equally likely to have been used, and thus we can use the point $z'$ in the last leaf node,  say $v_{l}$, in the tree -- this choice of the last point is important for unlearning efficiency.
Firstly, the computations associated with the last point $z'$ needs to be undone -- towards this, we update the contents of the nodes in the path from node $v_l$ to root (line 5), finally removing node $v_l$ from the tree (line 6).
Then, we need to \textit{replace} all the computations which used the deleted point $z$ with the same computation under $z'$. 
Since the learning algorithm was based on the binary tree mechanism, the point $z$ was only \textbf{explicitly} used in the nodes lying on the path from leaf $v_s$ to the root (so, at most $\log{n}$ nodes). 
We say explicitly above because due to the adaptive nature of the process, in principle, all nodes after $v_s$ depend on it, in the sense that their contents would change if the response in $v_s$ were to change. However, importantly, the binary search structure of our learning algorithm and our coupling technique (details below) would enable us to (mostly) only care about explicit computations.

We first compute 
\textbf{two} new queries, under the data point $z$ and {$z'$}, with responses
$g = p_j(\bc{w_q}_{q\leq s}, z)$ and $g' = p_j(\bc{w_q}_{q\leq s}, z')$ respectively (line 3).
Starting with leaf node $v_s$, we
update the 
original
unperturbed prefix-sum query response under $z$ i.e.  $u$  to what it would have been under data-point $z'$: $u' = u-g'+g$ (line 11).
Further, since the training method adds noise $\cN(0,\sigma^2\bbI)$ to $u$ to produce original noisy response $r$, we now need to produce a sample from $\cN(u',\sigma^2\bbI)$ to satisfy exact unlearning.
Naively, we could simply get a \textit{fresh} independent sample from $\cN(u',\sigma^2\bbI)$, however, this would change the noisy response $r$, and hence require all subsequent computations to be redone (the adaptive nature).
So, ideally, we want to reuse the same $r$
and yet generate a sample from $\cN(u',\sigma^2\bbI)$. This is precisely the problem of constructing a maximal coupling, discussed in the Section \ref{sec:prelims}, wherein we also discussed the method of reflection coupling to do it.

This amounts to doing a 
rejection sampling 
which (roughly) ascertains if response $r$ is still sufficient under the new distribution $\cN(u',\sigma^2\bbI)$.
 Specifically 
 we compute the ratio of the probability densities at $r$ under the noise added to $u$ and $u'$, i.e. $\frac{\phi_{\cN(u, \sigma^2\bbI)}(r)}{\phi_{\cN(u', \sigma^2\bbI)}(r)}$ and compare it against a randomly sampled $\text{Unif(0,1)}$;
if it results in accept, we move to parent of the node $v_s$, and repeat. If any step fails, we reflect 
which generates a different noisy response $r'$,
 and continue retraining from the next leaf w.r.t. the post order traversal of the tree 
 (the variable ct in Algorithm \ref{alg:unlearn_partial_query} keeps track of this \textit{next} node).
See Figure \ref{fig:my_label} for a simplified description of the process.

% \vspace{5pt}
The main result of this section is as follows.
\begin{theorem}
\label{thm:prefix-main}
The following are true for Algorithms \ref{alg:learn_partial_query} and \ref{alg:unlearn_partial_query},
\ifarxiv
\begin{enumerate}
    \item The learning
    Algorithm \ref{alg:learn_partial_query} with $\sigma^2 = \frac{64B^2\text{log}^2(n)}{\rho}$
    satisfies $\rho$-TV stability.
    \item The corresponding unlearning
    Algorithm \ref{alg:unlearn_partial_query}
    satisfies exact unlearning.
    \item The relative unlearning complexity is $\tilde O\br{\rho}$
\end{enumerate}
\else
    \begin{CompactEnumerate}
    \item The learning
    Algorithm \ref{alg:learn_partial_query} with $\sigma^2 = \frac{64B^2\text{log}^2(n)}{\rho}$
    satisfies $\rho$-TV stability.
    \item The corresponding unlearning
    Algorithm \ref{alg:unlearn_partial_query}
    satisfies exact unlearning.
    \item The relative unlearning complexity is $\tilde O\br{\rho}$
    % \end{enumerate}
    \end{CompactEnumerate}
\fi
\end{theorem}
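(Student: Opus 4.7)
The plan is to prove the three claims in sequence; all three rely on a careful analysis of the Gaussian-perturbed binary tree mechanism under a well-chosen coupling of its run on $S$ versus $S \setminus \{z_j\}$.

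For the TV-stability claim, I would fix two datasets $S$ and $S'$ that differ in exactly one point and couple the two invocations of Algorithm~\ref{alg:learn_partial_query} in the natural way: share the random permutation so the differing point occupies the same leaf index $t^*$, and share the Gaussian noises $\{\xi_b\}$. The trajectories coincide up to leaf $t^*$. At $t^*$ and each of its $O(\log n)$ ancestors (the only nodes whose subtree sums include the altered leaf), the $B$-sensitivity of $p_t$ propagates by induction (conditional on matching $r$'s at all previously visited ancestors) to give $\|u_b - u_b'\| \le B$. Since $\mathsf{TV}(\cN(u, \sigma^2 \bbI), \cN(u', \sigma^2 \bbI)) = O(\|u - u'\|/\sigma)$, a composition bound over the $O(\log n)$ path nodes yields $\mathsf{TV}(\cA(S), \cA(S')) = \tilde O(B \log n/\sigma)$, and the stated choice of $\sigma$ makes this at most $\rho$.

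For the exact-unlearning claim, I would argue that Algorithm~\ref{alg:unlearn_partial_query} constructs a (maximal) coupling between its output and the output of running Algorithm~\ref{alg:learn_partial_query} directly on $S \setminus \{z_j\}$. The enabling distributional fact is that, thanks to the random permutation during learning, swapping the deleted point $z_j$ with the last-leaf point $z'$ and then removing the last leaf preserves the joint distribution over the tree; hence it suffices to couple the ``swapped'' tree to one in which the last leaf is absent. Walking up from leaf $s$, at each ancestor $b$ the target marginal is $\cN(u', \sigma^2 \bbI)$ with $u' = u - g + g'$, and the reflection coupling step is precisely a maximal coupling of $\cN(u, \sigma^2 \bbI)$ and $\cN(u', \sigma^2 \bbI)$. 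On acceptance, the existing noisy response $r$ remains a valid sample from the updated distribution, so upstream noisy responses -- and hence the downstream deterministic state -- remain valid under the updated process, and the walk continues. On the first rejection, the reflected sample $r'$ deterministically invalidates the downstream state, which is regenerated by invoking $\mathsf{TreeLearn}$ from the next post-order leaf; this reproduces the correct conditional distribution on the rest of the tree.

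For the relative unlearning complexity, I would bound the expected number of queries. Each rejection-sampling test costs $O(1)$ queries and the path has length $O(\log n)$, so when no rejection occurs the unlearning cost is $\tilde O(1)$. The per-node rejection probability is $O(B/\sigma)$, so by a union bound the probability of any rejection is $\tilde O(\rho)$ under the stated noise scale. Conditional on the first rejection occurring at a node of height $h$, the $\mathsf{TreeLearn}$ continuation costs at most $O(n)$ queries. Combining these bounds and dividing by the $\Theta(n)$ cost of $\mathsf{TreeLearn}$ on the full dataset gives relative unlearning complexity $\tilde O(\rho)$. The main obstacle is the second claim: verifying that the sequential-along-the-path reflection-then-continue-retrain protocol realizes the exact joint distribution $\cA(S \setminus \{z_j\})$ over the full tree state. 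The delicate point is that the noisy responses $r_b$ at ancestors of $s$ have already been fixed from the original learning run, so after the ``undo'' of the last-leaf contributions and the per-node acceptance steps one must show these upstream $r_b$ are distributed exactly as in a fresh run of $\mathsf{TreeLearn}$ on $S \setminus \{z_j\}$; equivalently, that the chain of node-wise maximal couplings composes into a maximal coupling of the full tree's noise vector.
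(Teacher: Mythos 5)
Your plan follows the paper's structure closely, and the self-identified ``main obstacle'' in part~2 is exactly what the paper's proof spends its effort on. Two remarks. First, for part~1 the paper does not go through a direct node-by-node TV coupling: it observes that the tree mechanism is a composition of Gaussian mechanisms (parallel composition across siblings, adaptive composition along the height), bounds its R\'enyi divergence, and converts RDP to TV-stability via a Bretagnolle--Huber-type inequality; your direct TV-composition argument is a valid alternative and yields the same $\sigma^2 = \Theta(B^2\log^2 n/\rho^2)$ up to constants, but be aware that TV does not compose as cleanly as KL/RDP under adaptivity, so the standard route is the divergence-then-convert one. Second, for part~2 the paper does precisely the inductive verification you flag: one lemma shows that (by symmetry of a uniform permutation) deleting index $j$ and substituting the last leaf gives the correct permutation marginal on $[n-1]$, and a second lemma runs an induction over the nodes in \emph{post-order traversal} showing that, conditional on the (shared) permutations, the response tree produced by $\mathsf{TreeUnlearn}$ has exactly the conditional law $\tilde Q^{\mathbf{p}^{(2)}}$, case-splitting at each step on whether the node is off the deletion path (identical conditional law under $P$ and $Q$), on the path and accepted, on the path and rejected (reflection gives the correct conditional, then $\mathsf{TreeLearn}$ is reinvoked from the next post-order leaf so the rest is generated from the true conditional), or downstream of an earlier rejection. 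The exact-unlearning claim for the full tree state then follows because the $u_b$ and $w_b$ entries are deterministic functions of the responses and the permutation. Your part~3 argument is essentially the paper's: it writes the acceptance probability as $\prod_b(1-\rho_b)$ over the $\leq \log n$ path nodes, lower-bounds it by $1-\sum_b\rho_b\geq 1-\rho\log n$ via an elementary inequality, and upper-bounds the retraining cost by $O(n)$; dividing by $\Theta(n)$ gives $\tilde O(\rho)$.
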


As discussed in the preceding section, in the Theorem above, we have all the properties we needed with the unlearning process. We now move on to applications and give accuracy guarantees.
\section{Applications}
 \label{sec:applications}
In the following, we describe some problems and learning algorithms. The corresponding unlearning algorithms and its correctness simply follow as application of the result of the preceding section, provided we show that it uses a bounded sensitivity prefix-sum query. The only other thing to show is the accuracy guarantee of the TV stable modification of the learning algorithm (Algorithm \ref{alg:learn_partial_query}).

From here on, we use runtime to mean gradient complexity as is standard in convex optimization~\citep{nemirovskij1983problem}. But, as pointed out before, our proposed unlearning algorithm yields similar improvements over retraining, even accounting for other operations in the method.

\subsection{Smooth SCO with Variance Reduced Frank-Wolfe}

We assume that the loss function $w\mapsto \ell(w;z)$ is $H$-smooth and $G$-Lipschitz for all $z$\footnote{
A real valued function $x\mapsto f(x)$ is $G$-Lipschitz and $H$-smooth if $\abs{f(x_1)-f(x_2)} \leq G\norm{x_1-x_2}$ an $\norm{\nabla f(x_1)-\nabla f(x_2)}\leq H\norm{x_1-x_2}$ respectively.}.
The algorithm we use is variance reduced Frank-Wolfe method where the variance reduced gradient estimate
$u_t$ is the 
Hybrid-SARAH estimate \citep{tran2019hybrid} with $\gamma_t = \frac{1}{t+1}$ given as,
\ifarxiv
\begin{align*}
    u_t &= \br{1-\gamma_t}\br{u_{t-1}+\nabla \ell(w_t;z_t) - \nabla \ell(w_{t-1};z_t)} + \gamma_t \nabla \ell(w_t;z_t)\\
     &= \frac{1}{t+1} \sum_{i=1}^t\br{\br{i+1}\nabla \ell(w_i;z_i)-i\nabla \ell(w_{i-1};z_i)}
\end{align*}
\else
{\small
\begin{align*}
    u_t &= \br{1-\gamma_t}\br{u_{t-1}+\nabla \ell(w_t;z_t) - \nabla \ell(w_{t-1};z_t)} + \gamma_t \nabla \ell(w_t;z_t)\\
     &= \frac{1}{t+1} \sum_{i=1}^t\br{\br{i+1}\nabla \ell(w_i;z_i)-i\nabla \ell(w_{i-1};z_i)}
\end{align*}
}
\fi

We show that the above is a prefix sum query with sensitivity $B = 2\br{HD+G}$, thus fits into our framework.
The full pseudo-code is given as Algorithm \ref{alg:vr-fw} in Appendix \ref{app:applications}.
We state the  main result below where the accuracy guarantee
follows from modifications to the analysis in 
\ifarxiv
\cite{zhang2020one}.
\else
\citet{zhang2020one}.
\fi

\begin{theorem}
\label{thm:fw}
Let $\rho\leq 1$ and $\ell:\cW \times \cZ \rightarrow \bbR$ be an $H$-smooth, $G$-Lipschitz convex function over a closed convex set $\cW$ of diameter $D$. 
Algorithm \ref{alg:vr-fw}, as the learning algorithm, run with $\sigma^2 = \frac{64\br{HD+G}^2\text{log}^2(n)}{\rho^2}$, $t_0=1$ and $\eta_t = \frac{1}{t+1}$ on a dataset $S$ of $n$   i.i.d. samples from $\cD$ outputs $\hat w$, with excess population risk bounded as,
\ifarxiv
\begin{align*}
    \mathbb{E}\left[L(\hat w;\cD) - L(w^*;
        \cD)\right] = \tilde O\br{\br{G+HD}D\br{\frac{1}{\sqrt{n}} + \frac{\sqrt{d}}{n\rho}}}.
\end{align*}
\else
{\small
\begin{align*}
    \mathbb{E}\left[L(\hat w;\cD) - L(w^*;
        \cD)\right] = \tilde O\br{\br{G+HD}D\br{\frac{1}{\sqrt{n}} + \frac{\sqrt{d}}{n\rho}}}.
\end{align*}
}
\fi

Furthermore, the corresponding unlearning Algorithm  \ref{alg:unlearn_partial_query} (with query and update functions as specified in the learning algorithm), satisfies exact unlearning with $\tilde O\br{\rho n}$ expected runtime.
\end{theorem}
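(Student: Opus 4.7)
The plan is to invoke Theorem~\ref{thm:prefix-main} for TV-stability, exact unlearning, and unlearning runtime, and to prove the excess risk bound by a dedicated accuracy analysis of variance-reduced Frank--Wolfe with tree-perturbed gradient estimates. I would first verify that the Hybrid-SARAH estimator fits the framework: multiplying the recurrence by $t+1$ gives the unnormalized estimate $v_t := (t+1)u_t = \sum_{i=1}^t p_i\!\br{\bc{w_j}_{j\le i}, z_i}$ with $p_i\!\br{\bc{w_j}_{j\le i}, z_i} := (i+1)\nabla \ell(w_i; z_i) - i \nabla \ell(w_{i-1}; z_i)$, which is a prefix-sum query; the final $u_t = v_t/(t+1)$ is deterministic post-processing absorbed into the update $U_t$. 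For sensitivity, rewrite $p_i = \nabla\ell(w_i;z_i) + i\br{\nabla\ell(w_i;z_i)-\nabla\ell(w_{i-1};z_i)}$; since $w_{i-1}, w_i$ are measurable with respect to $z_1,\dots,z_{i-1}$ alone, changing $z_i \mapsto z_i'$ gives $\norm{p_i - p_i'}\le 2G + 2iH\norm{w_i-w_{i-1}}$ by $G$-Lipschitzness and $H$-smoothness. The Frank--Wolfe update $w_i = (1-\eta_{i-1})w_{i-1} + \eta_{i-1}v$ for some $v\in\cW$ gives $\norm{w_i-w_{i-1}}\le \eta_{i-1}D = D/i$, hence $B = 2(G+HD)$.

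\textbf{Step 2 (unlearning via Theorem~\ref{thm:prefix-main}).} With $B = 2(G+HD)$ and $\sigma^2 = 64B^2\log^2(n)/\rho^2$, Theorem~\ref{thm:prefix-main} applied to Algorithms~\ref{alg:learn_partial_query}--\ref{alg:unlearn_partial_query}, instantiated with these $p_i$'s and the FW update function, directly yields exact unlearning and expected unlearning runtime $\tilde O(\rho \cdot n)$, since a retraining pass costs $\Theta(n)$ gradient queries.

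\textbf{Step 3 (accuracy).} Let $\tilde u_t := r_t/(t+1) = u_t + \xi_t/(t+1)$, where $\xi_t$ is the tree-mechanism noise injected into the $t$-th prefix sum; being the sum of at most $\log n$ independent $\cN(0,\sigma^2 \bbI)$ vectors, $\mathbb{E}\norm{\xi_t} = O(\sigma\sqrt{d\log n})$. Smoothness of $L$, optimality of the FW vertex chosen against $\tilde u_t$, and convexity of $L$ give the standard one-step bound $L(w_{t+1}) - L(w^*) \le (1-\eta_t)\br{L(w_t) - L(w^*)} + 2\eta_t D\norm{\tilde u_t - \nabla L(w_t)} + \tfrac{1}{2}H\eta_t^2 D^2$. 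I would then invoke the Hybrid-SARAH variance bound $\mathbb{E}\norm{u_t - \nabla L(w_t)}^2 \le O\br{(G+HD)^2/t}$ (following \citet{zhang2020one} using $H$-smoothness and $\eta_t = 1/(t+1)$), so that $\mathbb{E}\norm{\tilde u_t - \nabla L(w_t)} \le O\br{(G+HD)/\sqrt{t}} + O\br{\sigma\sqrt{d\log n}/(t+1)}$. Multiplying the recursion by $t+1$, telescoping from $t=1$ to $T-1 = n-1$, dividing by $n$, and substituting $\sigma = \tilde O((G+HD)/\rho)$ yields the target rate $\tilde O\br{(G+HD)D\br{1/\sqrt{n} + \sqrt{d}/(n\rho)}}$.

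\textbf{Main obstacle.} The delicate point is the sensitivity argument in Step~1: the naive bound $\norm{p_i}\le O(iG)$ grows with $i$, and only the schedule $\eta_{i-1} = 1/i$ combined with $H$-smoothness collapses it to the iteration-independent $2(G+HD)$. A secondary subtlety is that the raw tree noise of magnitude $\tilde O((G+HD)\sqrt{d}/\rho)$ appears dominant; the cancellation that produces the $\sqrt{d}/(n\rho)$ (rather than $\sqrt{d}/\sqrt{n\rho}$) scaling in the final rate is the $1/(t+1)$ division inside $\tilde u_t$, which after telescoping contributes only a $\tilde O(1/n)$ factor to the noise component of the excess risk.
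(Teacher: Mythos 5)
Your proposal is correct and follows essentially the same route as the paper: the sensitivity bound $B=2(G+HD)$ is obtained via the same smoothness-plus-step-size argument, the unlearning guarantee is imported from Theorem~\ref{thm:prefix-main}, and the accuracy proof uses the standard Frank--Wolfe recursion together with the Hybrid-SARAH variance bound plus the $\tilde O(\sigma\sqrt{d}/(t+1))$ tree-noise contribution, telescoped exactly as in the paper (the paper's product form $\prod_{t=i+1}^{T-1}(1-\eta_t)=(i+1)/T$ is your ``multiply by $t+1$ and telescope''). Your closing remark about the $1/(t+1)$ normalization being the source of the $\sqrt{d}/(n\rho)$ (rather than $\sqrt{d}/\sqrt{n\rho}$) scaling accurately reflects what the paper's Lemma~\ref{lem:gradient-estimation-error} captures.
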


\subsection{Non-smooth SCO with Dual Averaging}

In this section, we only assume that loss function $w \mapsto \ell(w;z)$ is $G$-Lipschitz and convex $\forall \ z \in \cZ$.
Herein, we use dual averaging method \citep{nesterov2009primal} where the model is updated as follows:
\ifarxiv
{}
\else
\vspace{-2pt}
\fi
\begin{align*}
    w_{t+1} = \Pi_\cW\Big(w_0 - \eta \sum_{i=1}^t \nabla \ell(w_i;z_i)\Big),
\end{align*}
\ifarxiv
{}
\else
\vspace{-2pt}
\fi
where $\Pi$ denotes the Euclidean projection on to the convex set $\cW$.
The above again is a prefix-sum query with sensitivity $G$, thus fits into our framework.
The full pseudo-code is given as Algorithm \ref{alg:dual-averaging} in Appendix \ref{app:applications}.
The accuracy guarantee
mainly follows from
\ifarxiv
\cite{kairouz2021practical}.
\else
\citet{kairouz2021practical}.
\fi

\begin{theorem}
\label{thm:dual-averaging}
Let $\rho\leq 1$ and $\ell:\cW \times \cZ \rightarrow \bbR$ be a $G$-Lipschitz convex function over a closed convex set $\cW$ of diameter $D$.
Algorithm \ref{alg:dual-averaging}, as the learning algorithm, run with $\sigma^2 = \frac{64G^2\text{log}^2(n)}{\rho^2}$, $t_0=1$ and $\eta = \frac{Dd^{1/4}\sqrt{\log{n}}}{G\sqrt{n\rho}}$ on a dataset $S$ of $n$ samples, drawn i.i.d. from $\cD$, outputs $\hat w$  with excess population risk bounded as, 
\begin{align*}
     \mathbb{E}\left[L(\hat w;\cD) - L(w^*;
        \cD)\right] = \tilde O\Bigg(GD\Bigg(\frac{1}{\sqrt{n}} + \sqrt{\frac{\sqrt{d}}{n\rho}}\Bigg)\Bigg).
\end{align*}
Furthermore, the corresponding unlearning Algorithm  \ref{alg:unlearn_partial_query} (with query and update functions as specified in the learning algorithm), satisfies exact unlearning with $\tilde O\br{\rho n}$ expected runtime.
\end{theorem}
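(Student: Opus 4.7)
The plan is to verify that Algorithm~\ref{alg:dual-averaging} instantiates the adaptive prefix-sum query template of Section~\ref{sec:prefix-queries}, apply Theorem~\ref{thm:prefix-main} to immediately obtain exact unlearning and $\tilde O(\rho n)$ expected unlearning runtime, and then establish the accuracy bound by perturbing the standard dual averaging regret analysis with the noise injected by the binary tree mechanism.

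\textbf{Reduction to prefix-sum queries.} I would identify the per-step query $p_t(\bc{w_i}_{i\leq t}, z_t) = \nabla \ell(w_t; z_t)$, so the cumulative query $q_t = \sum_{i\leq t} \nabla \ell(w_i;z_i)$ is exactly what the dual averaging update $w_{t+1} = \Pi_{\cW}(w_0 - \eta q_t)$ consumes. Since $\ell(\cdot;z)$ is $G$-Lipschitz, $\norm{\nabla \ell(w;z)} \leq G$ uniformly, so the sensitivity is $B \leq 2G$. Plugging $B$ into Theorem~\ref{thm:prefix-main} with the stated noise scale yields (i) $\rho$-TV stability of the learner, (ii) exact unlearning for Algorithm~\ref{alg:unlearn_partial_query}, and (iii) relative unlearning complexity $\tilde O(\rho)$; since the learner uses $\Theta(n)$ gradient evaluations, the expected unlearning runtime is $\tilde O(\rho n)$, giving the second and third conclusions.

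\textbf{Accuracy via noisy FTRL.} Write $g_t = \nabla \ell(w_t; z_t)$, $S_t = \sum_{i \leq t} g_i$, and let $N_t$ denote the noise that the binary tree attaches to the $t$-th prefix sum, so that the algorithm actually uses $\tilde S_t = S_t + N_t$. By the tree structure, $N_t$ is a sum of at most $\log_2 n$ independent $\cN(0,\sigma^2 \bbI_d)$ vectors, hence $\mathbb{E}\norm{N_t} \leq \sigma \sqrt{d\log n}$ uniformly in $t$. Introduce the noiseless companion iterate $w_{t+1}^\star = \Pi_{\cW}(w_0 - \eta S_t)$; $1$-Lipschitzness of the projection gives $\norm{w_{t+1} - w_{t+1}^\star} \leq \eta \norm{N_t}$. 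Decomposing
\begin{align*}
\sum_{t=1}^n \langle g_t, w_t - u\rangle = \sum_{t=1}^n \langle g_t, w_t^\star - u\rangle + \sum_{t=1}^n \langle g_t, w_t - w_t^\star\rangle,
\end{align*}
the first sum is bounded by the standard dual averaging regret $\tfrac{D^2}{2\eta} + \tfrac{\eta}{2}\sum_t \norm{g_t}^2 \leq \tfrac{D^2}{2\eta} + \tfrac{\eta n G^2}{2}$, while the second is at most $G\eta \sum_t \norm{N_{t-1}}$ by Cauchy--Schwarz and $G$-Lipschitzness, whose expectation is at most $\eta n G \sigma \sqrt{d\log n}$. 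An online-to-batch conversion on $\bar w = \cS(\bc{w_t}_t) = \tfrac{1}{n}\sum_t w_t$, using unbiasedness of $g_t$ conditional on $w_t$ and convexity of $L$, yields
\begin{align*}
\mathbb{E}[L(\bar w;\cD) - L(w^\star;\cD)] \lesssim \frac{D^2}{\eta n} + \eta G^2 + \eta G \sigma \sqrt{d\log n}.
\end{align*}
Substituting $\sigma = \tilde\Theta(G/\rho)$ as dictated by the TV-stability requirement and balancing $\tfrac{D^2}{\eta n}$ against the noise term with $\eta$ as in the theorem statement gives the stated $\tilde O\!\br{GD/\sqrt n + GD\, d^{1/4}/\sqrt{n\rho}}$ rate.

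\textbf{Main obstacle.} The delicate step is the cross term $\sum_t \langle g_t, w_t - w_t^\star\rangle$: because of the adaptive dependence of $g_t$ on the noise history $N_{<t}$, one cannot rely on zero-mean cancellation of the Gaussian perturbations, and the best one can do is the worst-case Cauchy--Schwarz bound in terms of $\mathbb{E}\norm{N_t}$. This is precisely what injects the $d^{1/4}$ factor into the non-smooth rate (a smaller exponent than the $\sqrt d$ one would pay if the prefix noises were independent across $t$, since tree aggregation limits the per-prefix noise to only $\log n$ summed Gaussians). The sensitivity bound $B \leq 2G$ and the $\log_2 n$-depth noise aggregation are both routine; once both are in place, the remainder is a standard perturbation of the FTRL/dual-averaging regret analysis along the lines of \citet{kairouz2021practical}.
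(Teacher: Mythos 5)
Your treatment of the unlearning half matches the paper exactly: you correctly identify $p_t(\{w_i\},z_t)=\nabla\ell(w_t;z_t)$ as the prefix-sum increment, bound the sensitivity by $2G$ via Lipschitzness, and invoke Theorem~\ref{thm:prefix-main} together with the $\Theta(n)$ gradient cost of training to get exact unlearning at $\tilde O(\rho n)$ expected runtime. For the accuracy half, however, you take a different route from the paper -- the paper simply cites Theorem~5.1 of \citet{kairouz2021practical} with $\epsilon$ replaced by $\rho$, whereas you attempt a self-contained perturbation analysis of dual averaging via a noiseless companion iterate $w_t^\star$ -- and that route, as you carry it out, does not actually recover the stated rate for the stated $\eta$.

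Concretely, your cross-term bound contributes $G\eta\,\mathbb{E}\|N_{t-1}\| \le G\eta\sigma\sqrt{d\log n}$ per round to the excess risk. Plugging in the theorem's $\eta = D d^{1/4}\sqrt{\log n}/\bigl(G\sqrt{n\rho}\bigr)$ and $\sigma = 8G\log n/\rho$ gives
\[
G\eta\sigma\sqrt{d\log n} \;=\; \tilde O\!\left(\frac{GD\,d^{3/4}}{\rho^{3/2}\sqrt{n}}\right),
\]
which exceeds the claimed $\tilde O\bigl(GD\,d^{1/4}/\sqrt{n\rho}\bigr)$ by a factor of roughly $\sqrt{d}/\rho$; the sentence ``balancing $D^2/(\eta n)$ against the noise term with $\eta$ as in the theorem statement gives the stated rate'' is therefore incorrect, because the theorem's $\eta$ does not balance $D^2/(\eta n)$ against your cross-term (the $\eta$ that would, $\eta \approx D\sqrt{\rho}\,/\bigl(G\sqrt{n}\,d^{1/4}\log^{3/4}n\bigr)$, is a different expression). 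Either you need the sharper FTRL-with-correlated-noise regret bound of \citet{kairouz2021practical}, in which the noise enters roughly as $D\max_t\|N_t\|/n \approx D\sigma\sqrt{d\log n}/n$ rather than being multiplied by $\eta n$, or you need to re-derive and use the balanced step size; your worst-case Cauchy--Schwarz bound is too loose to be compatible with the theorem's stated parameter choice. A second, smaller gap: Algorithm~\ref{alg:dual-averaging} outputs $\hat w = w_T$ (the last iterate), whereas your online-to-batch argument produces a guarantee for the uniform average $\tfrac1n\sum_t w_t$; converting to a last-iterate guarantee in the nonsmooth case requires an additional argument that you do not supply (the paper sidesteps this by deferring to \citet{kairouz2021practical}).
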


\subsection{Convex GLM with JL Method}
\begin{algorithm}[h!]
\caption{JL Method}
\label{alg:jlmethod}
\begin{algorithmic}[1]
\REQUIRE Dataset $S$, 
loss function $\ell$, base algorithm $\cA$, JL matrix $\Phi \in \bbR^{d\times k}$, noise variance $\sigma^2$
\STATE $\Phi S = \bc{\Phi x_i}_{i=1}^n$
\STATE $\tilde w = \cA(\ell, \Phi S, 2G\norm{\cX}, 2H\norm{\cX}^2, \sigma)$
\ENSURE{$\hat w = \Phi^\top \tilde w$}
\end{algorithmic}
\end{algorithm}

This JL method, proposed in
\ifarxiv
\cite{arora2022differentially},
\else
\citet{arora2022differentially},
\fi
is a general technique to get dimension-independent rates for unconstrained convex GLMs from algorithms giving dimension-dependent rate for constrained (general) convex losses.
The method,  described in Algorithm \ref{alg:jlmethod}, simply embeds the dataset into a low dimensional space, via a JL matrix $\Phi$, and then runs a base algorithm on the low dimensional dataset. 

\ifarxiv
\paragraph{Smooth, Lipschitz GLMs:}
\else
\textbf{Smooth, Lipschitz GLMs:}
\fi
We assume that $\phi_y : \bbR \rightarrow \bbR$ is convex, $H$-smooth and $G$-Lipschitz for all $y \in \cY$.
We give the following result in this case using VR-Frank Wolfe as the base algorithm.
\begin{theorem}
\label{thm:jl-smooth}
Let $\rho\leq 1$ and $\ell:\cW \times \cX \times \cY \rightarrow \bbR$ be an $H$-smooth, $G$-Lipschitz convex GLM loss function.
Algorithm \ref{alg:jlmethod} instantiated with
Algorithm \ref{alg:vr-fw}, as the learning algorithm, run with $\sigma^2 =\tilde O\br{ \frac{\br{H\norm{\cX}^2\norm{w^*} + G\norm{\cX}}^2}{\rho^2}}$, $t_0=1$, $\eta_t = \frac{1}{t+1}$ and 
$k = \tilde O\br{\br{\frac{H\norm{\cX}^2\norm{w^*}}{\br{H\norm{\cX}^2\norm{w^*}+G\norm{\cX}}}}^{2/3}\br{n\rho}^{2/3}}$
on a dataset $S$ of $n$ samples, drawn i.i.d. from $\cD$, outputs $\hat w$  with excess population risk bounded as, 
\ifarxiv
 \begin{align*}
        \mathbb{E}\left[L(\hat w;\cD) - L(w^*;
        \cD)\right] &= \tilde O\Bigg(\frac{\br{G\norm{\cX}+H\norm{\cX}^2\norm{w^*}}\norm{w^*}}{\sqrt{n}} \\&+
\frac{H^{1/3}G^{2/3}\norm{w^*}^{4/3}\norm{\cX}^{4/3} + H\norm{\cX}^{2}\norm{w^*}^2}{(n\rho)^{2/3}}\Bigg).
\end{align*}
\else
{{\small
 \begin{align*}
        &\mathbb{E}\left[L(\hat w;\cD) - L(w^*;
        \cD)\right] = \tilde O\Bigg(\frac{\br{G\norm{\cX}+H\norm{\cX}^2\norm{w^*}}\norm{w^*}}{\sqrt{n}} \\&+
\frac{H^{1/3}G^{2/3}\norm{w^*}^{4/3}\norm{\cX}^{4/3} + H\norm{\cX}^{2}\norm{w^*}^2}{(n\rho)^{2/3}}\Bigg).
\end{align*}
}}
\fi

Furthermore, the corresponding unlearning Algorithm  \ref{alg:unlearn_partial_query} (with query and update functions as specified in the learning algorithm), satisfies exact unlearning with $\tilde O\br{\rho n}$ expected runtime .
\end{theorem}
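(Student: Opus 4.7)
The plan is to combine the JL embedding analysis from \citet{arora2022differentially} with \cref{thm:fw} in a modular way. Using the GLM structure, $\ell(\Phi^\top \tilde w;(x,y)) = \phi_y(\langle \Phi^\top \tilde w, x\rangle)$, so the objective factors through the inner products $\langle \tilde w, \Phi x\rangle$ plus a JL approximation error. I would first reformulate the base call: Algorithm \ref{alg:vr-fw} is applied to the induced loss $\tilde \ell(\tilde w;(\Phi x,y)) := \phi_y(\langle \tilde w, \Phi x\rangle)$ over a $k$-dim Euclidean ball of radius $O(\|w^*\|)$ containing $\Phi w^*$. A short chain-rule computation shows that, conditional on the event $\max_i \|\Phi x_i\| \le 2\|\cX\|$ (which holds with high probability under $(\beta,\gamma)$-JL for suitable $\gamma = 1/\mathrm{poly}(n)$), $\tilde\ell$ is $2G\|\cX\|$-Lipschitz and $4H\|\cX\|^2$-smooth in $\tilde w$; this matches the parameters passed to $\cA$ in Algorithm \ref{alg:jlmethod}.

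Next I would invoke \cref{thm:fw} on this $k$-dim problem with $D = O(\|w^*\|)$, yielding
\[
\mathbb{E}\bigl[\tilde L(\tilde w;\cD') - \tilde L(\tilde w^*;\cD')\bigr] = \tilde O\Bigl(\bigl(G\|\cX\|+H\|\cX\|^2\|w^*\|\bigr)\|w^*\|\Bigl(\tfrac{1}{\sqrt{n}}+\tfrac{\sqrt{k}}{n\rho}\Bigr)\Bigr),
\]
where $\cD'$ is the pushforward of $\cD$ under $\Phi$ and $\tilde w^* \in \arg\min \tilde L(\cdot;\cD')$ inside the ball. The key bridge is bounding $L(\Phi^\top \tilde w;\cD) - L(w^*;\cD)$ by the above plus a JL-induced ``embedding bias'' term. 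Comparing at the specific point $\Phi w^*$ (rather than $\tilde w^*$), the gap $L(\Phi^\top \tilde w;\cD) - \tilde L(\Phi w^*;\cD')$ and $\tilde L(\Phi w^*;\cD') - L(w^*;\cD)$ are each controlled via a Taylor expansion: using $H$-smoothness of $\phi_y$, the residual is of the order $H\,\mathbb{E}[(\langle \Phi^\top \tilde w,x\rangle - \langle \tilde w, \Phi x\rangle)^2]$ and similarly for $w^*$, which by the $(\beta,\gamma)$-JL property is $\tilde O(H\|\cX\|^2\|w^*\|^2\beta^2)$. With a standard JL construction, $\beta = \tilde O(1/\sqrt{k})$, producing an additive bias of order $H\|\cX\|^2\|w^*\|^2/\sqrt{k}$. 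I would also need the Lipschitz lower-order piece $G\|\cX\|\|w^*\|/\sqrt{k}$ using smoothness only on the quadratic part, which is why both terms in the stated rate appear.

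Optimizing in $k$ by balancing the $\sqrt{k}/(n\rho)$ term in the optimization error against the $1/\sqrt{k}$ embedding bias yields the choice $k = \tilde\Theta\bigl(\bigl(\tfrac{H\|\cX\|^2\|w^*\|}{H\|\cX\|^2\|w^*\| + G\|\cX\|}\bigr)^{2/3} (n\rho)^{2/3}\bigr)$ stated in the theorem, after which the sum of the two terms gives exactly the displayed $(n\rho)^{-2/3}$ rate (both monomials in the second line arise depending on whether the smoothness or Lipschitz part dominates at the optimum). The $\tilde O(1/\sqrt{n})$ term is inherited directly from \cref{thm:fw}.

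For exact unlearning and runtime, the argument is entirely modular: $\Phi$ is data-independent, so the randomness in the learning/unlearning procedures conditional on $\Phi$ is exactly that of Algorithms \ref{alg:learn_partial_query}/\ref{alg:unlearn_partial_query} applied to the fixed projected dataset $\Phi S$. By \cref{thm:prefix-main}(2) applied to the VR-FW instantiation from \cref{thm:fw}, the induced procedure satisfies exact unlearning; post-composing with $\Phi^\top$ preserves indistinguishability since it is a deterministic function of the output. Likewise, by \cref{thm:prefix-main}(3) the relative unlearning complexity is $\tilde O(\rho)$, so the expected number of gradient queries for unlearning is $\tilde O(\rho n)$ as claimed (the one-time cost of computing $\Phi S$ is absorbed into the retraining baseline). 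The main obstacle I anticipate is the JL-bias calculation: pushing it through uniformly enough that the ``good event'' $\max_i\|\Phi x_i\| \le 2\|\cX\|$ holds with probability $1 - 1/\mathrm{poly}(n)$ while still giving $\beta = \tilde O(1/\sqrt{k})$ simultaneously for all relevant inner products, which is where the logarithmic factors hidden in $\tilde O$ come from.
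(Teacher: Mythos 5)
Your high-level plan — run VR-Frank-Wolfe on $\Phi S$ over a ball of radius $O(\|w^*\|)$ and split the excess risk into an optimization piece plus an embedding bias evaluated at the comparator $\Phi w^*$ — matches the paper's proof. The gap is in the bias calculation, and it is consequential for the rate. First, an arithmetic slip: you correctly state the second-order Taylor residual is $\tilde O\big(H\|\cX\|^2\|w^*\|^2\beta^2\big)$ with $\beta = \tilde O(1/\sqrt{k})$, yet then conclude the bias is $H\|\cX\|^2\|w^*\|^2/\sqrt{k}$; since $\beta^2 = \tilde O(1/k)$, it should be $H\|\cX\|^2\|w^*\|^2/k$. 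Second, and more substantively, you contemplate a first-order ``Lipschitz piece'' $G\|\cX\|\|w^*\|/\sqrt{k}$ and attribute one of the two monomials in the final rate to it. This is not how the paper proceeds and does not produce the stated bound: the whole reason the unconstrained GLM setting is used is that $\nabla L(w^*;\cD)=\mathbb{E}\big[\phi_y'(\ip{w^*}{x})\,x\big]=0$, which makes the expected first-order term $\mathbb{E}\big[\phi_y'(\ip{w^*}{x})\ip{\Phi^\top\Phi w^*-w^*}{x}\big]$ vanish exactly, leaving only the $\tilde H\|w^*\|^2/k$ second-order residual. If you keep any $\Theta(1/\sqrt{k})$ bias, balancing against the optimization term $\sqrt{k}/(n\rho)$ from Theorem~\ref{thm:fw} forces $k\asymp n\rho$ and gives an $(n\rho)^{-1/2}$ rate, not $(n\rho)^{-2/3}$. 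The two monomials $H^{1/3}G^{2/3}\|\cX\|^{4/3}\|w^*\|^{4/3}$ and $H\|\cX\|^2\|w^*\|^2$ actually come from expanding $(\tilde G+\tilde H\|w^*\|)^{2/3}$ after plugging the optimizing $k$ back into the single second-order bias term — not from a separate Lipschitz bias. (Also, the residual you wrote for $\tilde w$, namely $H\,\mathbb{E}[(\ip{\Phi^\top\tilde w}{x}-\ip{\tilde w}{\Phi x})^2]$, is identically zero by the adjoint identity; the only nontrivial bias is at~$w^*$.)

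The unlearning half of your argument is essentially right: conditional on the data-independent $\Phi$, the procedure is exactly the prefix-sum tree on $\Phi S$, exact unlearning is preserved under the deterministic post-map $\Phi^\top$, and the reflection-coupling correctness from Lemma~\ref{lem:reflection-main} does not require bounded sensitivity. But for the $\tilde O(\rho n)$ runtime claim you additionally need the \emph{unconditional} TV-stability of the JL-composed method, since the sensitivity bound used to set $\sigma$ holds only on the good JL event $\max_i \|\Phi x_i\| \le 2\|\cX\|$. The paper handles this with Lemma~\ref{lem:jl-tv-stability}, which pays an extra $\rho/2$ for the JL failure probability; without some such accounting, the $\tilde O(\rho)$ bound on the retraining probability in Lemma~\ref{lem:prob-restrining} does not directly apply.
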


\ifarxiv
\paragraph{\textbf{Lipschitz GLMs:}}
\else
\textbf{Lipschitz GLMs:}
\fi
We assume that $\phi_y : \bbR \rightarrow \bbR$ is convex and $G$-Lipschitz for all $y \in \cY$.
We give the following result in this case using Dual Averaging as the base algorithm.

\begin{theorem}
\label{thm:jl-lipschitz}
Let $\rho\leq 1$ and $\ell:\cW \times \cX \times \cY \rightarrow \bbR$ be a $G$-Lipschitz convex GLM loss function.
Algorithm \ref{alg:jlmethod} with
Algorithm \ref{alg:dual-averaging} as the sub-routine,
as the learning algorithm, run with 
$\sigma^2 = O\br{\frac{G^2\norm{\cX}^2}{\rho^2}}$, $t_0=1$, $\eta = \frac{\norm{w^*}d^{1/4}\sqrt{\log{n}}}{G\norm{\cX}\sqrt{n\rho}}$
and $k = \sqrt{n\rho}$
on a dataset $S$ of $n$ samples sampled i.i.d. from $\cD$ outputs $\hat w$, with excess population risk bounded as,
\ifarxiv
  \begin{align*}
        \mathbb{E}\left[L(\hat w;\cD) - L(w^*;
        \cD)\right] = \tilde O\br{G\norm{\cX}\norm{w^*}\Big(\frac{1}{\sqrt{n}} + \frac{1}{\br{n\rho}^{1/3}}\Big)}.
    \end{align*}
\else
{\small
  \begin{align*}
        \mathbb{E}\left[L(\hat w;\cD) - L(w^*;
        \cD)\right] = \tilde O\Big(G\norm{\cX}\norm{w^*}\Big(\frac{1}{\sqrt{n}} + \frac{1}{\br{n\rho}^{1/3}}\Big)\Big).
    \end{align*}
}
\fi
Furthermore, the corresponding unlearning Algorithm  \ref{alg:unlearn_partial_query} (with query and update functions as specified in the learning algorithm), satisfies exact unlearning with $\tilde O\br{\rho n}$ expected runtime.
\end{theorem}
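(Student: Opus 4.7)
The plan is to reduce the unconstrained Lipschitz GLM problem in $\bbR^d$ to a constrained Lipschitz SCO problem in the $k$-dimensional projected space, invoke Theorem~\ref{thm:dual-averaging} as a black box in the low-dimensional setting, and separately control the JL approximation error. The key GLM observation is that $\ell(\Phi^\top \tilde w; (x,y)) = \phi_y(\ip{\Phi^\top \tilde w}{x}) = \phi_y(\ip{\tilde w}{\Phi x})$, so the population risk of the returned $\hat w = \Phi^\top \tilde w$ on $\cD$ equals the population risk of $\tilde w$ on the induced distribution $\Phi \cD$ over $(\Phi x, y)$. Moreover, since $\phi_y$ is $G$-Lipschitz and $\norm{\Phi x}\leq 2\norm{\cX}$ with high probability by the JL property, the induced projected loss is $2G\norm{\cX}$-Lipschitz in $\tilde w$, justifying the Lipschitz constant passed to the base algorithm in Algorithm~\ref{alg:jlmethod}.

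Next, I would decompose the excess population risk as
\begin{align*}
L(\hat w;\cD) - L(w^*;\cD) &= \underbrace{\br{L_\Phi(\tilde w;\Phi\cD) - L_\Phi(\Phi w^*;\Phi\cD)}}_{\text{(i)}} \\
&\quad + \underbrace{\br{L_\Phi(\Phi w^*;\Phi\cD) - L(w^*;\cD)}}_{\text{(ii)}},
\end{align*}
where $L_\Phi$ denotes population risk in the projected problem. Term (ii) is controlled by combining the $G$-Lipschitzness of $\phi_y$ with the JL guarantee applied to the $n$ pairs $(w^*, x_i)$: with $\beta = \tilde O(1/\sqrt{k})$ (obtained via union bound with $\gamma \leq 1/n$), one has $\abs{\ip{\Phi w^*}{\Phi x} - \ip{w^*}{x}} \leq \beta \norm{w^*}\norm{\cX}$, giving (ii) $= \tilde O(G\norm{\cX}\norm{w^*}/\sqrt{k})$. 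Term (i) is bounded by invoking Theorem~\ref{thm:dual-averaging} in dimension $k$ against the fixed comparator $\Phi w^*$ (whose norm is at most $(1+\beta)\norm{w^*}$ by JL), with effective Lipschitz constant $2G\norm{\cX}$ and effective diameter $O(\norm{w^*})$; this yields (i) $= \tilde O\br{G\norm{\cX}\norm{w^*}\br{\frac{1}{\sqrt{n}} + \frac{k^{1/4}}{\sqrt{n\rho}}}}$.

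Balancing the $\frac{k^{1/4}}{\sqrt{n\rho}}$ contribution of (i) against the $\frac{1}{\sqrt{k}}$ contribution of (ii) makes both terms $\tilde O(1/(n\rho)^{1/3})$, producing the claimed excess risk bound. The exact unlearning guarantee and $\tilde O(\rho n)$ expected runtime then follow directly from Theorem~\ref{thm:dual-averaging} applied to the projected dataset $\Phi S$: since $\Phi$ is a one-time preprocessing step independent of $S$, deleting $z=(x,y)$ from $S$ is equivalent to deleting $(\Phi x, y)$ from $\Phi S$, and the base unlearning procedure needs no modification beyond being invoked on the projected data structure. The main technical care lies in two places: (a) making the JL approximations hold uniformly over all $n$ data points (absorbed into the $\log n$ overhead in $\beta$), and (b) invoking the comparator form of Dual Averaging's regret/risk bound so that term (i) is measured against $\Phi w^*$ rather than the constrained minimizer in $\bbR^k$; both are standard but warrant explicit verification.
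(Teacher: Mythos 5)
Your decomposition, the reduction to Dual Averaging over the projected data with effective Lipschitz constant $2G\norm{\cX}$ and diameter $\Theta(\norm{w^*})$, and the JL error term of order $1/\sqrt{k}$ all match the paper's proof closely; the balancing you carry out gives $k=(n\rho)^{2/3}$, which is in fact the value needed for the stated rate (the $k=\sqrt{n\rho}$ in the theorem statement appears to be a typo that would yield $(n\rho)^{-1/4}$). The one point you gloss over is that the TV-stability of the JL method is not inherited directly from Theorem~\ref{thm:dual-averaging}, since the $2G\norm{\cX}$ Lipschitz bound on the projected data holds only on the JL-success event; the paper absorbs the failure probability into the TV-stability parameter via Lemma~\ref{lem:jl-tv-stability} before appealing to Lemma~\ref{lem:prob-restrining} for the retraining probability, and this step is needed to justify the $\tilde O(\rho n)$ unlearning runtime (the unlearning \emph{correctness} part, as you note and as the paper also observes, does hold without any sensitivity bound).
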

\section{SCO in Dynamic Streams}
\label{sec:streaming}
In this section, we extend our previous results to dynamic streams
wherein we observe a 
sequence of insertions and deletions, starting with potentially zero data points. 
We assume that the number of available points throughout is positive and the data points are i.i.d. from an an unknown distribution as well as the requests are chosen independent of the algorithm.

To give a simple and unified presentation, let the accuracy, say expected excess population risk, of the $\rho$-TV stable Algorithm \ref{alg:learn_partial_query} with a dataset $S$ be denoted as, $\alpha(\rho, \abs{S}; \cP)$ where $\cP$ denotes problem specific parameters such as Lipschitzness, diameter etc.

We present two techniques for dynamic streams;  one of them satisfies exact unlearning but has a worse update time; this is similar to 
\ifarxiv
\cite{ullah2021machine}
\else
\citet{ullah2021machine}
\fi
and is deferred to \cref{app:streaming}.
The other, presented below, satisfies weak unlearning (see \cref{defn:weak-exact-learning}) with better update time.
A key component to both are \textit{anytime} guarantees, which hold at every time-point in the stream, for any length of the stream.

\ifarxiv
\paragraph{Anytime binary tree mechanism:}
\else
\textbf{Anytime binary tree mechanism:}
\fi
In the previous section, the depth of the initialized tree and the noise variance $\sigma^2$, both were chosen as a function of the dataset size $n$.
However, 
the tree can be easily built in an online manner as in prior work of
\ifarxiv
\cite{guha2013nearly}. 
\else
\citet{guha2013nearly}.
\fi
For setting the noise variance: for target $\rho$-TV stability,
we distribute the noise budget exponentially along the height of the tree; specifically, the leaf node contribute to $\rho/2$ TV stability, the nodes above them  $\rho/4$ and so on. In this way, the final tree satisfies $\rho$-TV stability for any value of $n$.

\ifarxiv
\paragraph{Anytime accuracy:}
\else
\textbf{Anytime accuracy:} 
\fi
The other problem of changing data size is that the internal parameters of algorithm (step size, in our case) may be set as a function of $n$ for desirable accuracy guarantees.
Fortunately, the two algorithms that we consider, VR-Frank Wolfe and Dual Averaging, have known horizon-oblivious parameter settings \cite{orabona2019modern}. Their JL counterparts on the other hand, require setting the embedding dimension as a function of $n$, and thus not applicable unless we assume that the number of data points throughout the stream is $\Theta(n)$.

\subsection{Weak Unlearning in Dynamic Streams}
We first argue in what way insertions handled in
\ifarxiv
\cite{ullah2021machine}
\else
\citet{ullah2021machine}
\fi
is deficient. 
The main reason is that they require insertions to also satisfy the unlearning criterion: the state of the system upon insertion is instinguishable to the state had the inserted point being present to begin with. However, this is an overkill; adding new points 
simply serve to yield improved statistical accuracy. 
Furthermore, methods which allow adding new points, are abound, particularly in the stochastic optimization setting,
sometimes known as \textit{incremental} methods. 
Importantly, in most cases, the insertion time of these methods is constant (in $n$). Hence, a natural question is whether, for dynamic streams,
can  we design unlearning methods in which we pay for update time only in proportion to the number of deletions? Our result shows that we can, albeit under the weak unlearning (see Definition \ref{defn:weak-exact-learning}) guarantee.

Specifically, our procedure requires \textit{hiding} the order in which data points are processed. Intuitively, an incremental method typically processes the newest data point the last. This ordering is problematic to our unlearning procedure, since if some point is to deleted, then we can no longer replace it with the last point, as we did before, 
since that would result in a different order.
Our main result is as follows.

\begin{theorem}
\label{thm:weak-streaming}
In the dynamic streaming setting with $R$ requests, using  anytime incremental learning and unlearning algorithms, Algorithm \ref{alg:learn_partial_query} and \ref{alg:unlearn_partial_query}, without permuting the dataset, the following are true.
\ifarxiv
\begin{enumerate}
    \item It satisfies weak unlearning at every time point in the stream.
    \item The accuracy of the output $\hat w_i$ at time point $i$, with corresponding dataset $S_i$, is
    $$\mathbb{E}[L(\hat w_i;\cD)] - \min_{w}L(w;\cD) = \alpha(\rho,\abs{S_i};\cP)$$
    \item The number of times retraining is triggered, for $V$ unlearning requests is at most $\tilde O(\rho V)$
\end{enumerate}
\else
\begin{CompactEnumerate}
% \begin{enumerate}
    \item It satisfies weak unlearning at every time point in the stream.
    \item The accuracy of the output $\hat w_i$ at time point $i$, with corresponding dataset $S_i$, is
    $$\mathbb{E}[L(\hat w_i;\cD)] - \min_{w}L(w;\cD) = \alpha(\rho,\abs{S_i};\cP)$$
    \item The number of times retraining is triggered, for $V$ unlearning requests is at most $\tilde O(\rho V)$
% \end{enumerate}
\end{CompactEnumerate}
\fi
\end{theorem}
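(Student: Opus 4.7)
The plan is to argue each of the three claims in sequence. First, I would set up the anytime framework sketched in the preceding discussion: insertions are handled by incrementally appending a leaf with its associated noisy rollups (Algorithm \ref{alg:learn_partial_query} with $t_0$ advanced by one), and the noise variance is distributed exponentially along the height of the tree so that the current tree satisfies $\rho$-TV stability regardless of its size. For the weak unlearning claim, I would argue by induction over the length of the request stream that it suffices to prove the weak unlearning guarantee for each individual deletion, taking as the retraining baseline the anytime algorithm applied to the stream with the deleted request suppressed. Fix a deletion of $z_j$ from the current dataset $S_i$. Following Theorem \ref{thm:prefix-main}, swap the point at the deletion leaf $s$ with the current last leaf's point $z'$, adjust the unperturbed prefix sums along the root-path from $s$ by subtracting $g = p_j(\cdot, z_j)$ and adding $g' = p_j(\cdot, z')$, and perform the rejection-sampling-with-reflection step at each level. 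The crucial point is that weak unlearning only demands distributional equivalence of the returned model $\hat w$, not of the internal tree ordering. Since $\hat w$ is a measurable function of the noisy prefix-sum responses read from the tree and of the selector's internal randomness, and the rejection-sampling-with-reflection step provides a maximal coupling that preserves the target marginal law of each such response, the law of $\hat w$ after unlearning matches that of $\hat w$ obtained from retraining on $S_i \setminus \bc{z_j}$.

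The accuracy claim reduces to anytime accuracy of the base algorithms. Both VR-Frank-Wolfe (Algorithm \ref{alg:vr-fw}) and Dual Averaging (Algorithm \ref{alg:dual-averaging}) admit horizon-oblivious step-size and averaging schedules, so the bounds of Theorems \ref{thm:fw} and \ref{thm:dual-averaging} remain valid at each time point with $n$ replaced by $\abs{S_i}$. Combined with the anytime $\rho$-TV-stability of the tree coming from the exponentially distributed per-level noise budget, this yields the claimed $\alpha(\rho, \abs{S_i}; \cP)$ bound. For the retraining count, observe that a single deletion triggers retraining only if at least one of the at most $O(\log n)$ rejection-sampling steps along the root-path of the deleted leaf fails. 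By the same calculation as in Theorem \ref{thm:prefix-main}(3), the per-level failure probability is $\tilde O(\rho / \log n)$, so a single deletion triggers retraining with probability $\tilde O(\rho)$. Summing over $V$ deletions and applying linearity of expectation yields the stated $\tilde O(\rho V)$ bound.

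The main obstacle I anticipate is the weak unlearning step: without the initial random permutation used in the static setting, the tree produced after unlearning has a different internal leaf-to-data assignment than the tree produced by retraining in stream order, so the coupling must be analyzed to ensure it delivers the correct \emph{joint} distribution over the entire sequence of noisy prefix-sum responses consumed by the selector, not merely correct marginals at each level. This is precisely what ``hiding the order'' amounts to: once the model output is expressed only as a function of the sequence of noisy prefix-sum responses (which is common to both the unlearning and retraining processes up to the coupling), the weak unlearning guarantee falls out, but establishing this invariance formally appears to require a careful step-by-step coupling argument along the root-path, and is where I expect the bulk of the technical work to lie.
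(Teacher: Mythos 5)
Your proposal follows essentially the same route the paper takes: part (1) via the coupling argument of Lemma \ref{lem:reflection-main}, part (2) via the anytime accuracy of the base algorithms, and part (3) via bounding the per-deletion retraining probability by $\tilde O(\rho)$ and summing over the $V$ deletions (the paper makes this precise by invoking Proposition 8 of \citet{ullah2021machine}, which is essentially a linearity-of-expectation argument applied conditionally along the stream, the same idea you use).

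The one place you add content beyond the paper's own proof is the ``main obstacle'' you flag at the end of part (1), and it deserves emphasis: without the initial random permutation, the unlearning algorithm's swap of $z_j$ with the last leaf $z'$ produces a tree whose leaf-to-data assignment is $(z_1,\ldots,z_{j-1}, z', z_{j+1},\ldots,z_{n-1})$, whereas the anytime incremental reference retraining on $S\setminus\{z_j\}$ in stream order would process $(z_1,\ldots,z_{j-1}, z_{j+1},\ldots,z_{n-1}, z')$. Lemma \ref{lem:reflection-main} transports to $\tilde Q^{\mathbf{p}^{(2)}}$, i.e.\ to the distribution of responses under the \emph{swapped} ordering, not the stream ordering, and the model $\hat w$ of an iterative method is in general not ordering-invariant. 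The paper's proof of claim (1) asserts the result ``follows mainly from Lemma \ref{lem:reflection-main} \ldots since the dataset is not permuted, the permutation is simply identity,'' which elides exactly this mismatch; the preceding text acknowledges it informally under ``hiding the order'' but does not formally close the gap either. So you have correctly identified a genuine loose end, and it is the right place to expect the bulk of the work (e.g., one would need either to make the reference $\cA(S')$ itself agnostic to the internal leaf assignment, or to show that the selector's output law is invariant under the specific transposition that the swap effects). Your anticipation that this requires a more careful argument is warranted; as stated, neither your proposal nor the paper's proof resolves it explicitly.

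One minor imprecision in part (3): with the anytime exponentially-distributed per-level noise budget the per-level rejection probability is geometric in the depth (summing to at most $\rho$), not uniformly $\tilde O(\rho/\log n)$; the final $\tilde O(\rho)$ per deletion is the same, so this does not affect the conclusion.
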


Importantly, in the above guarantee, we only pay for the number of unlearning requests $V$ rather than the number of requests $R$.
\section{Conclusion}
In this paper, we proposed a general framework for designing unlearning algorithms for learning algorithms which can be viewed as performing adaptive query release on datasets. We applied this to yield improved guarantees for unlearning in various settings of stochastic convex optimization. All of our results (in the main text) are obtained by studying the class of prefix-sum queries, so a natural future direction is to extend it to more query classes, which could be useful for other problems.
\section*{Acknowledgements}
This research was supported, in part, by NSF BIGDATA
award IIS-1838139 and NSF CAREER award IIS-1943251.

\newpage
\appendix

\bibliography{main}
\bibliographystyle{plainnat}
% \bibliographystyle{alpha}
% \bibliography{iclr2023_conference}
% \bibliographystyle{iclr2023_conference}

\newpage
\onecolumn
\appendix
\section{Additional Preliminaries}
\label{app:general}

We recall some concepts from differential privacy which will be useful in our algorithmic techniques.

\begin{definition}
An algorithm $\cA$ satisfies $(\alpha,\epsilon(\alpha))$-R\'enyi Differential Privacy (RDP), if for any two datasets $S$ and $S'$ which differ in one data point ($\abs{S\Delta S'}=1$), the $\alpha$-R\'enyi Divergence between $\cA(S)$ and $\cA(S')$, with probability densities $\phi_{\cA(S)}$ and $\phi_{\cA(S')}$, defined as follows:
\begin{align*}
    D_{\alpha}\br{\cA(S)\Vert \cA(S')} = \frac{1}{\alpha-1}\ln{\br{\int_{\text{Range}(\cA)}\phi_{\cA(S)}(x)^{\alpha}\phi_{\cA(S')}(x)^{1-\alpha}dx}}
\end{align*}
is bounded as, $D_{\alpha}(\cA(S)\Vert \cA(S'))\leq \epsilon(\alpha)$.
\end{definition}

RDP satisfies many desirable properties such as adaptive and parallel composition and amplification by sub-sampling \citep{mironov2017renyi,wang2019subsampled}.
Furthermore, we give the following lemma which relates TV stability to RDP.

\begin{lemma}[RDP~$\implies$~TV-stability]
\label{lem:rdp-to-tv}
If an algorithm satisfies $(\alpha, \epsilon(\alpha))$-RDP, then it satisfies $\textstyle\Big(1-\normalfont\textrm{exp}\Big({-\underset{\alpha\downarrow 1}{\lim}~\epsilon(\alpha)}\Big)\Big)^\frac{1}{2}$-TV stability.
\end{lemma}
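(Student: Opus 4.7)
The plan is to prove the implication in two steps: (i) pass from the RDP hypothesis to a bound on the KL divergence by letting $\alpha \downarrow 1$, and (ii) convert the KL bound into a TV bound via the Bretagnolle--Huber inequality, which is what produces exactly the $\sqrt{1-e^{-\epsilon}}$ form in the statement.

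For step (i), fix any neighbouring datasets $S, S'$ (with $\abs{S \Delta S'} = 1$) and set $P := \cA(S)$, $Q := \cA(S')$. A classical property of R\'enyi divergence is that $\alpha \mapsto D_\alpha(P \| Q)$ is non-decreasing in $\alpha \geq 1$ and right-continuous at $\alpha = 1$, with
\begin{align*}
\lim_{\alpha \downarrow 1} D_\alpha(P \| Q) \;=\; D_{\mathrm{KL}}(P \| Q);
\end{align*}
this follows by expanding $\tfrac{1}{\alpha-1}\ln \int \phi_P^\alpha \phi_Q^{1-\alpha}$ around $\alpha = 1$ and applying L'H\^opital. By hypothesis $D_\alpha(P \| Q) \leq \epsilon(\alpha)$ for all $\alpha > 1$, so taking $\alpha \downarrow 1$ on both sides yields $D_{\mathrm{KL}}(P \| Q) \leq \epsilon_0 := \lim_{\alpha \downarrow 1} \epsilon(\alpha)$.

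For step (ii), I would invoke the Bretagnolle--Huber inequality, which states that for any two probability measures $P, Q$,
\begin{align*}
\mathsf{TV}(P,Q) \;\leq\; \sqrt{1 - \exp\br{-D_{\mathrm{KL}}(P \| Q)}}.
\end{align*}
Since $x \mapsto \sqrt{1 - e^{-x}}$ is non-decreasing, combining with the KL bound from step (i) gives $\mathsf{TV}(\cA(S), \cA(S')) \leq \sqrt{1 - e^{-\epsilon_0}}$, which is precisely the claimed $\bigl(1 - \exp(-\lim_{\alpha \downarrow 1}\epsilon(\alpha))\bigr)^{1/2}$-TV stability.

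The only (mild) obstacle is technical: justifying that we may pass to the limit inside the inequality in step (i). Because $\alpha \mapsto D_\alpha(P \| Q)$ is monotone and lower semicontinuous on $(1,\infty)$ with the correct right-limit at $1$, the hypothesis $D_\alpha(P \| Q) \leq \epsilon(\alpha)$ for every $\alpha > 1$ transfers directly to the limit, provided $\epsilon(\alpha)$ is well-behaved (e.g., continuous) as $\alpha \downarrow 1$. This is the case in every RDP analysis relevant to this paper, notably the Gaussian mechanism underlying Algorithm~\ref{alg:learn_partial_query}, for which $\epsilon(\alpha)$ is linear in $\alpha$. The Bretagnolle--Huber step itself is a standard one-line tool, so no further work is required.
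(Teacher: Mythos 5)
Your proposal matches the paper's own proof: both pass from the RDP hypothesis to a KL bound via the right-limit $\lim_{\alpha\downarrow 1} D_\alpha(P\Vert Q) = \mathrm{KL}(P\Vert Q)$ (Theorem 4 of van Erven--Harremo\"es), and then convert KL to TV using the Bretagnolle--Huber inequality. The only cosmetic difference is that you spell out the monotonicity/continuity justification for exchanging the limit with the inequality, which the paper leaves implicit.
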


\begin{proof}[Proof of Lemma \ref{lem:rdp-to-tv}]
From Theorem 4 in
\ifarxiv
\cite{van2014renyi},
\else
\citet{van2014renyi},
\fi
we have that $\underset{\alpha\downarrow 1}{\lim}\D_{\alpha}(P\Vert Q) = \text{KL}\br{P\Vert Q}$, where $KL(\cdot \Vert \cdot)$ denotes the Kullback-Leibler (KL) divergence between the two distributions. Finally, we relate the TV distance with the KL divergence using Bretagnolle–Huber bound \citep{bretagnolle1979estimation, canonne2022short} which gives the claimed bound.
\end{proof}
\section{Unlearning for Linear Queries}
\label{sec:linear-queries}
A basic form of a query we consider is a linear query, defined as follows.

\begin{definition}
A query $q:\cW^* \times \cZ^n \rightarrow \cW$ is a linear query if $q\br{\bc{w_i}_{i};S} = \sum_{j\in S}p_j\br{\bc{w_i}_{i}; z_j}$ for some functions $p_j:\cW^*\times \cZ \rightarrow \cW$.
\end{definition}

We consider the class of $B$-sensitive linear queries.
We give the TV stable modified learning procedure in Algorithm \ref{alg:learn_linear_query} which basically releases the linear queries perturbed with Gaussian noise of appropriate variance.

\begin{algorithm}[H]
\caption{
$\mathsf{LearnLinearQueries}(w_{t_{0}},t_0)$
}
\label{alg:learn_linear_query}
\begin{algorithmic}[1]
\REQUIRE Dataset $S$, initial iteration $t_0$, steps $T$, query functions $\bc{q_t(\cdot)}_{t\leq T}$,
update functions $\bc{U_t(\cdot)}_{t\leq T}$, selector function $\cS(\cdot)$,
noise variance $\sigma^2$
\STATE  Initialize model $w_1 \in \cW$
\FOR{$t=t_0$ to $T-1$}
\STATE Query the dataset $u_t = q_t\br{\bc{w_i}_{i\leq t};S}$.
\STATE  Perturb: $r_t = u_t +  \xi_t$ where 
$\xi_t \sim \cN(0,\sigma^2 \bbI_d)$.
\STATE Update $w_{t+1} = U_t(\bc{w_i}_{i\leq t}, r_t)$ 
\STATE $\mathsf{Save}\br{u_t,r_t,w_{t+1}}$
\ENDFOR
\ENSURE{$\hat w = \cS\br{\bc{w_t}_{t\leq T}}$}
\end{algorithmic}
\end{algorithm}

Note that the underlying probability distribution that the above learning algorithm samples from is a Markov chain.
The corresponding unlearning procedure, described in Algorithm \ref{alg:unlearn_linear_query}, is based on constructing a coupling between the Markov chains for the current dataset and the dataset without the to-be-deleted point.
In particular, we start from the first iteration, perform rejection sampling, if it results in acceptance, then we proceed to the second iteration and so on. If some iteration results in rejection, then we do the reflection step, and continue retraining from there on. 

\begin{algorithm}[H]
\caption{Unlearning algorithm for linear queries}
\label{alg:unlearn_linear_query}
\begin{algorithmic}[1]
\REQUIRE Deleted point $z_j$, 
\FOR{$t=1$ to $T-1$}
\STATE $\br{u_t,r_t,w_t} = \mathsf{Load}\br{}$ 
\STATE Compute $u_t' = u_t - p^j_t\br{\bc{w_i}_{i\leq t};z_j}$
\IF{$\mathsf{Unif}\br{0,1} \leq \frac{\phi_{\cN(u_t,\sigma^2\bbI)}\br{r_t}}{\phi_{\cN(u_t',\sigma^2\bbI)}\br{r_t}}$}
\STATE $\mathsf{Save}\br{u_t'}$
\ELSE
\STATE $r_t' = \mathsf{reflect}(r_t,u_t,u_t')$
\STATE $w_{t+1} = U_t\br{\bc{w_i}_{i\leq t},r_t'}$
\STATE $\mathsf{LearnLinearQueries}(w_{t+1},t+1)$
\BREAK
\ENDIF
\ENDFOR
\end{algorithmic}
\end{algorithm}

The above is basically the same unlearning algorithm as that of 
\ifarxiv
\cite{ullah2021machine} 
\else
\citet{ullah2021machine} 
\fi
but presented in the general context of linear queries.
Therefore, it generalizes the framework of
\ifarxiv
\cite{ullah2021machine}
\else
\citet{ullah2021machine}
\fi
which was limited to the Stochastic Gradient Descent algorithm. 
We also remark that linear queries can often be augmented with a sub-sampling operator yielding \textit{amplified} guarantees, as done in
\ifarxiv
\cite{ullah2021machine}
\else
\citet{ullah2021machine}
\fi. However, we omit this extension for brevity.
The main result of this section is as follows.

\begin{theorem}
The following are true for  Algorithms \ref{alg:learn_linear_query} and \ref{alg:unlearn_linear_query},
\begin{enumerate}
    \item The learning algorithm, Algorithm \ref{alg:learn_linear_query} with $\sigma^2 = \frac{64B^2}{n^2\rho^2}$ satisfies $\rho$-TV stability. 
    \item The unlearning algorithm, Algorithm \ref{alg:unlearn_linear_query}, corresponding to Algorithm \ref{alg:learn_linear_query}, satisfies exact unlearning.
    \item The relative unlearning complexity is $O\br{\rho \sqrt{T}}$. 
\end{enumerate}
\end{theorem}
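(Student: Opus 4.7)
The plan is to verify the three claims in turn. For Part~1 (TV-stability), I cast Algorithm~\ref{alg:learn_linear_query} as an adaptive composition of $T$ Gaussian mechanisms, each answering a $B$-sensitive linear query. By the standard R\'enyi Differential Privacy (RDP) analysis of the Gaussian mechanism, each step is $\bigl(\alpha,\alpha B^{2}/(2\sigma^{2})\bigr)$-RDP; adaptive composition over $T$ rounds yields $\bigl(\alpha, T\alpha B^{2}/(2\sigma^{2})\bigr)$-RDP. Letting $\alpha\downarrow 1$ gives a KL bound on the end-to-end distributions, and Lemma~\ref{lem:rdp-to-tv} (via Bretagnolle--Huber) converts this to TV. Substituting $\sigma^{2}=64B^{2}/(n^{2}\rho^{2})$ and the implied per-step budget then verifies $\rho$-TV stability.

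For Part~2 (exact unlearning), the key observation is that Algorithm~\ref{alg:unlearn_linear_query} constructs an iterative maximal coupling between the Markov chains generated by running $\mathsf{LearnLinearQueries}$ on $S$ and on $S\setminus\{z_{j}\}$. Conditional on the prior iterates $\{w_{i}\}_{i\le t}$ agreeing across the two chains, the noisy response at round $t$ is distributed as $\cN(u_{t},\sigma^{2}\bbI)$ under $S$ and as $\cN(u_{t}',\sigma^{2}\bbI)$ under $S\setminus\{z_{j}\}$, where $u_{t}'=u_{t}-p_{t}^{j}(\{w_{i}\}_{i\le t};z_{j})$. The rejection step with ratio $\phi_{\cN(u_{t},\sigma^{2}\bbI)}(r_{t})/\phi_{\cN(u_{t}',\sigma^{2}\bbI)}(r_{t})$, followed on rejection by the reflection map $r_{t}'=u_{t}-u_{t}'+r_{t}$, is precisely the reflection coupling of these two Gaussians of equal covariance, and is therefore a maximal coupling. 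An induction on $t$ shows that the joint law of the unlearning state and a freshly retrained state on $S\setminus\{z_{j}\}$ is a maximal coupling of the corresponding marginals; once a rejection triggers, continuing with $\mathsf{LearnLinearQueries}$ on $S\setminus\{z_{j}\}$ trivially preserves the correct marginal, establishing exact unlearning.

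For Part~3 (relative unlearning complexity), I bound the conditional probability of rejection at round $t$ by $\mathsf{TV}\bigl(\cN(u_{t},\sigma^{2}\bbI),\cN(u_{t}',\sigma^{2}\bbI)\bigr)$. Since $\|u_{t}-u_{t}'\|\le B$ by the sensitivity assumption, and with $\sigma$ set so that the composition in Part~1 exactly spends the $\rho$ budget, the per-step rejection probability is at most $O(\rho/\sqrt{T})$; by a union bound, the probability that \emph{any} rejection occurs over the $T$-step chain is $O(\rho\sqrt{T})$. On the no-rejection branch, the unlearning cost is only $T$ per-step updates $u_{t}\mapsto u_{t}-p_{t}^{j}(\cdot;z_{j})$ with density-ratio checks, each a single-point query of cost $O(1)$. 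On the rejection branch at step $t$, the ensuing retraining costs $O((T-t)n)$ units. Combining, the expected unlearning cost is $O(\rho\sqrt{T}\cdot nT)$, and dividing by the $\Theta(nT)$ retraining cost yields the claimed $O(\rho\sqrt{T})$ relative unlearning complexity.

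The main technical obstacle is the induction in Part~2, which must handle adaptivity carefully: the query $q_{t}$ depends on the prior iterates $\{w_{i}\}_{i\le t}$, and so the ``distribution under $S\setminus\{z_{j}\}$'' at round $t$ itself depends on those iterates. The inductive hypothesis must therefore assert that, up until the first rejection, the entire history $\{(u_{i},r_{i},w_{i})\}_{i\le t}$ is simultaneously a valid sample from both chains. Pinning this down hinges on verifying that the rejection-plus-reflection step produces a genuine maximal coupling (not just some coupling) of the equal-variance Gaussians; this is precisely what makes Part~2 correct and what delivers the tight per-step rejection probability invoked in Part~3.
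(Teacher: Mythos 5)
Your proposal takes a genuinely different and more ambitious route than the paper, which disposes of this theorem in one line by deferring entirely to the analysis of \citet{ullah2021machine}, observing that their proof of TV-stability, correctness, and runtime for noisy mini-batch SGD used nothing beyond the bounded-sensitivity-linear-query structure. You instead reconstruct the argument from scratch, which is worthwhile as a sanity check; your skeleton --- (i) RDP of the Gaussian mechanism plus adaptive composition, converted to a TV bound via Lemma~\ref{lem:rdp-to-tv}; (ii) an induction showing the rejection/reflection step at each round realizes a maximal coupling of the two adaptive Markov chains; (iii) relative unlearning complexity as per-step rejection probability times residual retraining cost --- is the right conceptual decomposition, and Part~2 in particular is a faithful sketch of the coupling argument the paper uses for the prefix-sum case (Lemma~\ref{lem:reflection-main}).

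The gap is in the quantitative endpoints of Parts~1 and~3, which you assert but do not carry out, and which do not go through as stated on the bare linear-query model. With per-step sensitivity $B$, noise variance $\sigma^{2}$, and $T$ adaptive rounds over the full dataset, RDP composition gives per-step Gaussian guarantee $\epsilon(\alpha)=\alpha B^{2}/(2\sigma^{2})$ and total KL (after $\alpha\downarrow1$) of $T B^{2}/(2\sigma^{2})$; converting by Bretagnolle--Huber, $\rho$-TV requires roughly $\sigma^{2}\gtrsim T B^{2}/\rho^{2}$. Substituting the theorem's $\sigma^{2}=64B^{2}/(n^{2}\rho^{2})$ gives KL $\asymp Tn^{2}\rho^{2}$, which is $\gg\rho^{2}$ for any nontrivial $T,n$, so the stated $\sigma$ does not yield $\rho$-TV by your Part~1 argument. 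Similarly, the per-step rejection probability is of order $B/\sigma\asymp n\rho$, not $O(\rho/\sqrt{T})$. The stated $\sigma^{2}$ is calibrated to the mini-batched, without-replacement, amplified-by-subsampling analysis of \citet{ullah2021machine}, which the paper explicitly says is ``omitted for brevity'' from the abstract linear-query presentation; the deleted point then only influences the small subset of rounds in which it is sampled, and the accounting is carried out over those rounds, not all $T$. Without importing that structure (or the normalization by $|S|$) your Part~1 and Part~3 computations, if filled in, would contradict the stated $\sigma^{2}$ rather than verify it. To close this, you would either need to make the per-step budget symbolic and note that the given constant matches the prior work's setting under its implicit sub-sampling, or re-derive the noise scale under the abstract model (which would come out with an extra factor of $T$ and a different $n$-dependence).
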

\begin{proof}
This proof simply follows from the observation that the analysis of
\ifarxiv
\cite{ullah2021machine}
\else
\citet{ullah2021machine}
\fi only uses the bounded sensitivity linear query structure of the stochastic gradient method 
for their TV stability bound as well as correctness and runtime of the unlearning procedure. 
\end{proof}

\subsection{Applications}
This generalization yields the following applications.

\subsection{Federated Unlearning for Federated Averaging}
In the federated learning setting, we have $C$ clients (which typically correspond to user devices) with their own datasets and a parameter server (aggregator).
A typical, informal, goal is training a single globally shared model using all the dataset with small communication between the clients and the server, and without moving any private data (explicitly) to the server.
Federated Averaging \citep{konevcny2016federated}, described in Algorithm \ref{alg:fed-averaging}, is a widely used method in federated learning.
Note that in the every round of the method, the client outputs, $\bc{w_t^c}_{c=1}^C$,  are aggregated using an averaging operation:
\begin{align*}
    w_t = \frac{1}{C}\sum_{c=1}^Cw_t^c. 
\end{align*} 
In Algorithm \ref{alg:fed-averaging}, \textsf{ClientUpdate} is a function which runs on the client's data using the current model $w_t$ and problem specific-parameter $\cP$ (such as as number of steps, learning rate of some optimization routine). For brevity, we do not instantiate the \textsf{ClientUpdate} function, but usually some variant of stochastic gradient descent is used.

\begin{algorithm}[H]
\caption{Federated Averaging (Server side)}
\label{alg:fed-averaging}
\begin{algorithmic}[1]
\REQUIRE Number of clients $C$, number of rounds $T$, client-specific parameters $\cP$
\STATE  Initialize model $w_1 \in \cW$
\FOR{$t=1$ to $T-1$}
\FOR{$c=1$ to $C$}
\STATE $w^c_{t+1} = \mathsf{ClientUpdate}\br{c, w_{t-1},\cP}$ 
\ENDFOR
\STATE $w_{t+1} = \frac{1}{C}\sum_{c=1}^Cw_{t+1}^c $
\ENDFOR
\ENSURE{$\hat w = \cS\br{\bc{w_t}_{t\leq T}}$}
\end{algorithmic}
\end{algorithm}

\textbf{Federated Unlearning:} In the federated unlearning problem, after a model is trained, one of the clients  requests to remove themselves from the process. The parameter server then needs to update the model (and state) in such a way that it is indistinguishable to the state if the client were absent. Hence, this is analogous to the standard unlearning problem with the client playing the role of a data point.
This analogy also occurs with private federated learning wherein the \textit{widely-used} granularity of differential privacy is user-level differential privacy \citep{mcmahan2017learning}. In this case, a client (potentially containing multiple data items) plays the role of a data item, the presence/absence of which is used in the differential privacy definition.

\textbf{TV-stable learning and unlearning:} The model aggregation step (line 6 in Algorithm \ref{alg:fed-averaging})  of the federated averaging method is a linear query over the clients. Moreover, if the clients output models that are bounded in norm, then it is a bounded sensitivity linear query (typically enforced by clipping the updates). Hence, this fits into the template of linear query release method and thus can be modified, as in Algorithm \ref{alg:learn_linear_query} to be TV stable. The corresponding unlearning method is the one given in Algorithm \ref{alg:unlearn_linear_query}.

\subsection{Lloyd's Algorithm for \texorpdfstring{$k$}{k}-means Clustering}

In this section, we briefly discuss how an algorithm for $k$-means clustering fits into the linear query release framework.
We remark that the prior work 
\ifarxiv
\cite{guha2013nearly}.
\else
\citet{guha2013nearly}.
\fi
gave an unlearning method for this problem based on randomized quantization, which can also be seen as a specific TV-stable algorithm followed by a coupling based unlearning method.

Lloyd's algorithm is a widely used method for $k$-means clustering. Herein, starting with an arbitrary choice of centers, we construct a partition of the dataset, which thereby gives a new set of centers. This process is repeated for a certain number of rounds.
The method is described as Algorithm \ref{alg:k-menas-clustering}.

We notice again that the updates for every cluster, line 7 in Algorithm \ref{alg:k-menas-clustering}, is a linear query, hence it fits into the linear query release template and thus learning and unlearning algorithms based on linear queries readily follow.

\begin{algorithm}[H]
\caption{Lloyd's algorithm}
\label{alg:k-menas-clustering}
\begin{algorithmic}[1]
\REQUIRE Number of clusters $C$, number of rounds $T$, dataset $S=\bc{z_i}_{i=1}^n$.
\STATE  Initialize centers $\bc{w_c}_{c=1}^C$
\FOR{$t=1$ to $T-1$}
\FOR{$c=1$ to $C$}
\STATE Compute $S_c = \bc{z^c_1,z^c_2,\ldots z^c_{\abs{S_c}}}$, the set of data-points closest to $w_c$.
\ENDFOR
\FOR{$c=1$ to $C$}
\STATE Update $w_c =\frac{1}{\abs{S_c}}\sum_{i=1}^{\abs{S_c}}z^c_i$ 
\ENDFOR
\ENDFOR
\ENSURE{$\bc{w_c}_{c=1}^C$}
\end{algorithmic}
\end{algorithm}
\section{Missing Details from Section \ref{sec:prefix-queries}}
\label{sec:appendix-tree}

In this section, we provide pseudo-code of the operations supported by the binary tree data structure.

\begin{algorithm}[H]
\caption{Append$(u,\sigma;\cT)$}
\label{alg:append-tree}
\begin{algorithmic}[1]
\REQUIRE 
Query response $u$,
noise variance $\sigma$,Tree $\cT$
\STATE Let $s$ be the (binary representation of) first empty leaf.
\STATE Let $q$ be the index with the first $1$ in $s$.
\STATE $\mathsf{path} = \bc{s\rightarrow \cdots \text{root}}$ be the path from $s$ to root consisting of at most $q+1$ nodes from leaf.
\STATE $\mathsf{UpdateTree}(u,\mathsf{path},\sigma;\cT)$
\end{algorithmic}
\end{algorithm}

\begin{algorithm}[H]
\caption{UpdateTree$(u,\mathsf{path},\sigma;\cT)$}
\label{alg:update-tree}
\begin{algorithmic}[1]
\REQUIRE 
Query response $u$,
Set of nodes $\mathsf{path}$,
noise variance $\sigma$,Tree $\cT$
\FOR{$b \in \mathsf{path}$}
\STATE $u_b = u_b + u$
\IF{$b$ is a left child or $b$ is a leaf}
\STATE $\xi \sim \cN(0,\sigma^2\bbI)$
\STATE $r_b = u_b + \xi$
\STATE \textbf{break}
\ENDIF
\ENDFOR
\end{algorithmic}
\end{algorithm}

\begin{algorithm}[H]

\caption{GetPrefixSum$(t;\cT)$}
\label{alg:getPartialSumTree}
\begin{algorithmic}[1]
\REQUIRE  $t \in \bbN$, Tree $\cT$,
\STATE Initialize $g\in \bbR^p$ to $0$
\STATE $s \leftarrow \mathsf{leaf}(t)$
\STATE Let $\mathsf{path}$ be the path from $s$ to root.
\WHILE{$b\neq \emptyset$}
\IF{$b$ is a leaf child or $b$ is a leaf}
\STATE $g = g + r_{b}$
\ENDIF
\ENDWHILE
\ENSURE $g$
\end{algorithmic}
\end{algorithm}
\section{Missing Proofs from Section \ref{sec:prefix-queries}}

\begin{proof}[Proof of Theorem \ref{thm:prefix-main}]
The first part of the Theorem follows from Lemma \ref{lem:prefix-tv-stability} followed by post-processing to argue that the same TV stability parameter holds for the final iterate.

The second part, exact unlearning, follows from Lemma \ref{lem:prefix-exact-unlearning} wherein $Q$ denotes the distribution of the algorithm's output run on the dataset without the to-be-deleted point.

 For the third part, note that the unlearning algorithm \ref{alg:unlearn_partial_query} makes two queries if no retraining is triggered. If a retraining is triggered, the number of queries it makes is at most the query complexity of learning algorithm, $T=n$. Finally, the probability of retraining, from Lemma \ref{lem:prob-restrining} is at most $\log{n}\rho$. Combining, this gives the stated bound on relative unlearning complexity.
\end{proof}

\subsection{Lemmas for Unlearning}

\textbf{Additional notation:}
We first present some additional notation used in the statement and proof of the following lemmas.
Let $S$ and $S'$ be datasets before and after the unlearning request.
Let $P$ and $Q$ denote the probability measures over the range of tree data-structure, which is $\mathfrak{T} = \br{\bbR^d \times \bbR^d \times \bbR^d \times [n]}^{n}$,
induced by the output of learning algorithm on $S$ and $S'$ respectively.
We order the nodes of the binary tree w.r.t. the post-order traversal of tree. Hence, given two nodes $v$ and $v'$ or their binary representations $s$ and $s'$, we use $v\leq v$ or $s\leq s'$ w.r.t the above ordering.
Given a node $b$, let $P_{b}\br{\cdot|\cT_{\leq b}}$ denote the conditional distribution of the nodes given the prefix nodes of the tree.

Let $\mathbf{p}$ be a permutation over $[n]$ and $p_b$ denote the index on the $b$-th node, when $b$ is a leaf.
Let $\mu$ denote the probability, and conditional probability, depending on context, of $\mathbf{p}$ and $p_b$, under the random permutation model. Specifically, we use $\mu(\mathbf{p})$  and $\mu(p_b|p_{\leq b})$ to denote the probability of the sequence $\mathbf{p}$ and conditional probability of $p_b$ given the previous values.

Let $\cT^{(1)}$ denote the initial binary tree i.e. the one constructed after the algorithm is run on dataset $S$, and $\cT^{(2)}$ be the binary tree constructed after unlearning. Let $P^\mathbf{p}$ and $Q^\mathbf{p}$ denote the conditional distributions for $P$ and $Q$ respectively given permutation $\mathbf{p}$.

We factor the probability density of $P$ as:
\begin{align*}
    \phi_P\br{\cT^{(1)}}= \prod_{b \in B}\phi_{P_b}\br{v^{(1)}_b | \cT^{(1)}_{\leq b}} = \prod_{b \in B} \mu(p_b^{(1)}|p^{(1)}_{\leq b})  \phi_{P^{p^{(1)}_{\leq b}}_b}\br{u^{(1)}_b,r^{(1)}_b,w^{(1)}_b | \cT^{(1)}_{\leq b}}
\end{align*}

Fixing the permutation sequence $\mathbf{p}^{(1)}$, denote and factor the conditional distribution as,
\begin{align*}
    \phi_P^{\mathbf{p}^{(1)}}(\cT^{(1)}) = \prod_{b\in B}  \phi_{P_b^{\mathbf{p}^{(1)}_{\leq b}}}\br{u^{(1)}_b,r^{(1)}_b,w^{(1)} | \cT_{\leq b}^{(1)}}
\end{align*}

Finally, define response trees $\tilde \cT^{(1)}$ and $\tilde \cT^{(2)}$ which only contain the response variables $\br{r_b}_b$. 
Moreover, define distributions $\tilde P$, $\tilde P_b$, $\tilde P^{\mathbf{p}}$, $\tilde P_b^{\mathbf{p}}$ and $\tilde Q$, $\tilde Q_b$, $\tilde Q^{\mathbf{p}}$ $\tilde Q_b^{\mathbf{p}}$ as before.

We first show the the tree $\tilde \cT$ produced by the learning algorithm is TV-stable.

\removeForShortVersion{
\begin{lemma}
\label{lem:prefix-tv-stability}
Let $0< \rho\leq 1, B\geq 0, n\in \bbN$.
For $B$-sensitive prefix sum queries, setting $\sigma^2 = \frac{64B^2\text{log}^2(n)}{\rho^2}$, the response tree data structure $\tilde \cT$ is $\rho$-TV stable.
\end{lemma}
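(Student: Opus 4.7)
The plan is to recognize that the response tree $\tilde\cT$ is generated by an adaptive composition of Gaussian mechanisms, and then combine the standard R\'enyi DP analysis of the Gaussian mechanism with Lemma~\ref{lem:rdp-to-tv}. Fix neighboring datasets $S, S'$ with $|S \triangle S'| = 1$ and condition on the random permutation of the data (the final bound will then hold for every such permutation). The tree $\tilde\cT$ is indexed by the ``noisy'' nodes (leaves and left-child internal nodes): each such node $b$ stores $r_b = u_b + \xi_b$ with $\xi_b \sim \cN(0, \sigma^2 \bbI)$ drawn fresh, where $u_b$ is the partial sum of the prefix-sum increments $p_i(\{w_j\}_{j \leq i}, z_i)$ over the data points lying in the subtree rooted at $b$.

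Crucially, every model $w_j$ used inside $u_b$ is a deterministic post-processing of the previously released noisy responses, so conditionally on $\{r_{b'}\}_{b' < b}$ the mapping $S \mapsto u_b$ has $\ell_2$-sensitivity at most $B$ by the $B$-sensitivity assumption on the increments. Thus $r_b$ is the output of a Gaussian mechanism of sensitivity $B$ and noise variance $\sigma^2$. The key combinatorial observation is that altering a single data point perturbs $u_b$ only for nodes $b$ lying on the root-to-leaf path of that point's leaf, of which at most $\log_2(n)+1$ are noisy. Each such affected Gaussian mechanism contributes $\alpha B^2/(2\sigma^2)$ to the R\'enyi divergence of order $\alpha$; by adaptive composition of RDP, the joint distribution of $\tilde\cT$ under $S$ versus $S'$ satisfies $\bigl(\alpha,\, (\log_2(n)+1)\,\alpha B^2/(2\sigma^2)\bigr)$-RDP. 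Letting $\alpha \downarrow 1$ bounds the KL divergence between $\tilde P$ and $\tilde Q$ by $(\log_2(n)+1)B^2/(2\sigma^2)$.

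Finally, I will invoke Lemma~\ref{lem:rdp-to-tv} (via the Bretagnolle--Huber inequality): plugging in $\sigma^2 = 64 B^2 \log^2(n)/\rho^2$ yields $\mathrm{KL}(\tilde P \Vert \tilde Q) \leq \rho^2/(64 \log n) \leq \rho^2$ for $n \geq 2$, so $\mathsf{TV}(\tilde P, \tilde Q) \leq \sqrt{1 - e^{-\rho^2}} \leq \rho$, as required. The main conceptual obstacle is the adaptivity of the queries: each $p_t$ depends on previous model iterates which are themselves noisy functions of the data. This is handled by the standard chain-rule argument for adaptive composition of RDP, which applies once we observe that the sensitivity bound $B$ holds uniformly over all model trajectories $\{w_j\}$. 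The random permutation is handled by the fact that the above bound holds for every fixed permutation, so averaging over permutations (which is post-processing shared by $\tilde P$ and $\tilde Q$) can only tighten the TV distance by convexity.
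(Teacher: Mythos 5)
Your proposal is correct and mirrors the paper's own (much terser) proof: both decompose the response tree into an adaptive composition of Gaussian mechanisms, use the observation that a single data-point change affects only the $O(\log n)$ noisy nodes on its root-to-leaf path (the paper calls this ``parallel composition''), take $\alpha\downarrow 1$ to pass from RDP to KL, and apply Lemma~\ref{lem:rdp-to-tv} (Bretagnolle--Huber) to obtain the TV bound. Your writeup is in fact more explicit than the paper's about why, after conditioning on prior noisy responses, each per-node mechanism has sensitivity $B$ (the $w_j$'s are deterministic post-processing), which is the crux of the adaptive-composition step.
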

\begin{proof}
The proof of privacy of tree aggregation is classical in differential privacy, see
\ifarxiv
\cite{guha2013nearly}
\else
\citet{guha2013nearly}
\fi
for example. 
The proof has three ingredients: Gaussian mechanism guarantee, parallel composition (to argue that accounting along the height of the tree suffices) and adaptive composition (for accounting along the height of the tree). Since the noise is Gaussian and these composition properties also holds under RDP \citep{mironov2017renyi}, therefore we can give an RDP guarantee of $\epsilon(\alpha) \leq \text{log}^2(n)\cdot\frac{64\alpha B^2}{\sigma^2}  \alpha \rho^2$. Finally, using Lemma \ref{lem:rdp-to-tv} and a numerical simplification since $\rho\leq 1$
gives the claimed result.
\end{proof}
}

Recall that $j$ is the index of the data item (after permutation) which is deleted.
Without loss of generality, assume that the original index of the deleted data-point is $n$.
We first argue the following about the distribution of $\mathbf{p}^{(1)}$ and $\mathbf{p}^{(2)}$.

\begin{lemma}
\label{lem:permutation-coupling}
For any set $E \subseteq [n]^n$ and any set $E' \subseteq [n-1]^{n-1}$, we have
\begin{align*}
    &\mathbb{P}_{\mathbf{p}^{(1)}}\br{\mathbf{p}^{(1)} \in E} = \mu_n(E) \\
    &\mathbb{P}_{\mathbf{p}^{(2)}}\br{\mathbf{p}^{(2)} \in E'} = \mu_{n-1}(E')
\end{align*}
\end{lemma}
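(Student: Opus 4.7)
The first assertion will be immediate: line~1 of Algorithm~\ref{alg:learn_partial_query} (executed with $t_0=1$) draws a uniformly random permutation of the input dataset, so $\mathbf{p}^{(1)}\sim\mu_n$ by construction.

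For the second assertion, my plan is to first write down the deterministic map $\mathbf{p}^{(1)}\mapsto\mathbf{p}^{(2)}$ induced by Algorithm~\ref{alg:unlearn_partial_query}, and then invert it and count preimages. Recall that, WLOG, the deleted datum has original index $n$; let $s\in[n]$ be the leaf at which it sits, so $p^{(1)}_s=n$. Inspecting lines~2--6 of Algorithm~\ref{alg:unlearn_partial_query} (together with the short-circuit branch at line~8), the procedure copies the data from the last leaf into leaf $s$ and then removes the last leaf. Hence the resulting length-$(n-1)$ permutation is
\begin{align*}
\mathbf{p}^{(2)} \;=\; \bigl(p^{(1)}_1,\ldots,p^{(1)}_{s-1},\,p^{(1)}_n,\,p^{(1)}_{s+1},\ldots,p^{(1)}_{n-1}\bigr),
\end{align*}
with the boundary case $s=n$ interpreted as simply dropping the last coordinate of $\mathbf{p}^{(1)}$. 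Before this step I would also observe that the rejection-sampling/reflection and any retraining performed inside the while-loop never re-shuffle the stored data, so they do not affect $\mathbf{p}^{(2)}$.

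I would then invert this map and count. Fix any target permutation $\tau$ of $[n-1]$; for each choice of $s\in[n]$ there is exactly one $\mathbf{p}^{(1)}\in[n]^n$ mapping to $\tau$, namely $(\tau_1,\ldots,\tau_{s-1},\,n,\,\tau_{s+1},\ldots,\tau_{n-1},\,\tau_s)$ when $s<n$ and $(\tau_1,\ldots,\tau_{n-1},\,n)$ when $s=n$. One readily checks these $n$ candidates are distinct valid permutations of $[n]$, each of mass $1/n!$ under $\mu_n$; summing yields $\mathbb{P}(\mathbf{p}^{(2)}=\tau) = n/n! = 1/(n-1)!$, i.e.\ $\mathbf{p}^{(2)}\sim\mu_{n-1}$. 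The only point requiring real care is the boundary case $s=n$: there the main while-loop of Algorithm~\ref{alg:unlearn_partial_query} is short-circuited, and one must confirm that this degenerate case still fits the formula above (nothing is relocated---only the last leaf is removed). Once that bookkeeping is handled, the combinatorial counting closes the argument with no further machinery.
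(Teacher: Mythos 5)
Your proof is correct and takes essentially the same route as the paper: both argue at the level of atoms (the discrete setting licenses this), observe that the first claim holds by construction of the learning algorithm, and for the second claim count the number of length-$n$ permutations that the deterministic swap-and-drop map in Algorithm~\ref{alg:unlearn_partial_query} sends to a fixed $\tau\in[n-1]^{n-1}$, obtaining $n$ preimages each of $\mu_n$-mass $1/n!$ and hence $n/n! = 1/(n-1)! = \mu_{n-1}(\tau)$. The only differences are presentational: you make the map $\mathbf{p}^{(1)}\mapsto\mathbf{p}^{(2)}$ explicit, note that the rejection-sampling/reflection/retraining logic never touches the leaf-to-index assignment (so the map is indeed deterministic in $\mathbf{p}^{(1)}$), and treat the boundary case $s=n$ separately, whereas the paper compresses all of this into one line appealing to symmetry. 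That line in the paper contains what appears to be a typo (a factor of $\tfrac{1}{n+1}$ where the preimage count $n$ ought to appear), so your more explicit version is, if anything, the cleaner account of the same argument.
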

\begin{proof}
Since $\mathbf{p}^{(1)}$ and $\mathbf{p}^{(2)}$ are discrete distributions, it suffices to argue the above for the atoms. Firstly, by construction, $\mathbf{p}^{(1)} \sim \mu_n$ and hence the first part is done. For the second part for any sequence $h = \br{h_i}_{i=1}^{n-1}$ where $h_i \in [n-1]$. Let $[h,j]$ denote the concatenation of $h$ and $j$ (the deleted index). By symmetry, the probability
\begin{align*}
 \mathbb{P}_{\mathbf{p}^{(2)}}\br{h} = \frac{1}{n+1} \mathbb{P}_{\mathbf{p}^{(1)}}\br{[h, j]} = \mu_{n-1}(h)
\end{align*}
This completes the proof.
\end{proof}

We now show transport of the conditional distribution by the unlearning operation.

\begin{lemma}
\label{lem:reflection-main}
For any measurable event $E\subseteq \R^{d\abs{\cT^{(2)}}}$, $$\mathbb{P}\br{\tilde \cT^{(2)} \in E | \mathbf{p}^{(1)}, \mathbf{p}^{(2)}} = \tilde Q^{\mathbf{p}^{(2)}}(E).$$
\end{lemma}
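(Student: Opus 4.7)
The plan is to establish the identity node-by-node by inducting on the post-order traversal of the binary tree. Fix $\mathbf{p}^{(1)}, \mathbf{p}^{(2)}$ and factor the target density as
$$\phi_{\tilde Q^{\mathbf{p}^{(2)}}}\br{\tilde \cT^{(2)}} = \prod_{b} \phi_{\tilde Q_b^{\mathbf{p}^{(2)}_{\leq b}}}\br{r_b \mid \tilde \cT^{(2)}_{\leq b}},$$
and similarly factor the density of $\tilde \cT^{(2)}$ produced by the unlearning dynamics. The inductive hypothesis is that the joint law of the response variables at the first $b$ nodes in post-order under the unlearning procedure already equals the corresponding marginal of $\tilde Q^{\mathbf{p}^{(2)}}$, and the inductive step is to show that the unlearning operation at node $b$ produces the correct conditional law for $r_b$.

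The case analysis has three regimes. First, for nodes strictly preceding the leaf $s$ in post-order, none of the subtree sums $u_b$ depend on the deleted point $z$ (after removing the last leaf $l$ and applying the line 5 adjustments associated with $z'$), so $P_b = Q_b$ and the inductive step is immediate since the unlearning loop never touches these nodes. Second, for the leaf $s$ and its ancestors up to (but not including) the first node where rejection fires, the unlearning step replaces the mean $u_b$ with $u_b' = u_b - g + g'$ and performs rejection-reflection against the pair $\br{\cN(u_b,\sigma^2\bbI),\, \cN(u_b',\sigma^2\bbI)}$. The central fact here is the classical marginal identity for reflection coupling between two Gaussians of equal covariance: if $r \sim \cN(u,\sigma^2\bbI)$ and one outputs $r$ when $\mathsf{Unif}(0,1) \leq \phi_{\cN(u,\sigma^2\bbI)}(r)/\phi_{\cN(u',\sigma^2\bbI)}(r)$ and $\mathsf{Reflect}(u,u',r)$ otherwise, the output has law $\cN(u',\sigma^2\bbI)$. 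Iterating this along the $O(\log n)$ ancestors of $s$ yields the correct conditional marginal at every node on the path. Third, once a rejection fires at some node $b^\star$ or the root is reached without rejection, $\mathsf{TreeLearn}$ resamples every subsequent node in post-order from scratch on the dataset $S'$ with Gaussian noise of variance $\sigma^2$, so these nodes are drawn exactly from the corresponding conditionals of $\tilde Q^{\mathbf{p}^{(2)}}$.

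The main obstacle will be the bookkeeping required to line up $u_b'$ with what $\mathsf{TreeLearn}$ on $S'$ under permutation $\mathbf{p}^{(2)}$ would actually have stored at node $b$. Because the tree uses a dyadic decomposition, each noise $\xi$ was added exactly once at training time; the unlearning procedure must either preserve that same realization (the rejection branch) or cleanly discard it and everything above it (the reflection branch followed by $\mathsf{TreeLearn}$). One has to check that the removal of the last leaf $l$ together with the line 5 adjustments $u_b \gets u_b - g'$ along the path from $l$ to the root, combined with the line 11 adjustment $u \gets u - g + g'$ along the path from $s$, produce at each ancestor $b$ on the $s$-to-root path precisely the unperturbed subtree-sum that $\mathsf{TreeLearn}$ would have computed on $S'$. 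Once this consistency is verified, the lemma reduces to a straightforward iterate of the reflection-coupling marginal identity along at most $\log n$ ancestors, chained together by the inductive hypothesis.
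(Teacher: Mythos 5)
Your overall strategy is the same as the paper's: induct on the post-order traversal, factor the conditional law of $\tilde\cT^{(2)}$ node by node, and use the reflection-coupling marginal identity (Lemma 1 of \citet{ullah2021machine}) to push each conditional through the rejection/reflection step. Your first, second, and third regimes line up with the paper's prefix factorization, its ``$\mathsf{AP}$'' case (rejection sampling on the $s$-to-root path), and its ``$\mathsf{R}$'' case (retraining after a reflection), respectively.

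However, your three-regime case split leaves a genuine hole. In post-order, the nodes visited after the leaf $s$ are not all ancestors of $s$: they interleave with whole sibling subtrees. For example, if $s$ is a left leaf, its right sibling (and, higher up, the right siblings of each left-child ancestor of $s$) precede those ancestors in post-order. The unlearning while-loop never visits these sibling nodes, and if a rejection fires at some ancestor $b^\star$, $\mathsf{TreeLearn}$ resumes only from the leaf \emph{after} $b^\star$'s subtree --- so these in-between sibling nodes keep their \emph{original} responses. Neither your regime 2 (which only argues about nodes on the $s$-to-root path) nor your regime 3 (which only covers nodes resampled by $\mathsf{TreeLearn}$) says anything about them. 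The paper handles them explicitly as the $\mathsf{AN}$ case: for such a node $k+1$, $r^{(2)}_{k+1}=r^{(1)}_{k+1}$, and one argues that $\tilde P_{k+1}(\cdot\mid\tilde\cT^{(2)}_{\leq k})=\tilde Q_{k+1}(\cdot\mid\tilde\cT^{(2)}_{\leq k})$ because the subtree sum at $k+1$ depends only on leaves of its own subtree, none of which was the deleted index. Your regime-1 observation is exactly the right ingredient for this, but you have scoped it to nodes preceding $s$ in post-order, so as written the inductive step does not cover all nodes and the factorization is incomplete.

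Also, your regime 3 asserts that if ``the root is reached without rejection, $\mathsf{TreeLearn}$ resamples every subsequent node.'' That is not what the algorithm does: if all rejection samplings accept, no retraining occurs at all. This is harmless here only because the root is the last node in post-order (so there is nothing left to resample), but it is a misstatement of the algorithm and signals that the $\mathsf{AN}$ nodes were not on your radar. Finally, your ``main obstacle'' paragraph --- verifying that the line-5 and line-11 mean adjustments reconstruct exactly the subtree sums $\mathsf{TreeLearn}$ on $S'$ under $\mathbf{p}^{(2)}$ would have computed --- is a legitimate consistency check; the paper largely takes it for granted, so flagging it is reasonable but does not substitute for closing the $\mathsf{AN}$ gap.
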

\begin{proof}

The proof is based on induction on the nodes of $\tilde \cT^{(2)}$ in the post-order traversal.
Let $\br{v^{(1)}_b}_b$ and $\br{v^{(2)}_b}_b$be the nodes of the tree arranged in the post-order traversal order.
Given $j$, index of the item deleted, let $s = \mathsf{leaf}(j)$. Define $\mathsf{prefix}(s)$ and $\mathsf{suffix}(s)$, as set of nodes before and after $s$ respectively in the $\leq $ order. 

Given an event $E \subseteq \bbR^{d\abs{\tilde \cT^{(2)}}}$ and $r_{\leq b}$, define $E^{r_{\leq b}}_b$ as follows:

\begin{align*}
    E^{r_{\leq b}}_b = \bc{e \in \bbR^d:\exists  \mathbf{\overline  e} \in \br{\times_{> b}\bbR^d}: \br{r_{\leq b},e, \mathbf{\overline e}} \in E}
\end{align*}
where $\times_{> b}\bbR^d$ denote the Cartesian product of $\bbR^d$'s of upto $> b$ but smaller than or equal to $\abs{\cT^{(1)}}$ elements.
Similarly, define $E^{r_{\leq b}}_{\geq b}$ as,
\begin{align*}
    E^{r_{\leq b}}_{\geq b} = \bc{\mathbf{e} \in \br{\times_{\geq b}\bbR^d}: \br{{r_{\leq b}},\mathbf{e}} \in E}
\end{align*}
Finally, define $E_{<b}$ as
\begin{align*}
    E_{< b} = \bc{\mathbf{e} \in \br{\times_{< b}\bbR^d}:\exists  \mathbf{\overline e} \in \br{\times_{\geq b}\bbR^d}: \br{\mathbf{e},\mathbf{ \overline e}} \in E}
\end{align*}

We now factorize the probability below as,

\begin{align*}
       \mathbb{P}\br{\tilde \cT^{(2)} \in E | \mathbf{p}^{(1)}, \mathbf{p}^{(2)}} &= \prod_{b \in \mathsf{prefix}(s)}\mathbb{P}\br{r_b^{(2)} \in E_b^{r_{<b}^{(2)}}|p_b^{(2)}, r^{(2)}_{< b}} 
       \mathbb{P}\br{\tilde\cT^{(2)}_{\geq s} \in E^{r_{< s}^{(2)}}_{\geq s} | \tilde\cT^{(2)}_{< s}, \mathbf{p}^{(1)}, \mathbf{p}^{(2)}  }\\
    %   \mathbb{P}\br{\cT^{(2)}_{r,\geq b} \tilde \cT^{(2)}_{< b}, \mathbf{p}^{(1)}, \mathbf{p}^{(2)}  }\\
       &=  \prod_{b \in \mathsf{prefix}(s)}\mathbb{P}\br{r_b^{(1)} \in E_b^{r_{<b}^{(1)}}|p_b^{(1)}, r^{(1)}_{< b}} 
       \mathbb{P}\br{\tilde\cT^{(2)}_{\geq s} \in E^{r_{< s}^{(2)}}_{\geq s} | \tilde\cT^{(2)}_{< s}, \mathbf{p}^{(1)}, \mathbf{p}^{(2)}  }\\
            &=  \prod_{b \in \mathsf{prefix}(s)}P_{b}\br{ E_b^{r_{<b}^{(1)}}|p_b^{(1)}, r^{(1)}_{< b}} 
       \mathbb{P}\br{\tilde\cT^{(2)}_{\geq s} \in E^{r_{< s}^{(2)}}_{\geq s} | \tilde\cT^{(2)}_{< s}, \mathbf{p}^{(1)}, \mathbf{p}^{(2)}  }\\
         &=  \prod_{b \in \mathsf{prefix}(s)}Q_{b}\br{ E_b^{r_{<b}^{(2)}}|p_b^{(2)}, r^{(2)}_{< b}} 
       \mathbb{P}\br{\tilde\cT^{(2)}_{\geq s} \in E^{r_{< s}^{(2)}}_{\geq s} | \tilde\cT^{(2)}_{< s}, \mathbf{p}^{(1)}, \mathbf{p}^{(2)}  }\\
         &=  Q_{<s}\br{ E_{<s}|p_{<s}^{(2)}, r^{(2)}_{< s}} 
       \mathbb{P}\br{\tilde\cT^{(2)}_{\geq s} \in E^{r_{< s}^{(2)}}_{\geq s} | \tilde\cT^{(2)}_{< s}, \mathbf{p}^{(1)}, \mathbf{p}^{(2)}  }
\end{align*}

where the second equality follows since $r^{(1)}_{\leq b} = r^{(2)}_{\leq b}$ and $p^{(1)}_{b} = p^{(2)}_{b}$ for all $b<s$ by construction. The third equality follows since $r^{(1)}_{b}$ is distributed as $P_{b}$ conditionally and fourth and final follows since conditioned on the permutation being the same, the prefix is also distributed as $Q_{<s}$.

We now start the induction: let $I$(induction variable) be $I=s$ i.e the last item is deleted. In this case, the unlearning algorithm simply removes the $s$-th node of the tree and all we are left with is the tree with $\mathsf{prefix}(s)$ nodes, which as argued above is distributed as $Q_{<s} = Q$.

For the case $I=s+1$: we simply focus on $\tilde \cT^{(2)}_{\geq s} = \tilde \cT^{(2)}_{s} = r_s^{(2)}$. Note that $r^{(1)}_s$ is distributed as $\cN(u^{(1)},\sigma^2\bbI)$ and we want  $r^{(2)}_s$ distributed as  $\cN(u^{(2)},\sigma^2\bbI)$. The operation in the algorithm is basically a one step reflection coupling which from Lemma 1 in
\ifarxiv
\cite{ullah2021machine}
\else
\citet{ullah2021machine}
\fi
satisfies,
\begin{align*}
    \mathbb{P}\br{r_s^{(2)} \in E_s^{r^{(2)}_{<s}} | \mathbf{p}^{(1)},\mathbf{p}^{(2)}} = Q^{p^{(2)}_s}_{s}\br{E_s^{r^{(2)}_{<s}}} 
\end{align*}
Therefore, 
\begin{align*}
       \mathbb{P}\br{\tilde \cT^{(2)} \in E | \mathbf{p}^{(1)}, \mathbf{p}^{(2)}} = Q_{<s}\br{E_{<s}|p_{<s}^{(2)},r_{<s}^{(2)}} \tilde Q_s^{p_{s}^{(2)}}\br{E_s^{r^{(2)}_{<s}}} = \tilde Q^{\mathbf{p}^{(2)}}(E)
\end{align*}
This finishes the base cases.

We now proceed to the induction step: suppose the following claim holds for nodes upto $I=k$ -- for any event $E$, the marginal distribution

\begin{align*}
    \mathbb{P}\br{\cT^{(2)}_{\leq k} \in E | \mathbf{p}^{(1)},\mathbf{p}^{(2)}} = \tilde Q_{\leq {k}}\br{E|\mathbf{p}^{(2)}}
\end{align*}

For node $k+1$, consider a few cases:

\begin{enumerate}
    \item $\mathsf{A}$: All rejection sampling steps prior to node $k$ resulted in accepts:
    \begin{enumerate}
        \item $\mathsf{AP}$: Node $k+1$ lies in the path from the $s$ 
        to root.
        \begin{enumerate}
         \item   $\mathsf{APA}$: The rejection sampling at this node succeeds.
            \item $\mathsf{APR}$: The rejection sampling at this node fails i.e. a reflection step is performed.
 
        \end{enumerate}
        \item  $\mathsf{AN}$: Node $k+1$ doesn't lie in the path from $s$
        root.
    \end{enumerate}
    \item $\mathsf{R}$: Some rejection sampling step resulted in rejection.
\end{enumerate}

For case $\mathsf{R}$, we have that $r^{(2)}_{k+1} \sim \tilde Q_{k+1}(\cdot | \tilde \cT^{(2)}_{\leq k}, \mathbf{p}^{(2)})$.  For the case $\mathsf{AN}$, note that the random variable $r_{k+1}^{(2)} = r_{k+1}^{(1)}$, hence,
\begin{align*}
    \mathbb{P}\br{r_{k+1}^{(2)} \in E^{r_{\leq k}^{(2)}}_{k+1} | \mathsf{AN}, \cT^{(2)}_{\leq k},\mathbf{p}^{(1)}, \mathbf{p}^{(2)}} = \tilde P_{k+1}\br{E^{r_{\leq k}^{(2)}}_{k+1} | \mathbf{p}^{(2)},\tilde \cT^{(2)}_{\leq k}} = \tilde Q_{k+1}\br{E^{r_{\leq k}^{(2)}}_{k+1} | \mathbf{p}^{(2)},\tilde \cT^{(2)}_{\leq k}}
\end{align*}

where the last equality follows since the dependence of $r_{k+1}^{(2)}$ is only on data points which are leaves of the sub-tree rooted at node $k+1$. These, by assumption do not contain the data point $s$, hence is identically distributed as $P_{k+1}$.

For the event $\mathsf{AP}$, we have,
\begin{align*}
    \mathbb{P}\br{r_{k+1}^{(2)} \in E^{r_{\leq k}^{(2)}}_{k+1}|\mathsf{AP},\mathbf{p}^{(1)} \mathbf{p}^{(2)},\tilde \cT^{(2)}} &=  \mathbb{P}\br{r_{k+1}^{(2)} \in E^{r_{\leq k}^{(2)}}_{k+1}, \mathsf{APA}|\mathsf{AP},\mathbf{p}^{(1)}, \mathbf{p}^{(2)},\tilde \cT^{(2)}_{\leq k}} \\&+ \mathbb{P}\br{r_{k+1}^{(2)} \in E^{r_{\leq k}^{(2)}}_{k+1}, \mathsf{APR}|\mathsf{AP},\mathbf{p}^{(1)}, \mathbf{p}^{(2)},\tilde \cT^{(2)}_{\leq k}} \\
    & = \tilde Q_{k+1}\br{E^{r_{\leq k}^{(2)}}_{k+1} | \mathbf{p}^{(1)}, \mathbf{p}^{(2)},\tilde \cT^{(2)}_{\leq k}}
\end{align*}
where the last step follows from Lemma 1 in
\ifarxiv
\cite{ullah2021machine}.
\else
\citet{ullah2021machine}.
\fi

Hence, combining $\mathsf{AP}$ and $\mathsf{AN}$ cases,
\begin{align*}
    \mathbb{P} \br{r_{k+1}^{(2)} \in E^{r_{\leq k}^{(2)}}_{k+1} | \mathsf{AN}, \cT^{(2)}_{\leq k},\mathbf{p}^{(1)}, \mathbf{p}^{(2)}} = \tilde Q_{k+1}\br{E^{r_{\leq k}^{(2)}}_{k+1} | \mathbf{p}^{(2)},\tilde \cT^{(2)}_{\leq k}}
\end{align*}

We now combine all the cases: let $\phi_{\leq k}^{(\mathsf{A)}}, \phi_{\leq k}^{(\mathsf{R)}} $ denote the conditional densities of $\tilde \cT^{(2)}_{\leq k}$ under events $\mathsf{A}$ and 
 $\mathsf{R}$
 respectively. 
Let $T_{k} = \abs{\tilde \cT^{(2)}_{\leq k}}$.
For any event $E$,

\begin{align*}
    \mathbb{P}\br{\tilde \cT_{\leq k+1}^{(2)} \in E | \mathbf{p}^{(1)}, \mathbf{p}^{(2)}} &= \mathbb{P}\br{r^{(2)}_{k+1} \in E^{r_{\leq k}^{(2)}}_{k+1} | \mathsf{A}, \tilde \cT^{(2)}_{\leq k} \in E_{\leq k}, \mathbf{p}^{(1)}, \mathbf{p}^{(2)}} \mathbb{P}\br{\tilde \cT^{(2)}_{\leq k} \in E^{r_{\leq k}^{(2)}}_{k+1},\mathsf{A} | \mathbf{p}^{(1)}, \mathbf{p}^{(2)}} \\
    &+ \mathbb{P}\br{r^{(2)}_{k+1} \in E^{r_{\leq k}^{(2)}}_{k+1} | \mathsf{R}, \tilde \cT^{(2)}_{\leq k} \in E_{\leq k}, \mathbf{p}^{(1)}, \mathbf{p}^{(2)}} \mathbb{P}\br{\tilde \cT^{(2)}_{\leq k} \in E_{\leq k},\mathsf{R} | \mathbf{p}^{(1)}, p^{(2)}} \\
    &= \Huge\int_{\mathbb{R}^{dT_{k+1}}} \mathbbm{1}\br{r_{k+1}^{(2)}\in E^{r_{\leq k}^{(2)}}_{k+1}}\mathbbm{1}\br{\tilde \cT^{(2)}_{\leq k} \in  E_{\leq k} } \big(
    \mathbbm{1}\br{\tilde \cT^{(2)}_{\leq k} \in \mathsf{A}} \phi^{(\mathsf{A})}_{\leq k}\br{\tilde \cT^{\mathsf{(2)}}_{\leq k}}\\&+\mathbbm{1}\br{\tilde \cT^{(2)}_{\leq k} \in \mathsf{R}} \phi^{(\mathsf{R})}_{\leq k}\br{\tilde \cT^{\mathsf{(2)}}_{\leq k}} \bigg) \phi_{\tilde Q^{\mathbf{p}^{(2)}}_{k+1}}\br{r_{k+1}^{(2)} | \tilde \cT_{\leq k}^{(2)}}d\tilde \cT_{\leq k}^{(2)}dr^{(2)}_{k+1} \\
    &=\Huge\int_{\mathbb{R}^{dT_{k+1}}} \mathbbm{1}\br{\cT_{\leq k+1}^{(2)} \in E} \phi_{Q^{p^{(2)}}_{\leq k}}\br{\tilde \cT^{(2)}_{\leq k}}\phi_{\tilde Q^{p^{(2)}}_{k+1}}\br{r_{k+1}^{(2)}|\tilde \cT_{\leq k}^{(2)}}d\tilde \cT_{\leq k}^{(2)}dr^{(2)}_{k+1} \\
    & = \tilde Q^{p^{(2)}}_{\leq {k+1}}\br{E}
\end{align*}

where in the third equality, we use the induction hypothesis.
This completes the proof of the lemma.
\end{proof}

\begin{lemma}
\label{lem:prefix-exact-unlearning}
For any measurable event $E\subseteq \mathfrak{T}$, $\mathbb{P}[\cT^{(2)} \in E] = Q(E)$.
\end{lemma}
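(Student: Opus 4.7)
The plan is to reduce the claim about the full tree $\cT^{(2)}$ to the already-established claim about the response tree in Lemma~\ref{lem:reflection-main}, and then marginalize over the permutations using Lemma~\ref{lem:permutation-coupling}. The key structural observation that I would establish first is that the full tree $\cT$ is a \emph{deterministic} function of the pair $(\tilde \cT, \mathbf{p})$: once the noisy responses $\{r_b\}_b$ and the permutation are fixed, the models $\{w_b\}$ are computed by applying the deterministic update functions $U_t$ to the corresponding $\mathsf{GetPrefixSum}$ outputs, the unperturbed query values $\{u_b\}$ are then obtained by evaluating the (deterministic) functions $p_t$ on the resulting models and on the data indexed by $\mathbf{p}$, and the leaf contents $z_b$ are directly determined by $\mathbf{p}$. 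Write this deterministic map as $\Psi$, so $\cT = \Psi(\tilde \cT, \mathbf{p})$, and for any measurable $E \subseteq \mathfrak{T}$ let $E^{\mathbf{p}} := \{\tilde \cT : \Psi(\tilde \cT, \mathbf{p}) \in E\}$ be the induced fiber event on response trees.

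Next I would condition on the pair of permutations and use Lemma~\ref{lem:reflection-main} fiberwise:
\begin{align*}
\mathbb{P}[\cT^{(2)} \in E]
&= \sum_{\mathbf{p}^{(1)},\mathbf{p}^{(2)}} \mathbb{P}\!\left[\tilde \cT^{(2)} \in E^{\mathbf{p}^{(2)}} \,\middle|\, \mathbf{p}^{(1)},\mathbf{p}^{(2)}\right]\, \mathbb{P}[\mathbf{p}^{(1)},\mathbf{p}^{(2)}] \\
&= \sum_{\mathbf{p}^{(1)},\mathbf{p}^{(2)}} \tilde Q^{\mathbf{p}^{(2)}}\!\left(E^{\mathbf{p}^{(2)}}\right)\, \mathbb{P}[\mathbf{p}^{(1)},\mathbf{p}^{(2)}] \\
&= \sum_{\mathbf{p}^{(2)}} \tilde Q^{\mathbf{p}^{(2)}}\!\left(E^{\mathbf{p}^{(2)}}\right)\, \mathbb{P}[\mathbf{p}^{(2)}].
\end{align*}
Now I would invoke Lemma~\ref{lem:permutation-coupling}, which gives that the marginal distribution of $\mathbf{p}^{(2)}$ is exactly $\mu_{n-1}$, the uniform distribution over permutations of $[n-1]$. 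This is precisely the distribution of the permutation used internally by the learning algorithm on the updated dataset $S'$, so
\begin{align*}
\sum_{\mathbf{p}^{(2)}} \tilde Q^{\mathbf{p}^{(2)}}\!\left(E^{\mathbf{p}^{(2)}}\right)\, \mu_{n-1}(\mathbf{p}^{(2)}) \;=\; Q(E),
\end{align*}
by the decomposition of $Q$ into the permutation marginal and the conditional response-tree distribution $\tilde Q^{\mathbf{p}}$, followed by the deterministic map $\Psi$.

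The main obstacle is not any heavy calculation but the bookkeeping: I need to verify that the fiber $E^{\mathbf{p}^{(2)}}$ is measurable (which follows from continuity/measurability of $\Psi$ in the response coordinates together with $\mathbf{p}^{(2)}$ taking values in a finite set), and that the appeal to Lemma~\ref{lem:reflection-main} is legitimate when the event depends on $\mathbf{p}^{(2)}$, which it is because Lemma~\ref{lem:reflection-main} holds for \emph{every} measurable response-tree event and thus can be applied pointwise inside the sum. A small additional subtlety is the assumption that the deleted point has original index $n$ (made without loss of generality earlier in the text); by the exchangeability of the initial permutation under $\mathbf{p}^{(1)} \sim \mu_n$, the argument is invariant to which index is deleted, so this is not a true restriction.
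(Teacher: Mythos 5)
Your proof is correct and follows essentially the same approach as the paper: both condition on the permutation pair, invoke Lemma~\ref{lem:reflection-main} to identify the conditional law of the response tree, marginalize the permutation via Lemma~\ref{lem:permutation-coupling}, and rely on $u_b$ and $w_b$ being deterministic functions of the responses and the permutation. The only (minor) refinement in your version is that you treat an arbitrary measurable $E$ directly via the pushforward/fiber sets $E^{\mathbf{p}}$, whereas the paper's proof argues for rectangle events $E_u\times E_r\times E_w\times E_z$ and leaves the standard extension to general events implicit.
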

\begin{proof}
This follows primarily from Lemma \ref{lem:reflection-main}, and the fact that other elements in nodes of $\cT$, namely $u_b$ and $w_b$ are deterministic functions of the prefix vertices in the tree $\tilde \cT$.
Consider a decomposition of the event $E = E_u \times E_r \times E_w \times E_z$.
Now,
\begin{align*}
    \mathbb{P}[\cT^{(2)} \in E] &= \mathbb{E}_{\mathbf{p}^{(1)}}\mathbb{P}\br{\cT^{(2)} \in E_u \times  E_r\times E_w \times E_z | \mathbf{p}^{(1)},\mathbf{p}^{(2)} \in E_z} \mathbb{P}\br{\mathbf{p}^{(2)}\in E_z}\\
    & = \mathbb{E}_{\mathbf{p}^{(1)}}\mathbb{P}\br{ \tilde \cT^{(2)} \in E_r | \mathbf{p}^{(1)}, \mathbf{p}^{(2)}} \mu_{n-1}(E_z)\\
    & = \mathbb{E}_{\mathbf{p}^{(1)}}\tilde Q^{\mathbf{p}^{(2)}}\br{E_r} \mu_{n-1}(E_2)\\ 
    & = \mathbb{E}_{\mathbf{p}^{(1)}}Q^{\mathbf{p}^{(2)}}\br{E_u \times E_w \times E_r} \mu_{n-1}(E_z)\\
    & = Q(E)
\end{align*}
where the second and fourth equality follows since  variables $w_b$ and $u_b$ are deterministic functions of the responses $r_{\leq b}$. The second
and third equality also uses Lemma \ref{lem:permutation-coupling}
and Lemma \ref{lem:reflection-main} respectively.
\end{proof}

\begin{lemma}
\label{lem:prob-restrining}
The probability of retraining is at most  $\log{n}\rho$.
\end{lemma}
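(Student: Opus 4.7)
The plan is a union bound over the rejection-sampling steps performed along the path from the deleted leaf $v_s$ up to the root, since retraining is triggered if and only if at least one such step rejects. Since the tree is a complete binary tree on $n$ leaves, this path has at most $\log n + 1$ nodes, so it suffices to control the per-node rejection probability.

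First, I would observe that at each node $b$ on the path, the stored noisy response $r$ was produced by the learning algorithm as a sample from $\mathcal{N}(u,\sigma^2 \mathbb{I})$, where $u$ is the unperturbed aggregate under the original dataset, while after replacing the deleted point with the last point, the target distribution becomes $\mathcal{N}(u',\sigma^2 \mathbb{I})$ with $u' = u - g + g'$ (lines 10--11 of Algorithm~\ref{alg:unlearn_partial_query}). The accept/reject step is the standard maximal-coupling construction for these two Gaussians, so conditioned on reaching $b$ (i.e.\ all previous rejection samplings having accepted), the probability of rejection at $b$ equals
\begin{equation*}
\mathsf{TV}\bigl(\mathcal{N}(u,\sigma^2 \mathbb{I}),\,\mathcal{N}(u',\sigma^2 \mathbb{I})\bigr) \;\le\; \frac{\|u-u'\|}{2\sigma}.
\end{equation*}

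Second, I would bound $\|u - u'\|$. Because only the leaf $v_s$ sees a change in data point (all other leaves are unchanged by construction of the unlearning algorithm), the contribution that differs between the two aggregates at \emph{every} node on the path is exactly $g - g'$, where $g = p_j(\{w_q\}_{q \le s},z)$ and $g' = p_j(\{w_q\}_{q \le s},z')$. By $B$-sensitivity of the prefix-sum queries, $\|g - g'\| \le 2B$. Plugging in $\sigma = 8B\log n / \rho$ from the choice $\sigma^2 = 64 B^2 \log^2 n / \rho^2$ in Theorem~\ref{thm:prefix-main} yields a per-node rejection probability of order $O(\rho / \log n)$.

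Finally, by a union bound over the at most $\log n + 1$ nodes on the root-to-$v_s$ path, the total probability of triggering retraining is $\tilde O(\rho)$, and in particular is bounded by $\log n \cdot \rho$ as claimed. The only place that requires a little care, rather than a genuine obstacle, is the adaptive nature of the sequential rejection samplings: a tower-property argument is needed to show that the conditional failure probability at $b$ given all previous accepts is indeed the local Gaussian TV. This follows from the standard fact that, conditioned on acceptance at a node, the stored response remains a valid sample from the \emph{original} Gaussian on that node, so the next rejection sampling is again a fresh maximal-coupling step whose rejection probability is exactly the local TV.
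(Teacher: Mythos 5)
Your proof is essentially correct and follows the same structural route as the paper: identify that rejection sampling occurs only at the at-most-$\log n$ nodes on the path from the deleted leaf to the root, bound the failure probability at each, and combine. The main methodological difference is how the per-node bound and the combining step are phrased. The paper decomposes the acceptance probability as a product $\prod_{b \in \mathsf{path}}(1-\rho_b)$ of conditional local TV distances and then applies the elementary inequality $\prod_i (1-a_i) \geq 1 - \sum_i a_i$ (its Lemma~\ref{lem:basic-inequality}), bounding each $\rho_b \leq \rho$ by appealing to $\rho$-TV stability. You instead bound each per-node TV explicitly by the Gaussian formula $\|u-u'\|/(2\sigma) \leq B/\sigma = O(\rho/\log n)$ using the query sensitivity, and then union-bound; this is more transparent and in fact yields the slightly sharper bound $O(\rho)$ rather than $\log(n)\rho$, which is of course still consistent with the lemma. (Small nit: $B$-sensitivity of $q_t$ gives $\|g-g'\|\leq B$, not $2B$, though this doesn't affect the order.)

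The one place where your reasoning needs repair is the tower-property justification at the end. You assert that ``conditioned on acceptance at a node, the stored response remains a valid sample from the \emph{original} Gaussian on that node.'' That is not true: conditioned on the accept event at $b$, the law of $r_b$ is the normalized overlap $\min\{\phi_P,\phi_Q\}/(1-\mathsf{TV})$, not $\mathcal{N}(u_b,\sigma^2\mathbb{I})$. The correct argument, and the one implicit in the paper's factorization into conditional TVs $\rho_b$ given $\tilde{\mathcal{T}}^{(1)}_{<b}$, is different: when the rejection sampling at $b$ accepts, the stored noisy response $r_b$ is \emph{left unchanged}, and by construction no node off the path is touched, so the entire response-prefix $\tilde\cT_{<b'}$ seen by the parent $b'$ is identical to what the learning run produced. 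Since the noise $\xi_{b'}$ injected at $b'$ during learning is fresh and independent of $\xi_b$, the stored $r_{b'}$ is, conditionally on that prefix, a genuine $\mathcal{N}(u_{b'},\sigma^2\mathbb{I})$ sample, and a fresh maximal-coupling step with the local TV as rejection probability applies. With that substitution your argument is complete and correct.
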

\begin{proof}
A retraining is triggered only when a rejection sampling step fails. Note that a rejection sampling step happens only when the node $b$
belongs to the path from $s$ to root, say $\mathsf{path}$.
Let $\mathsf{Accept}$ be the event when all rejection sampling steps succeed.
\begin{align*}
    \mathbb{P}\br{\mathsf{Accept}} &= \mathbb{E}_{\cT^{(1)},\cT^{(2)}, \bc{u_b}} \prod_{b\in \mathsf{path}} \mathbbm{1}
    \br{u_b \leq \frac{\phi_{\tilde Q^{\mathbf{p}^{(2)}}_b}\br{r^{(1)}_b| \cT^{(1)}_{<b}}}{\phi_{\tilde P^{\mathbf{p}^{(2)}}_b}\br{r^{(1)}_b |  \cT^{(1)}_{<b}}}}\\
    &= 
    % \mathbb{E}_{\cT^{(1)},\cT^{(2)}}
     \mathbb{E}_{\tilde \cT^{(1)} ,\mathbf{p}^{(1)},\mathbf{p}^{(2)}}
    \prod_{b \in \mathsf{path}} \mathbb{P}
    \br{u_b \leq \frac{\phi_{\tilde Q^{\mathbf{p}^{(2)}}_b}\br{r^{(1)}_b| \tilde \cT^{(1)}_{<b}}}{\phi_{\tilde P^{\mathbf{p}^{(1)}}_b}\br{r^{(1)}_b |  \tilde \cT^{(1)}_{<b}}}} \\
    & =  \mathbb{E}_{\mathbf{p}^{(1)},\mathbf{p}^{(2)}} \prod_{b \in \mathsf{path}} \Huge \int_{\bbR^d}
    \min\br{\phi_{\tilde Q^{\mathbf{p}^{(2)}}_b}\br{r^{(1)}_b| \tilde \cT^{(1)}_{<b}},\phi_{\tilde P^{\mathbf{p}^{(1)}}_b}\br{r^{(1)}_b |  \tilde \cT^{(1)}_{<b}}}dr^{(2)}_b\\
   &= \mathbb{E}_{\mathbf{p}^{(1)},\mathbf{p}^{(2)}}\prod_{b\in \mathsf{path}}\br{1-\mathsf{TV}\br{\tilde Q_b^{\mathbf{p}^{(2)}},\tilde P_b^{\mathbf{p}^{(1)}} |\tilde \cT^{(1)}_{<b}}} \\
   & = \prod_{b \in \mathsf{path}}\br{1-\rho_b} \\
   & \geq 1 - \sum_{b \in \mathsf{path}}\rho_b \\
   & \geq 1-\log{n}\max_b\rho_b \\
  & \geq 1-\log{n}\rho
\end{align*}
where the  fourth equality follows from the definition of TV distance and in the last equality, $\rho_b$ denotes the (conditional) TV distance of node $b$. The third to last inequality follows from Lemma \ref{lem:basic-inequality} and the second to last inequality follows from Holder's inequality.
For the last inequality, we simply upper bound $\rho_b\leq \rho$ since the algorithm is $\rho$-TV stable (Lemma \ref{lem:prefix-tv-stability}). This completes the proof.
\end{proof}

\begin{lemma}
\label{lem:basic-inequality}
Let $\bc{a_i}_{i=1}^k$ be real numbers such that $a_i \in (0,1)$ for all $i$ and $\sum_{i=1}^ka_i \leq 1$. Then, $\prod_{i=1}^k \br{1-a_i} \geq 1-\sum_{i=1}^k a_i$
\end{lemma}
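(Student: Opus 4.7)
The plan is a short induction on $k$, the number of terms. This is the classical Weierstrass product inequality, and the condition $\sum_{i=1}^k a_i \leq 1$ (together with $a_i \in (0,1)$) is more than enough to push the induction through cleanly.

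The base case $k=1$ is trivial: $1-a_1 \geq 1-a_1$. For the inductive step, suppose the claim holds for $k$ terms, and consider $\{a_i\}_{i=1}^{k+1}$ with the stated hypotheses. Since the first $k$ terms satisfy the hypothesis as well (their sum is at most $\sum_{i=1}^{k+1} a_i \leq 1$), I can apply the induction hypothesis to get
\begin{equation*}
\prod_{i=1}^{k+1}(1-a_i) = (1-a_{k+1})\prod_{i=1}^k (1-a_i) \geq (1-a_{k+1})\left(1-\sum_{i=1}^k a_i\right).
\end{equation*}
Here I use that $1-a_{k+1} \geq 0$, which is allowed since $a_{k+1} \in (0,1)$, and that $1-\sum_{i=1}^k a_i \geq 0$ by hypothesis. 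Expanding the right-hand side gives
\begin{equation*}
1 - \sum_{i=1}^{k+1} a_i + a_{k+1}\sum_{i=1}^k a_i \geq 1 - \sum_{i=1}^{k+1} a_i,
\end{equation*}
since the cross term $a_{k+1}\sum_{i=1}^k a_i$ is nonnegative. This closes the induction.

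There is essentially no obstacle here; the only mild subtlety is making sure the induction hypothesis is applicable at each step, which requires the nonnegativity of $1-\sum_{i=1}^k a_i$. This is guaranteed by the assumption $\sum_{i=1}^k a_i \leq 1$ at the top level and is inherited by every sub-sum. (In fact the inequality is true under the weaker assumption $a_i \in [0,1]$ alone, by handling the case $\sum a_i > 1$ separately via nonnegativity of the product, but the stated hypothesis makes the one-line induction cleaner.)
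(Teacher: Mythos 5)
Your proof is correct and follows the same inductive argument as the paper: peel off the last factor, apply the induction hypothesis to the first $k$ terms (using nonnegativity of both factors), expand, and drop the nonnegative cross term. Your remarks about where the sign conditions are used are accurate but add nothing beyond what the paper's proof implicitly relies on.
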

\begin{proof}
We prove this via induction on $k$. The base case $k=1$ is immediate. For the induction step $k$, we have
\begin{align*}
    \prod_{i=1}^k\br{1-a_i} = \prod_{i=1}^{k-1}\br{1-a_i}\br{1-a_{k}} &\geq  \br{1-\sum_{i=1}^{k-1}a_i}\br{1-a_k} \\
    & = 1- \sum_{i=1}^k a_i + \br{\sum_{i=1}^{k-1} a_i}a_k\\
    & \geq 1- \sum_{i=1}^k a_i
\end{align*}
This completes the proof.
\end{proof}
\section{Missing Proofs from Section \ref{sec:applications}}
\label{app:applications}

\subsection{Variance-reduced Frank Wolfe}
\begin{algorithm}[H]
\caption{Variance-reduced Frank Wolfe$(t_0;\cT)$}
\label{alg:vr-fw}
\begin{algorithmic}[1]
\REQUIRE Dataset $S$, 
loss function $(w,z)\mapsto \ell(w,z)$, steps $T$, $\sigma$,$\bc{\eta_t}_t$ 
\INLINEIF{$t_0=1$}{Permute dataset, initialize $\cT$, set $w_{t_0} = 0$}
\FOR{$t=1$ to $T-1$}
\STATE $u_t = \sum_{i=1}^t\br{\br{i+1}\nabla \ell(w_i;z_i)-i\nabla \ell(w_{i-1};z_i)}$
\STATE $\mathsf{Append}(u_t,\sigma;\cT)$
\STATE $r_t = \mathsf{GetPrefixSum}(t;\cT)$
\STATE $v_{t} = \arg\min_{w\in \cW}\ip{w}{\frac{r_t}{t+1}}$
\STATE $w_{t+1} = (1-\eta_t)w_t + \eta_tv_t$ 
\STATE $\mathsf{Set}(\mathsf{leaf}(t), \br{u_t,r_t,w_t,z_t};\cT)$
\ENDFOR
\ENSURE{$\hat w = w_T$}
\end{algorithmic}
\end{algorithm}

\begin{proof}[Proof of Theorem \ref{thm:fw}]
For the accuracy guarantee, we follow the proof of Theorem 1 in
\ifarxiv
\cite{zhang2020one}.
\else
\citet{zhang2020one}.
\fi
Let $d_t = \frac{r_t}{t+1}$.
From smoothness, we have

\begin{align*}
    L(w_{t+1};\cD) &\leq  L(w_t;\cD) + \ip{\nabla L(w_t;\cD)}{w_{t+1}-w_t} + \frac{H}{2}\norm{w_{t+1}-w_t}^2 \\
     &\leq L(w_t;\cD) + \eta_t\ip{\nabla L(w_t;\cD) -d_t}{v_t-w_t} + \ip{d_t}{v_t-w_t}+ \frac{\eta_t^2 H D^2}{2} \\
      &= L(w_t;\cD) + \eta_t\ip{\nabla L(w_t;\cD) -d_t}{v_t-w_t} + \eta_t\ip{d_t}{w^*-w_t}+ \frac{\eta_t^2 H D^2}{2} \\
       &\leq  L(w_t;\cD) + \eta_t\ip{\nabla L(w_t;\cD) }{w^*-w_t} + \eta_t\ip{d_t-\nabla L(w_t)}{w^*-v_t}+ \frac{\eta_t^2 H D^2}{2} \\
   &\leq  \br{1-\eta_t}L(w_t;
   \cD) -\eta_t L(w^*;\cD) +\frac{2D}{t+1}\norm{d_t-\nabla L(w_t;\cD)} + \frac{\eta_t^2 H D^2}{2}
\end{align*}

where the second inequality follows from the update and the fact that iterates lie in the set of diameter $D$. The third inequality follows from the optimality of $v_t$ in the update in Algorithm \ref{alg:vr-fw}. Finally, the last inequality follows from convexity, Cauchy-Schwarz inequality and by substituting the step-size.
We now take expectation, and use the bound on gradient estimation error in Lemma \ref{lem:gradient-estimation-error} to get,

\begin{align*}
   & \mathbb{E}[L(w_{t+1};\cD) - L(w^*;\cD)] \\&\leq  \br{1-\eta_t}\mathbb{E}[L(w_t;
   \cD) - L(w^*;\cD)] +
   \tilde O\br{\br{HD+G}D\br{\frac{1}{(t+1)^{3/2}}+\frac{\sqrt{d}}{\br{t+1}^{2}\rho}}}\\&+
   \frac{ H D^2}{2\br{t+1}^2}
\end{align*}
The above recursion gives us,
\begin{align*}
     \mathbb{E}[L(w_{T};\cD) - L(w^*;\cD)] &\leq \br{L(w_1;\cD)-L(w^*)}\prod_{t=1}^{T-1}\br{1-\eta_t} \\&+ \sum_{i=1}^{T-1}\tilde O\br{\br{HD+G}D\br{\frac{1}{(i+1)^{3/2}}+\frac{\sqrt{d}}{\br{i+1}^{2}\rho}}+
   \frac{ H D^2}{\br{i+1}^2}}\prod_{t=i+1}^{T-1}\br{1-\eta_t} \\
   &\leq \frac{HD^2}{T} \\&+ \sum_{i=1}^{T-1}\tilde O\br{\br{HD+G}D\br{\frac{1}{\br{i+1}^{1/2}}+\frac{\sqrt{d}}{\br{i+1}\rho}}+
   \frac{ H D^2}{\br{i+1}}}\frac{1}{T} \\
      &\leq 
     \tilde O\br{\br{HD+G}D\br{\frac{1}{\sqrt{T}} + \frac{\sqrt{d}}{T\rho}} + \frac{HD^2}{T}}\\
       &\leq 
     \tilde O\br{\br{HD+G}D\br{\frac{1}{\sqrt{T}} + \frac{\sqrt{d}}{T\rho}}}
\end{align*}
where the second inequality follows from smoothness and substituting $\prod_{t=i+1}^{T-1}\br{1-\eta_t} = \frac{i+1}{T-1}$.
Substituting number of iterations $T=n$ completes the accuracy proof.

For the unlearning part, we start by showing that the algorithm falls into the template of bounded sensitivity prefix-sum query release. Recall that the update $u_t = \sum_{i=1}^t\br{\br{i+1}\nabla \ell(w_i;z_i)-i\nabla \ell(w_{i-1};z_i)}$.

The sensitivity is then bounded as,

\begin{align*}
    &\norm{\br{\br{i+1}\nabla \ell(w_i;z)-i\nabla \ell(w_{i-1};z)}  -\br{\br{i+1}\nabla \ell(w_i;z')-i\nabla \ell(w_{i-1};z')}} \\
    & \leq iH\norm{w_i - w_{i-i}} + 2G \\
    & \leq iH \eta_{i-1}\norm{v_{i-1} - w_{i-1}} + 2G \\
    & \leq 2\br{HD+G}
\end{align*}

where the first inequality follows from smoothness and Lipschitzness of the loss. The second inequality follows from the update in Algorithm \ref{alg:vr-fw} and the last inequality follows from the fact that the iterates remain in the set of diameter $D$. Hence the correctness of the unlearning algorithm follows from Theorem \ref{thm:prefix-main}. For runtime, the training time, in terms of gradient computations is $\Theta(n)$. Therefor, using the fact that the relative unlearning complexity, from Theorem \ref{thm:prefix-main}, is $\tilde O(\rho)$, we have $\tilde O(\rho n)$ bound on expected unlearning runtime.
\end{proof}

\begin{lemma}
\label{lem:gradient-estimation-error}
The gradient estimation error $\mathbb{E}\norm{\frac{r_t}{t+1} - \nabla L(w_t;\cD)}^2 \leq \tilde O\br{\br{HD+G}^2\br{\frac{1}{t+1}+\frac{d}{\br{t+1}^2\rho^2}}}$
\end{lemma}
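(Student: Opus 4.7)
The plan is to decompose the gradient estimation error into two contributions: (i) the intrinsic variance of the Hybrid-SARAH estimator $\frac{u_t}{t+1}$ that would arise even without any injected noise, and (ii) the TV-stability noise introduced by the binary-tree mechanism when reading off $r_t$. By Young's/triangle inequality,
\begin{align*}
\mathbb{E}\Bigl\|\tfrac{r_t}{t+1}-\nabla L(w_t;\cD)\Bigr\|^2 \;\le\; 2\,\mathbb{E}\Bigl\|\tfrac{u_t}{t+1}-\nabla L(w_t;\cD)\Bigr\|^2 \;+\; 2\,\tfrac{1}{(t+1)^2}\,\mathbb{E}\|r_t-u_t\|^2,
\end{align*}
and I would bound the two pieces separately.

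For the SARAH variance term, I would use the recursive form $u_t = (1-\gamma_t)(u_{t-1}+\nabla\ell(w_t;z_t)-\nabla\ell(w_{t-1};z_t))+\gamma_t\nabla\ell(w_t;z_t)$ with $\gamma_t = 1/(t+1)$. Let $e_t := \frac{u_t}{t+1}-\nabla L(w_t;\cD)$. A standard Hybrid-SARAH calculation (as in Tran-Dinh et al., used in Zhang et al.\ for VR-Frank-Wolfe) gives a recursion of the shape
\begin{align*}
\mathbb{E}\|e_t\|^2 \;\le\; \Bigl(1-\tfrac{1}{t+1}\Bigr)^2\mathbb{E}\|e_{t-1}\|^2 \;+\; \tfrac{c_1 H^2}{(t+1)^2}\mathbb{E}\|w_t-w_{t-1}\|^2 \;+\; \tfrac{c_2 G^2}{(t+1)^2},
\end{align*}
after using unbiasedness of the sampled differences and splitting the variance of the mini-batch term into a smoothness-controlled ``drift'' part and a bounded-gradient ``fresh noise'' part. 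Combining with $\|w_t-w_{t-1}\| \le \eta_{t-1}D \le D/t$ from the Frank-Wolfe update, unrolling the recursion yields $\mathbb{E}\|e_t\|^2 = O\bigl((HD+G)^2/(t+1)\bigr)$, which matches the first term of the bound.

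For the noise term, I would use the structure of the binary-tree mechanism: $r_t-u_t$ is the sum of the independent Gaussian perturbations attached to the at most $\lceil\log_2(t+1)\rceil$ dyadic nodes that are added by $\mathsf{GetPrefixSum}(t;\cT)$, each with variance $\sigma^2$ per coordinate. Hence $\mathbb{E}\|r_t-u_t\|^2 \le d\sigma^2\log_2(t+1)$. By Lemma~\ref{lem:prefix-tv-stability} (and since the prefix-sum query here has $B=2(HD+G)$), the choice $\sigma^2 = \tilde O((HD+G)^2/\rho^2)$ yields $\mathbb{E}\|r_t-u_t\|^2 = \tilde O\!\bigl(d(HD+G)^2/\rho^2\bigr)$. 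Dividing by $(t+1)^2$ gives exactly the second term in the stated bound, and adding the two pieces proves the lemma.

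The main obstacle I anticipate is getting the SARAH-style recursion cleanly, in particular handling the cross term between the martingale increment of the new sampled gradient difference and the previous error $e_{t-1}$: one must carefully condition on the filtration of the first $t-1$ steps to kill the cross term, and then use $H$-smoothness on $\|\nabla\ell(w_t;z_t)-\nabla\ell(w_{t-1};z_t)\|\le H\|w_t-w_{t-1}\|$ and $G$-Lipschitzness to bound the remaining variance. Once this recursion is in place, the rest is a routine unrolling plus the binary-tree noise accounting.
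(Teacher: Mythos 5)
Your plan follows the same decomposition as the paper's proof: split $\frac{r_t}{t+1}-\nabla L(w_t;\cD)$ into the noiseless Hybrid-SARAH error and the binary-tree noise, bound the former by $O((HD+G)^2/(t+1))$ and the latter by accounting for the $O(\log n)$ Gaussians of variance $d\sigma^2$ that appear in $\mathsf{GetPrefixSum}(t;\cT)$. The only cosmetic differences are that the paper exploits independence and zero mean of the injected noise to get an exact orthogonal decomposition (no factor of $2$ from Young's) and invokes Lemma~2 of \citet{zhang2020one} (with $\alpha=1$) directly for the SARAH variance term where you propose to re-derive the recursion; both are fine for a $\tilde O$-level bound.
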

\begin{proof}
Note that  $d_t := \frac{r_t}{t+1}$ comprises of the original gradient estimate from
\ifarxiv
\cite{zhang2020one},
\else
\citet{zhang2020one},
\fi
say $\tilde d_t$ and the noise added by the binary tree mechanism, say $\xi_t$. Hence,
\begin{align*}
    \mathbb{E}\norm{d_t - \nabla L(w_t;\D)}^2 &= \mathbb{E}\norm{\tilde d_t - \nabla L(w_t;\D)}^2 + \mathbb{E}\norm{\xi_t}^2 \\
    & \leq \tilde O\br{\frac{\br{HD+G}^2}{t+1}} + \sum_{i=1}^{\log{n}}\frac{d\sigma^2}{\br{t+1}^2\rho^2}\\
   &= \tilde O\br{\br{HD+G}^2\br{\frac{1}{t+1}+\frac{d}{\br{t+1}^2\rho^2}}}
\end{align*}
where the first inequality follows from Lemma 2 in
\ifarxiv
\cite{zhang2020one}
\else
\citet{zhang2020one}
\fi
with $\alpha =1$, and the fact that in the binary tree mechanism we add noise of variance $\sigma$ at most $\log{n}$ times; the factor $1/(t+1)^2$ comes because the gradient estimate is $r_t/(t+1)$ and $r_t$ is the binary tree response. The final equality follows by plugging in the value of $\sigma$.
\end{proof}

\subsection{Dual Averaging}
\begin{algorithm}[H]
\caption{Dual averaging$(t_0;\cT)$}
\label{alg:dual-averaging}
\begin{algorithmic}[1]
\REQUIRE Dataset $S$, 
loss function $(w,z)\mapsto \ell(w,z)$, steps $T$, $\bc{\eta_t}_t$, 
\INLINEIF{$t_0=1$}{Permute dataset, initialize $\cT$, set $w_{t_0} = 0$}
\FOR{$t=1$ to $T-1$}
\STATE $u_t = \sum_{i=1}^t \nabla \ell(w_i;z_i)$
\STATE $\mathsf{Append}(u_t,\sigma;\cT)$
\STATE $r_t = \mathsf{GetPrefixSum}(t;\cT)$
\STATE $w_{t+1} = \Pi_{\cW}\br{w_0 - \eta_tp_t}$
\STATE $\mathsf{Set}(\mathsf{leaf}(t), \br{u_t,r_t,w_t,z_t};\cT)$
\ENDFOR
\ENSURE{$\hat w = w_T$}
\end{algorithmic}
\end{algorithm}

\begin{proof}[Proof of Theorem \ref{thm:dual-averaging}]
The accuracy guarantee directly follows from Theorem 5.1 in
\ifarxiv
\cite{kairouz2021practical},
\else
\citet{kairouz2021practical},
\fi
replacing $\epsilon/\text{log}^2(1/\delta)^2$ therein by $\rho$. To elaborate, we set $\sigma = \tilde O\br{\frac{G^2}{\rho^2}}$ as opposed to $\tilde O\br{\frac{G^2\text{log}^4(1/\delta)}{\epsilon^2}}$, hence substituting it in the accuracy proof of Theorem 5.1 in
\ifarxiv
\cite{kairouz2021practical}
\else
\citet{kairouz2021practical}
\fi
gives the claimed guarantee.

For the unlearning part, we start by showing that the algorithm falls into the template of bounded sensitivity prefix query release.

Recall that the update $u_t = \sum_{i=1}^t\nabla \ell(w_t;z_i)$.
The sensitivity is simply bounded by Lipschitznes as,

\begin{align*}
    \norm{\nabla \ell(w_t;z) - \nabla \ell(w_t;z')}\leq 2G
\end{align*}

Hence the correctness of the unlearning algorithm follows from Theorem \ref{thm:prefix-main}. For runtime, the training time, in terms of gradient computations is $\Theta(n)$. Therefor, using the fact that the relative unlearning complexity, from Theorem \ref{thm:prefix-main}, is $\tilde O(\rho)$, we have $\tilde O(\rho n)$ bound on expected unlearning runtime.
\end{proof}

\subsection{Convex GLMs with the JL method}

\begin{proof}[Proof of Theorem \ref{thm:jl-smooth}]
 We start with the accuracy guarantee.
Let $\alpha\leq 1$ be a parameter to be set later.
From the JL property, with $k = O\br{\log{n/\beta}/\alpha^2}$, with probability at least $1-\beta$, the norm of all data-points in $S$, $\norm{\Phi x_i} \leq \br{1+\alpha}\norm{\x_i} \leq 2\norm{\cX}$.
Hence, conditioned on the above event, the GLM loss function function is $\tilde G =2G\norm{\cX}$-Lipschitz and $\tilde H = 4H\norm{\cX}^2$-smooth.
Let $\Phi \cD $ denote the push-forward measure of $\cD$ under the map $(x,y)\mapsto (\Phi x,y)$.
With probability at least $1-\beta$, the excess risk is,

\begin{align*}
\mathbb{E}[L(\hat w;\cD) - L(w^*;\cD)] &= \mathbb{E}[L(\Phi^\top\tilde w;\cD) - L(\Phi w^*;\Phi \cD)]+ \mathbb{E}[L(\Phi w^*;\Phi \cD)-   L(w^*;\cD)] \\
&= \mathbb{E}[L(\tilde w;\Phi\cD) - L(\Phi w^*;\Phi \cD)]+ \mathbb{E}[\phi_y(\ip{\Phi w^*}{\Phi x})-   \phi_y\br{\ip{w^*}{x}}] \\
& \leq  \tilde O\br{\br{\tilde G+\tilde H\norm{w^*}}\norm{w^*}\br{\frac{1}{\sqrt{n}}+\frac{\sqrt{k}}{n\rho}}} + \frac{ H}{2}\mathbb{E}\abs{\ip{\Phi x}{\Phi w^*} - \ip{x}{w^*}}^2\\
&\leq  \tilde O\br{\br{\tilde G+\tilde H\norm{w^*}}\norm{w^*}\br{\frac{1}{\sqrt{n}}+\frac{\sqrt{k}}{n\rho}} + \frac{ \tilde H \norm{w^*}^2}{k}} \\
& = \tilde O\br{\frac{\br{\tilde G+\tilde H\norm{w^*}}\norm{w^*}}{\sqrt{n}} +
    \frac{\tilde H^{1/3}\tilde G^{2/3}\norm{w^*}^{4/3} + \tilde H\norm{w^*}^2}{(n\rho)^{2/3}}}
\end{align*}

where in the first inequality, we use the accuracy guarantee of VR-Frank Wolfe (Theorem \ref{thm:fw}) and smoothness of $\phi_y$ together with the fact that $w^*$ is globally optimal.
The second inequality follows from JL property and the last inequality follows by the setting of $k$. 

For the in-expectation (over the JL matrix) bound, note that in the worst-case, $L(\hat w;\cD) - L(w^*;\cD) \leq G\norm{\hat w-w^*}$. From boundedness of the range of (typical) JL maps, $\norm{\hat w-w^*} = \text{poly}(n,d)$ w.p. 1. Hence, taking the failure probability $\beta$ to be small enough suffices to be give an expectation bound which is same as above upto polylogarithmic factors.

We now proceed to the unlearning guarantee. 
We first remark that the correctness of the unlearning algorithm (see Lemma \ref{lem:reflection-main}) holds as long as the learning algorithm uses prefix-sum queries, even with \textit{unbounded} sensitivity. Hence, the correctness follows. We now proceed to bound the unlearning runtime.
We first bound the TV stability parameter of the learning algorithm using Lemma \ref{lem:jl-tv-stability}. The setting of noise variance $\sigma$ in Algorithm \ref{alg:jlmethod} together with the stability guarantee of Theorem \ref{thm:fw} ensures that $\gamma(\tilde H,\tilde G)\leq \frac{\tau}{2}$. Hence the JL method satisfies $\rho$-TV stability.
Now, Lemma \ref{lem:prob-restrining} gives us that the probability of retraining is at most $\tilde O(\rho)$.
Since the training time, in terms of gradient computations is $\Theta(n)$, we have $\tilde O(\rho n)$ bound on expected unlearning runtime.
\end{proof}

\begin{proof}[Proof of Theorem \ref{thm:jl-lipschitz}]
 We start with the accuracy guarantee; let $\alpha\leq 1$ be a parameter to be set later.
From the JL property, with $k = O\br{\log{n/\beta}/\alpha^2}$, with probability at least $1-\beta$, the norm of all data-points in $S$, $\norm{\Phi x_i} \leq \br{1+\alpha}\norm{\x_i} \leq 2\norm{\cX}$.
Hence, conditioned on the above event, the GLM loss function function is $\tilde G =2G\norm{\cX}$-Lipschitz.
Let $\Phi \cD $ denote the push-forward measure of $\cD$ under the map $(x,y)\mapsto (\Phi x,y)$.
With probability at least $1-\beta$, the excess risk is,

\begin{align*}
\mathbb{E}[L(\hat w;\cD) - L(w^*;\cD)] &= \mathbb{E}[L(\Phi^\top\tilde w;\cD) - L(\Phi w^*;\Phi \cD)]+ \mathbb{E}[L(\Phi w^*;\Phi \cD)-   L(w^*;\cD)] \\
&= \mathbb{E}[L(\tilde w;\Phi\cD) - L(\Phi w^*;\Phi \cD)]+ \mathbb{E}[\phi_y(\ip{\Phi w^*}{\Phi x})-   \phi_y\br{\ip{w^*}{x}}] \\
& \leq  \tilde O\br{\tilde G\norm{w^*}\br{\frac{1}{\sqrt{n}}+\sqrt{\frac{\sqrt{k}}{n\rho}}}} + G\mathbb{E}\abs{\ip{\Phi x}{\Phi w^*} - \ip{x}{w^*}}\\
&\leq  \tilde O\br{\tilde G\norm{w^*}\br{\frac{1}{\sqrt{n}}+\sqrt{\frac{\sqrt{k}}{n\rho}}} + \frac{ \tilde G \norm{w^*}}{\sqrt{k}}} \\
&\leq  \tilde O\br{\tilde G\norm{w^*}\br{\frac{1}{\sqrt{n}}+\frac{1}{\br{n\rho}^{1/3}}}}
\end{align*}

where in the first inequality, we use the accuracy guarantee of Dual Averaging (Theorem \ref{thm:dual-averaging}) and Lipschitzness of $\phi_y$ together.
The second inequality follows from JL property and the last inequality follows by the setting of $k$. As in Theorem \ref{thm:jl-smooth}, the same bound as above for in-expectation (over the JL matrix) holds
follows by taking the failure probability $\beta$ to be small enough.

The correctness and runtime of the unlearning algorithm follows as in the proof of Theorem \ref{thm:jl-smooth}.
\end{proof}

\begin{lemma}
\label{lem:jl-tv-stability}
Suppose $\cA$ is an algorithm which when run on $\tilde H$-smooth and $\tilde G$-Lipschitz functions is $\gamma(\tilde H,\tilde G)$-TV stable, then the JL method with
with $k=O\br{\log{2n/\tau}}$ and $\cA$ as input, run on $H$-smooth and $G$-Lipschitz GLMs, satisfies $\frac{\tau}{2}+\gamma\br{2G\norm{\cX},4H\norm{\cX}^2}$-TV stability.
\end{lemma}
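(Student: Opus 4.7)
The plan is to combine the JL property with the hypothesized TV-stability of $\cA$ on bounded-Lipschitz and bounded-smooth instances, via a good-event decomposition over the randomness of $\Phi$.

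First, I would set up the good event. Fix neighboring datasets $S, S'$ with $\abs{S\Delta S'}=1$, and let $X_0$ denote the (at most $n+1$) distinct data points appearing in $S \cup S'$. Applying the $(\beta,\gamma)$-JL property with a sufficiently small absolute constant $\beta$ (say $\beta = 3$) to the self-inner product $\ip{x}{x}$ for each $x \in X_0$ and taking a union bound at per-point failure probability $\frac{\tau}{2(n+1)}$, standard efficient JL constructions achieve the requirement with embedding dimension $k = O\br{\log(2n/\tau)}$. On the resulting event $\cE$ (a function of $\Phi$ only), every $x \in X_0$ satisfies $\norm{\Phi x}\leq 2\norm{\cX}$, and $\mathbb{P}(\cE)\geq 1-\tfrac{\tau}{2}$.

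Next, I would translate $\cE$ into regularity of the embedded loss. For a GLM $\ell(w;(x,y))=\phi_y(\ip{w}{x})$ evaluated on the transformed sample $(\Phi x, y)$, the gradient in $w$ equals $\phi_y'(\ip{w}{\Phi x})\,\Phi x$, whose norm on $\cE$ is at most $G\cdot 2\norm{\cX} = 2G\norm{\cX}$; the analogous bound through $\phi_y''$ yields $4H\norm{\cX}^2$-smoothness. Consequently, conditioned on any realization of $\Phi \in \cE$, the hypothesis on $\cA$ gives that its outputs on $\Phi S$ and $\Phi S'$ are within total variation distance $\gamma\br{2G\norm{\cX},\,4H\norm{\cX}^2}$.

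Finally, I would push this conditional bound through to the marginal law of $\hat w = \Phi^\top \tilde w$. Since the same $\Phi$ is used in both runs and the map $(\Phi,\tilde w)\mapsto \Phi^\top \tilde w$ is a common-randomness post-processing, the data-processing and convexity properties of TV give
\begin{align*}
\mathsf{TV}(\hat w_S,\hat w_{S'}) &\leq \mathsf{TV}\br{(\Phi,\tilde w_S),\,(\Phi,\tilde w_{S'})} = \mathbb{E}_{\Phi}\,\mathsf{TV}\br{\tilde w_S\mid \Phi,\ \tilde w_{S'}\mid \Phi} \\
&\leq \mathbb{P}(\cE)\cdot \gamma\br{2G\norm{\cX},\,4H\norm{\cX}^2} + \mathbb{P}(\cE^c)\cdot 1 \\
&\leq \gamma\br{2G\norm{\cX},\,4H\norm{\cX}^2} + \tfrac{\tau}{2},
\end{align*}
which is the claimed bound. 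The argument is a clean reduction with no genuinely hard step; the only points demanding mild care are (i) taking the union bound over the $\leq n+1$ distinct points in $S\cup S'$ so that $k$ incurs only a $\log(n/\tau)$ factor, and (ii) recognizing that coupling through the shared $\Phi$ is what converts the conditional TV bound into a statement about the marginal of $\hat w$.
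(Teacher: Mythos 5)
Your proof is correct and follows essentially the same route as the paper's: condition on the good JL event (all relevant data points have $\norm{\Phi x}\le 2\norm{\cX}$ at ambient failure probability $\tau/2$), bound the conditional TV by $\gamma(2G\norm{\cX},4H\norm{\cX}^2)$ via the hypothesized stability on the embedded instance, and add $\tau/2$ for the bad event. The only cosmetic difference is that you organize the final step via data-processing and conditional convexity of TV through the shared randomness $(\Phi,\tilde w)\mapsto\Phi^\top\tilde w$, whereas the paper unpacks the same accounting with an explicit good/bad-event decomposition of $\sup_E(P(E)-Q(E))$; your observation that the union bound should be over $S\cup S'$ rather than just $S$ is a small and correct refinement.
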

\begin{proof}
Given a dataset $S$ let $G_S$ be the uniform bound on Lipschitzness parameter of the class of loss functions $\bc{w\mapsto \ell(w;z)}_{z\in S}$. We define $H_S$ similarly.
Let $\alpha\leq 1$ be a parameter to be set later.
From the JL property, with $k = O\br{\log{n/\beta}}$, with probability at least $1-\beta$, the norm of all data-points in $S$, $\norm{\Phi x_i} \leq  2\norm{\cX}$ - we denote this event as $E_{\text{JL}}$.
Since the loss function is a GLM, we have that conditioned on $E_{\text{JL}}$, the Lipschitzness and smoothness parameters $G_S$ and $H_S$ are bounded by $2G\norm{\cX}$ and $2H\norm{\cX}^2$ respectively.
We therefore get a stability parameter $\tilde \gamma := \gamma\br{2G\norm{\cX},4H\norm{\cX}^2}$.

We set $\beta= \rho/2$.
We now incorporate the failure probability in the failure guarantee.  Let $P_{\Phi}$ and $Q_\Phi$ denote the probability distributions of the output on datasets $S$ and $S'$. By definition of TV distance, 

\begin{align*}
    \text{TV}(P_\Phi, Q_\Phi)& =
    \sup_{E}\mathbb{P}_{w \sim P}\br{w\in E} - \mathbb{P}_{w\sim Q}\br{w\in E}\\
    & = \sup_{E}\Big(\mathbb{P}_{w \sim P}\br{w\in E | E_{\text{JL}}}\mathbb{P}(E_{\text{JL}}) + \mathbb{P}_{w \sim P}\br{w\in E | E_{\text{JL}}'}\mathbb{P}(E_{\text{JL}}')\\&  - \mathbb{P}_{w\sim Q}\br{w\in E|E_{\text{JL}}}\mathbb{P}(E_{\text{JL}}) - \mathbb{P}_{w\sim Q}\br{w\in E|E_{\text{JL}}'}\mathbb{P}(E_{\text{JL}}')\Big) \\
   & \leq  \br{\sup_{E}\mathbb{P}_{w \sim P}\br{w\in E | E_{\text{JL}}}  - 
   \mathbb{P}_{w\sim Q}\br{w\in E|E_{\text{JL}}}}\mathbb{P}(E_{\text{JL}})
 \\ &+
   \br{\sup_{E}\mathbb{P}_{w \sim P}\br{w\in E | E_{\text{JL}}'}- \mathbb{P}_{w\sim Q}\br{w\in E|E_{\text{JL}}'}}\mathbb{P}(E_{\text{JL}}') \\
    &\leq 
    \br{\sup_{E}\mathbb{P}_{w \sim P}\br{w\in E | E_\text{JL}} - \mathbb{P}_{w\sim Q}\br{w\in E|E_\text{JL}}} +
    \rho/2\\
    & \leq \tilde \gamma + \rho/2
\end{align*}
which completes the proof.
\end{proof}
\section{Missing details from \texorpdfstring{\cref{sec:streaming}}{}}
\label{app:streaming}
In this section, we present additional details and proofs of results in \cref{sec:streaming}.

\subsection{Weak Unlearning}

\begin{proof}[Proof of Theorem \ref{thm:weak-streaming}]
The first claim, weak unlearning guarantee
of the unlearning algorithm, follows 
mainly from Lemma \ref{lem:reflection-main}. Specifically, it shows that conditioned on the permutation of the dataset (in this case, since the dataset is not permuted, the permutation is simply identity), the distribution over the responses $\br{r_b}_b$ in the tree after unlearning, is transported to the distribution of the output under $S'$. Since the model output is a deterministic function of the responses, (weak unlearning) correctness follows for one request. For the streaming setting, we simply apply the above
inductively over the requests.

The second claim follows since, at every time point, the executed algorithm is indistinguishable from the base algorithm executed over the current dataset. Moreover, by assumption, the base algorithm, is \textit{anytime}, i.e. no parameter is set which depends on the size of the dataset. Hence, the accuracy guarantee follows. For the last claim about the number of retraining, firstly, as motivated, by the assumption that the algorithm is incremental, the insertions are handled in $O(1)$ time. For the unlearning requests, note that from $\rho$-TV stability at every point, using Lemma \ref{lem:prob-restrining}, we have a $\tilde O(\rho)$ probability of retraining. We now apply Proposition 8 from
\ifarxiv
\cite{ullah2021machine}
\else
\citet{ullah2021machine}
\fi
which converts this to a bound on the expected number of times a retraining is triggered. For $V$ unlearning requests, this gives us a $\tilde O(\rho V)$ bound on the number of retraining triggers.
\end{proof}

\subsection{Exact Unlearning}
Another way to extend the results for one unlearning request to dynamic streams is to modify the definition of unlearning (Definition \ref{defn:exact-unlearning}) to also hold for insertions, as is done in
\ifarxiv
\cite{ullah2021machine}.
\else
\citet{ullah2021machine}.
\fi
This allows us to apply the same tree based unlearning technique when handing insertions.
Specifically, upon inserting a new point, we randomly choose a leaf and replace the leaf with the inserted point, and then insert the chosen leaf as the last leaf in the tree.
We have the following guarantee for this method.

\begin{theorem}
In the dynamic streaming setting with $R$ requests, using  anytime learning and unlearning algorithms, Algorithm \ref{alg:learn_partial_query} and \ref{alg:unlearn_partial_query}, the following are true.

\begin{enumerate}
    \item Exact unlearning at every time point in the stream.
    \item The accuracy of the output $\hat w_i$ at time point $i$, with corresponding dataset $S_i$, is
    $$\mathbb{E}[L(\hat w_i;\cD)] - \min_{w}L(w;\cD) = \alpha(\rho,\abs{S_i};\cP)$$
    \item The total number of times, a retraining is triggered, for $R$ requests is at most $O(\rho R)$
\end{enumerate}
\end{theorem}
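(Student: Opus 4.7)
The plan is to reduce the dynamic streaming guarantee to the single-request analysis by maintaining, as an invariant across the entire stream, the distributional equivalence of the tree $\cT$ to the one produced by running Algorithm \ref{alg:learn_partial_query} (with a fresh random permutation) on the current dataset $S_i$. With this invariant in hand, the accuracy claim is immediate (the state is distributed exactly as that of the anytime base learner on $S_i$, whose guarantee is $\alpha(\rho,|S_i|;\cP)$), the exact unlearning claim at every time point follows from the definition applied request-by-request, and the retraining count follows by amortization.

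Concretely, I would handle the two kinds of requests symmetrically. For a deletion of $z_j$, I directly invoke $\mathsf{TreeUnlearn}$ (Algorithm \ref{alg:unlearn_partial_query}) using the anytime binary tree mechanism described in Section \ref{sec:streaming}. For an insertion of $z^{\mathrm{new}}$, I mimic the one-step ``swap-with-last-leaf'' idea in reverse: pick a leaf $v_s$ uniformly at random, take its stored datum $z$, overwrite $v_s$ with the incoming $z^{\mathrm{new}}$ (performing the same node-by-node reflection/rejection coupling up the path from $v_s$ to the root that $\mathsf{TreeUnlearn}$ uses, but with the roles of $z$ and $z^{\mathrm{new}}$ reversed), and then append the evicted $z$ as a new last leaf via $\mathsf{Append}$ using the anytime tree. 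Because the dataset was in random-permutation order before the insertion and the position $s$ is chosen uniformly, the post-insertion order is again a uniformly random permutation of $S_i\cup\{z^{\mathrm{new}}\}$; this symmetry is precisely what lets Lemma \ref{lem:permutation-coupling} and Lemma \ref{lem:reflection-main} apply verbatim.

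The invariant is then proved by induction on requests. The base case is the first learning pass, which is the static setting of Theorem \ref{thm:prefix-main}(2). For the inductive step, conditioned on the invariant holding at time $i{-}1$, a deletion invokes Lemma \ref{lem:prefix-exact-unlearning} to give the desired distribution on $S_i = S_{i-1}\setminus\{z_j\}$, while an insertion invokes a mirror-image version of the same lemma (the coupling argument is unchanged; only the direction of the update $u'=u-g+g'$ reverses). The anytime noise-budgeting scheme described in Section \ref{sec:streaming} ensures that the TV-stability parameter remains $\rho$ regardless of $|S_i|$, so the per-step TV distance bound used inside Lemma \ref{lem:reflection-main} (and ultimately Lemma \ref{lem:prob-restrining}) still reads $\tilde O(\rho)$ after each request.

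For the retraining count, Lemma \ref{lem:prob-restrining} gives $\tilde O(\rho)$ probability of triggering retraining per request, now applied on both sides (deletion and insertion). Summing the indicator variables across the $R$ requests and applying Proposition~8 of \citet{ullah2021machine}, which upgrades a per-request retraining probability into a bound on the expected total number of triggers under an oblivious request sequence, yields the $\tilde O(\rho R)$ bound. The main obstacle I anticipate is verifying the mirrored reflection-coupling step for insertions---in particular, checking that the random choice of leaf $v_s$ together with the node-by-node maximal coupling along the path produces a tree whose distribution matches what Algorithm \ref{alg:learn_partial_query} would have produced on $S_i\cup\{z^{\mathrm{new}}\}$ under a fresh random permutation; this is the exact analogue of Lemma \ref{lem:reflection-main} and should go through with essentially the same induction on post-order nodes, but the bookkeeping must be redone carefully to confirm that the ``new'' leaf is distributionally exchangeable with every other leaf.
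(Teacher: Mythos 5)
Your proposal follows essentially the same route as the paper: maintain the distributional invariant by induction over requests, invoke the exact-unlearning correctness of Theorem~\ref{thm:prefix-main} (and its constituent lemmas) on each step, use the anytime property for accuracy, and combine Lemma~\ref{lem:prob-restrining} with Proposition~8 of \citet{ullah2021machine} for the retraining count. You spell out the insertion handling (uniformly random leaf replacement plus mirrored reflection coupling, then appending the evicted point) in more detail than the paper, which treats it cursorily with ``handling the insertions in the same way as deletions hardly changes anything,'' but the underlying argument is the same.
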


\begin{proof}
The arguments are similar to that of the proof of Theorem \ref{thm:weak-streaming}.
The first part follows by applying the correctness of the unlearning algorithm, Theorem \ref{thm:prefix-main}, inductively over the stream. We remark that the handling the insertions in the same way as deletions hardly changes anything in the proofs. The second claim follows from the anytime nature of the algorithm and by assumption on the accuracy guarantee. Finally, using the probability of retraining in Lemma \ref{lem:prob-restrining} and Proposition 8 in
\ifarxiv
\cite{ullah2021machine}
\else
\citet{ullah2021machine}
\fi
gives us the stated number of retraining triggers.
\end{proof}

\end{document}

% This document was modified from the file originally made available by
% Pat Langley and Andrea Danyluk for ICML-2K. This version was created
% by Iain Murray in 2018, and modified by Alexandre Bouchard in
% 2019 and 2021 and by Csaba Szepesvari, Gang Niu and Sivan Sabato in 2022.
% Modified again in 2023 by Sivan Sabato and Jonathan Scarlett.
% Previous contributors include Dan Roy, Lise Getoor and Tobias
% Scheffer, which was slightly modified from the 2010 version by
% Thorsten Joachims & Johannes Fuernkranz, slightly modified from the
% 2009 version by Kiri Wagstaff and Sam Roweis's 2008 version, which is
% slightly modified from Prasad Tadepalli's 2007 version which is a
% lightly changed version of the previous year's version by Andrew
% Moore, which was in turn edited from those of Kristian Kersting and
% Codrina Lauth. Alex Smola contributed to the algorithmic style files.